\def\1{\bm{1}}
\def\##1\#{\begin{align}#1\end{align}}
\def\$#1\${\begin{align*}#1\end{align*}}
\let\tilde\widetilde
\newcommand{\cA}{\mathcal{A}}
\newcommand{\cC}{\mathcal{C}}
\newcommand{\cD}{\mathcal{D}}
\newcommand{\cF}{\mathcal{F}}
\newcommand{\cO}{\mathcal{O}}
\newcommand{\cP}{\mathcal{P}}
\newcommand{\cX}{\mathcal{X}}
\newcommand{\cZ}{\mathcal{Z}}
\newcommand{\E}{\mathbb{E}}
\newcommand{\PP}{\mathbb{P}}
\newcommand{\RR}{\mathbb{R}}
\definecolor{red1}{HTML}{f47983}
\definecolor{blue1}{HTML}{3eede7}
\definecolor{yellow1}{HTML}{f5dd6f}
\newcommand{\argmin}{\mathop{\mathrm{argmin}}}
\newcommand{\argmax}{\mathop{\mathrm{argmax}}}
\newcommand{\dotprod}[1]{\left\langle #1\right\rangle}
\newcommand{\norm}[1]{\|#1\|}
\newtheorem{theorem}{Theorem}
\newtheorem{assumption}{Assumption}
\newtheorem{example}{Example}
\newtheorem{lemma}{Lemma}
\newtheorem{definition}{Definition}
\newcommand{\R}{\mathbb{R}}
\newcommand{\off}{\mathrm{off}}
\newcommand{\wt}{\widetilde}
\newcommand{\KL}{D_{\mathrm{KL}}}
\newcommand{\hpi}{\hat \pi}
\newcommand{\hcP}{\widehat{\cP}}
\newcommand{\hP}{\hat{P}}
\newcommand{\tGamma}{\tilde{\Gamma}}
\newcommand{\tv}{\mathrm{TV}}
\newcommand{\bra}[1]{\left[#1\right]}
\newcommand{\Dcal}{\mathcal{D}}
\newcommand{\tpi}{\tilde{\pi}}
\title{Online Iterative Reinforcement Learning from Human Feedback with General Preference Model}
\author{Chenlu Ye\thanks{Equal contributions with random author order. Correspondence to Wei Xiong.}{ }\thanks{University of Illinois Urbana-Champaign. Email: chenluy3@illinois.edu} \qquad
Wei Xiong$^*$ \thanks{University of Illinois Urbana-Champaign. Email: wx13@illinois.edu} \qquad
Yuheng Zhang$^*$\thanks{University of Illinois Urbana-Champaign. Email: yuhengz2@illinois.edu} \qquad
Hanze Dong$^*$\thanks{Salesforce AI Research. Email: hanze.dong@salesforce.com}
\AND
Nan Jiang\thanks{University of Illinois Urbana-Champaign. Email: nanjiang@illinois.edu} \qquad
Tong Zhang\thanks{University of Illinois Urbana-Champaign. Email: tongzhang@tongzhang-ml.org}
}
\begin{document}

\maketitle

\begin{abstract}
We investigate Reinforcement Learning from Human Feedback (RLHF) in the context of a general preference oracle. In particular, we do not assume the existence of a reward function and an oracle preference signal drawn from the Bradley-Terry model as most of the prior works do.  We consider a standard mathematical formulation, the reverse-KL regularized minimax game between two LLMs for RLHF under general preference oracle. The learning objective of this formulation is to find a policy so that it is consistently preferred by the KL-regularized preference oracle over any competing LLMs. We show that this framework is strictly more general than the reward-based one, and propose sample-efficient algorithms for both the offline learning from a pre-collected preference dataset and online learning where we can query the preference oracle along the way of training. Empirical studies verify the effectiveness of the proposed framework.
\end{abstract}

\section{Introduction} \label{sec:intro}
\textit{Reinforcement Learning from Human Feedback} (RLHF) has emerged as a pivotal technique in adapting machine learning to leverage relative feedback, especially in aligning Large Language Models (LLMs) with human values and preferences \citep{christiano2017deep, ziegler2019fine}. Notable examples include ChatGPT \citep{OpenAI2023GPT4TR}, Claude \citep{Anthropic@claude}, and Bard \citep{google@bard}. The primary goal of RLHF in the context of LLMs is to adjust the responses generated by LLMs so that they are more favorably received by human evaluators. 

Inspired by the standard LLM alignment workflow \citep{ouyang2022training, bai2022constitutional, touvron2023llama}, we characterize an LLM by a policy $\pi$, which takes a prompt $x \in \cX$ and produces a response $a \in \cA$ from the distribution $\pi(\cdot|x)$. In a typical LLM training pipeline \citep{ouyang2022training, touvron2023llama, OpenAI2023GPT4TR}, the tuning process begins with a pretrained model, which is subsequently fine-tuned using specialized and instructional data to produce an initial model $\pi_0$. The initial model $\pi_0$ is then aligned with a prompt set from some distribution $x \sim d_0$. The key component in RLHF is the \textit{General Preference Oracle}, which is mathematically defined as follows.
\begin{definition}[General Preference Oracle] \label{def:general_oracle} There exists a preference oracle $\PP: \cX \times \cA \times \cA \to [0,1] $, and we can query it to receive the preference signal:
$$
\small
\begin{aligned}
y \sim \mathrm{Ber}\big(\PP(a^1 \succ a^2|x,a^1,a^2) 
\end{aligned}
$$
\vspace{-5pt}
where $y=1$ means $a^1$ is preferred to $a^2$, and $y=0$ means that $a^2$ is preferred. 
\end{definition}

Instead of directly optimizing against the preference oracle $\PP$, the existing prevalent RLHF framework is reward-based \citep{ouyang2022training, touvron2023llama}, which consists of three steps: (1) preference data collection, (2) reward modeling, and (3) policy optimization. Specifically, the preference dataset $\cD$ consists of multiple tuples of the form $(x, a^1, a^2, y)$, whose collection process can be modeled as:
\begin{equation} \label{eqn:data_collect}
\begin{aligned}
    x \sim d_0, a^1 \sim \pi_D^1, a^2 \sim \pi_D^2, \qquad y \sim \mathrm{Ber}\big(\PP(a^1 \succ a^2|x,a^1,a^2)\big),
\end{aligned}
\end{equation}
where $\pi_D^1$ and $\pi_D^2$ are behavior policies and are typically set as $\pi_0$ \citep{touvron2023llama, liu2023statistical} or some powerful closed-form LLMs \citep{cui2023ultrafeedback}. The second step is reward modeling, which is the origin of the name ``reward-based''. This step can be viewed as a kind of inverse RL \citep{ziebart2008maximum}, which models some difficult-to-specify goals (preferred by the human or AI evaluators) as a scalar reward signal. Specifically, the Bradley-Terry (BT) model \citep{bradley1952rank}, a framework widely adopted in \citet{ouyang2022training, bai2022training, touvron2023llama, rafailov2023direct, xiong2023iterative}, assumes that there exists a ground-truth reward function $P^*$ and the preference model satisfies:
\begin{equation} \label{eqn:bt}
\small
\begin{aligned}
        \PP(a^1 \succ a^2|x,a^1,a^2)=  \frac{\exp(R^*(x,a^1))}{\exp(R^*(x,a^1)) + \exp(R^*(x,a^2))}= \sigma\big(R^*(x,a^1)-R^*(x,a^2)\big),
 \end{aligned}
\end{equation}
where $\sigma(z) = 1/(1+\exp(-z))$ is the sigmoid function. Then, the reward model is taken as the Maximum Likelihood Estimation (MLE) of the BT model on the preference dataset $\cD$
\citep[e.g.,][]{pacchiano2021dueling, novoseller2020dueling, ouyang2022training, bai2022training, touvron2023llama} and is used in subsequent policy optimization steps to provide a signal for algorithms like Proximal Policy Optimization \citep{schulman2017proximal}. Despite its successes, the existence of a reward function and the BT model are strong assumptions, which may not fully capture the complicated human preferences. In particular, the BT model assumes that human preference is transitive, which means that if we prefer \texttt{A} to \texttt{B} ($\PP(A\succ B|x,A,B) > 0.5$) and we prefer \texttt{B} to \texttt{C}, then it automatically holds that we prefer \texttt{A} to \texttt{C}. This assumption, however, is contradicted by evidence of intransitivity in human decision-making \citep{tversky1969intransitivity, may1954intransitivity}. This limitation is particularly pronounced if we consider the population-level preferences, where the ultimate preference signal is aggregated across diverse human groups \citep{may1954intransitivity}. This may further be evidenced that in the practical RLHF, the accuracy of the learned BT model is around $70\%$ \citep{bai2022training, touvron2023llama, cui2023ultrafeedback}, suggesting the challenges in approximating the complicated human preference by BT model. While there are some recent efforts to bypass reward modeling \citep{rafailov2023direct, zhao2023slic}, they are still fundamentally derived from the reward-based preference model and suffer from the aforementioned issues.
\begin{table}[h]\label{tab:pref_and_bt}
\centering
\caption{Comparison of the test accuracy between the BT-based reward model and the preference model. The reward model and preference model are trained with the same base model and preference dataset, where the details are deferred to Section~\ref{sec:exp}.  We evaluate the model on Reward-Bench \citep{lambert2024rewardbench}. }
\begin{tabular}{c|c|c|c|c|c}
\toprule
\textbf{Base Model} & \textbf{Method} & \textbf{Chat} & \textbf{Chat Hard}& \textbf{Safety} & \textbf{Reasoning}  \\ \midrule
Gemma-2B-it & BT & 95.0 & 40.8 & 81.2 & 74.2  \\ 
Gemma-2B-it & Preference & 96.0 & 40.5 & 82.8 & \underline{80.7}  \\ \midrule
LLaMA3-8B-it & BT & 99.4 & 65.0 & 87.7 & 87.8 \\ 
LLaMA3-8B-it & Preference & 98.9 & 65.2 & 89.5 & \underline{94.8} \\ \bottomrule
\end{tabular}
\vspace{-5pt}
\end{table}
In contrast, the general preference oracle defined in Definition~\ref{def:general_oracle} is strictly more general than the BT model and can capture a more complicated preference pattern from the definition itself. It allows an intransitive preference model and can further capture the preference feedback from AI \citep{bai2022constitutional}, with a notable example of GPT-4 \citep{OpenAI2023GPT4TR}, which is widely used for model evaluations in practice and may more accurately reflect real user experience \citep{touvron2023llama, dong2023raft, rafailov2023direct, xiong2023iterative}. Moreover, from a practical side, the preference model construction tends to be more efficient than the reward function in terms of ranking accuracy. This is evidenced by the fact that the preference model, pairRM with 0.4B parameters \citep{jiang2023llm}, performs comparably to a LLaMA2-13B-based reward model across a diverse set of preference targets \citep{cui2023ultrafeedback}. As a case study, we train a reward model based on the Bradley-Terry (BT) model and a preference model with the same starting checkpoint Gemma-2B-it \citep{team2024gemma} and preference dataset\footnote{We remark that the mixture of the open-source preference dataset and hyper-parameters are mainly tuned for the BT model with  $> 2000$ A100 hours, while the preference model adopts most of them directly. Therefore, we expect that the preference model maybe even better with a more refined hyper-parameter search.}, with results presented in Table~\ref{tab:pref_and_bt} and the training details are deferred to Section~\ref{sec:exp}. As we can see, the preference model achieves much higher test accuracy in the reasoning task while maintaining comparable results in other tasks. Meanwhile, the training set we use is rather limited in the reasoning data (math and coding), so the reasoning task can be viewed as an out-of-distribution task. In this sense, the preference model may also provide a better generalization compared to the reward model. The results also extend to another case study with LLaMA3-8B-instruct, where the preference model shows promising potential in the improvement of reasoning tasks. We refer interested readers to check \citet{zhao2023slic, liu2023statistical} for further examples with similar observations. The advantage in ranking accuracy is not only directly beneficial for the algorithms that depend on ranking information \citep{dong2023raft, gulcehre2023reinforced}, but also improves the performance of algorithms derived from the reward-based framework (i.e., Bradley-Terry model), as evidenced by the results in the study of (iterative) DPO \citep{xiong2023iterative, snorkelai@pair}. 

Given all these considerations, our study focuses on exploring the theoretical properties of RLHF under the general preference oracle (Definition~\ref{def:general_oracle}), with the goal of advancing practical algorithmic designs.  We summarize our contributions as follows:

\begin{itemize}
    \item We make the first attempt to study the theoretical learnability of RLHF under general preference oracle with KL regularization, in the offline setting with a pre-collected preference dataset and the online setting where we can query human feedback along the way of training, which demonstrates the potential of reward-model-free learning under general preference;
    \item We propose sample-efficient algorithms in both the offline setting and online setting and establish the finite-sample theoretical guarantees under standard coverage and exploration conditions; 
    \item We show that the theoretical insights can be used to guide practical algorithmic designs with a reasonable approximation of the computational oracle.
\end{itemize}

\section{Problem Formulation} \label{sec:formulation}
In this section, we formulate the RLHF with general preference learning. Suppose that there exists a preference function $P^*:\cX\times\cA\times\cA\rightarrow\R$ which represents the prefererence of one action $a^1$ over another $a^2$ given a prompt $x$: $P^*(x,a^1,a^2)=\PP(a^1 \succ a^2|x,a^1,a^2)$. In practical applications, we want to make the resulting LLM $\pi$ close to $\pi_0$ \citep{ziegler2019fine, ouyang2022training, bai2022training, rafailov2023direct}. Therefore, we adopt the following KL-regularized objective:
\vspace{-3pt}
\begin{equation}\label{eqn:target} \footnotesize
\begin{aligned}
        J(\pi^1,\pi^2) 
        = \E_{x\sim d_0}\E_{a^1\sim\pi^1,a^2\sim\pi^2}\Big[P^*(x,a^1,a^2) - \eta^{-1} \KL(\pi^1(\cdot|x)\|\pi_0(\cdot|x)) + \eta^{-1} \KL(\pi^2(\cdot|x)\|\pi_0(\cdot|x))  \Big].
        \end{aligned}
\end{equation}
One primary reason to consider the regularized target is that the constructed preference model is only \textit{locally} accurate, i.e., it performs well when there is little distribution shift. For instance, if the preference model is fine-tuned on a preference dataset collected by the initial model $\pi_0$, it improves the in-distribution generalization, but the resulting model often performs poorly out-of-distribution \citep{burns2023weak}. Meanwhile, even if we require human labelers to give feedback along the way, the choices of the labelers may not be representative enough or the labelers can make mistakes due to limited time, attention, or care \citep{openlm2023openllama}. Moreover, the KL divergence in the target ensures that the resulting policy is stochastic instead of deterministic (given a suitable initial checkpoint), thereby more accurately reflecting the dynamics of generative language models.

We choose $P^*$ as the target mostly for historical reasons \citep{dudik2015contextual, wang2023rlhf}. A choice is the relative preference $\log (P^*(x,a^1,a^2)/(1-P^*(x,a^1,a^2)))$, which is equal to $R^*(x,a^1) - R^*(x,a^2)$ when the BT model holds so that \eqref{eqn:target} becomes two decoupled regularized-reward maximization problems in this case and automatically reduces to the setting considered in the previous work \citet{xiong2023iterative}. While we do not handle this target directly, the analysis techniques presented in this paper readily apply to it with slight modifications.  

\noindent\textbf{Nash Equilibrium and Best Response.}
Without loss of generality, we restrict our attention to the policy class $\Pi$ consisting of the policies with the same support as $\pi_0$ and denote the \textit{unique} Nash equilibrium (known as the Minimax Winner \citep{simpson1969defining,kramer1973class,fishburn1984probabilistic} or the von Neumann Winner \citep{dudik2015contextual}) as the solution of the following minimax problem as: 
\begin{equation}
\label{eq:minimax_policy}
\begin{aligned}
    (\pi_*^1, \pi_*^2) = (\pi_*, \pi_*) = \argmax_{\pi^1 \in \Pi}\argmin_{\pi^2 \in \Pi} J(\pi^1,\pi^2),
\end{aligned}
\end{equation}
where the Nash policies of two players coincide as we prove in Lemma~\ref{lem:coin}. In the rest of this paper, we still use the notation $(\pi_*^1, \pi_*^2)$ to distinguish between the max-player and min-player. Accordingly, we refer to the first LLM $\pi^1$ as the \textit{max-player}, while the second LLM $\pi^2$ is the \textit{min-player}. We also define the notion of \textit{best response}. For function $J$ and policy $\pi^1$, the best response to $\pi^1$ is defined as $\argmin_{\pi^2 \in \Pi}J(\pi^1,\pi^2)$ and the value is denoted by $J(\pi^1,\dagger) = \min_{\pi^2\in\Pi}J(\pi^1,\pi^2)$. Similarly, for $\pi^2$, we have $J(\dagger,\pi^2) = \max_{\pi^1\in\Pi}J(\pi^1,\pi^2)$. In particular, since $\pi_*^1$ and $\pi_*^2$ are the Nash equilibrium, they are the best response to each other. 

\noindent\textbf{Function Approximation.} Suppose that we have access to a function class $\cP  \subset (\cX \times \cA \times \cA \to \R)$ (e.g. neural network), which provides us with a set of candidates to approximate the $P^*$, and also the preference functions $P\in\cP$ satisfies $P(x,a^1,a^2)=1-P(x,a^2,a^1)$. We make the following assumptions on the class $\cP$.
\begin{assumption}\label{as:bounded} Assume that $\cP$ is finite and the capacity of the class is large enough so that $P^* \in \cP$. 
\end{assumption}
The finite class assumption is for a clear presentation and the results readily generalize to an infinite class with a bounded covering number by the standard discretization technique. We define a theoretical computation oracle as follows and defer the practical implementations to the experiment section.
\begin{definition}[Nash Equilibrium Oracle]\label{df:Nash Equilibrium Oracle}
For a given preference function $P\in\cP$ and a reference policy $\pi_0$, we can compute the Nash Equilibrium policy
\vspace{-5pt}
\begin{equation}
\footnotesize
\begin{aligned}
    \pi_P = \argmax_{\pi^1 \in \Pi}\min_{\pi^2 \in \Pi} \E_{x\sim d_0}\E_{a^1\sim\pi^1,a^2\sim\pi^2}\Big[P(x,a^1,a^2) - \eta^{-1}\log\frac{\pi^1(a^1|x)}{\pi_0(a^1|x)} + \eta^{-1}\log\frac{\pi^2(a^2|x)}{\pi_0(a^2|x)}\Big].
\end{aligned}
\end{equation}
\end{definition}

\noindent\textbf{Learning Objective.} The goal is to find an $\epsilon$-approximate Nash policy $\hpi^1$ for the max-player: 
\vspace{-3pt}
$$
\small
\begin{aligned}
J(\pi^1_*, \pi^2_*) - J(\hat{\pi}^1, \dagger) = J(\pi^1_*, \pi^2_*) - \min_{\pi'} J(\hat{\pi}^1, \pi')  \leq \epsilon, 
\end{aligned}
$$
which means that the max-player is consistently preferred by the KL-regularized preference in the face of any competing policy $\pi'$ up to a relaxation of $\epsilon$. To stress the non-symmetric structures of the two players, we refer to the max-player as the \textit{main agent}, which aims to find her $\epsilon$-approximate Nash policy, and refer to the min-player as the \textit{enhancer}, which is designed to facilitate the main agent's learning. In particular, when $\eta$ is large enough so that the KL is roughly omitted, then, we can further obtain that 
\vspace{-3pt}
$$ 
\begin{aligned}
  \min_{\pi^2 \in \Pi} \E_{x\sim d_0}\E_{a^1\sim\hpi^1,a^2\sim\pi^2} P^*(x, a^1, a^2) \ge  0.5 - \epsilon.
\end{aligned}
$$
\vspace{4pt} 
In this case, the obtained policy $\hpi_1$ is consistently preferred by the preference oracle $P^*$ against any competing policies. We mention in passing that the KL penalty coefficient $\eta > 0$ exhibits a trade-off between being preferred by the oracle $P^*$ and staying close to the initial model $\pi_0$, and reflects the degree of our belief in the oracle $P^*$. In practice, $\eta$ is typically treated as a hyper-parameter and is adjusted by parameter search \citep{hf2023pref}. 

Compared to the previous literature formulating the preference learning as finding a Nash equilibrium, although we focus on optimizing the policy for the max-player,
we can also have a duality gap guarantee because of the symmetry of the objective function: $J(\pi^1,\pi^2)=1-J(\pi^2,\pi^1)$. To see this, we decompose the duality gap into the suboptimality for the max-player $\hpi^1$ and the min-player $\hpi^2$:
\begin{align*}
\small
J(\dagger,\hpi^2) - J(\hpi^1,\dagger) = & J(\dagger,\hpi^2) - J(\pi_*^1,\pi_*^2) + J(\pi_*^1,\pi_*^2) - J(\hpi^1,\dagger)\\
= & J(\pi_*^1,\pi_*^2) - J(\hpi^2,\dagger) + J(\pi_*^1,\pi_*^2) - J(\hpi^2,\dagger).
\end{align*}
If we obtain such an $\epsilon$-suboptimal max player $\hpi^1$, by taking the min-player $\hpi^2=\hpi^1$, the duality gap $J(\dagger,\hpi^2) - J(\hpi^1,\dagger)$ is naturally bounded by $2\epsilon$.

\noindent\textbf{Notations.}  We use the short-hand notation $\pi=(\pi^1,\pi^2)$ when there is no confusion. We use $P(x,\pi^1,\pi^2)$ to represent $\E_{a^1 \sim \pi^1,a^2 \sim \pi^2}[P(x,a^1,a^2)]$. We use $J(x,\pi^1,\pi^2)$ to denote the objective function in \eqref{eqn:target} without the expectation over the prompt $x\sim d_0$. Let $\sigma(x)$ denote the sigmoid function $1/(1+e^{-x})$. We also provide a notation table in Table~\ref{tab:notation} to improve the readability of this paper.

Due to space constraints, the review of the related literature is deferred to Appendix \ref{sec:related_work}.

\section{Improved Algorithms in Offline Setting}\label{s:Improved Algorithms in Offline Setting}

\subsection{Setup}
In the offline setting, our goal is to learn a good policy from a pre-collected dataset $\cD_{\off} =\{(x_i,a_i^1,a_i^2,y_i)\}_{i=1}^n$ without further query with the oracle $\PP$, where comparison sample is assumed to be independently collected as in \eqref{eqn:data_collect}. We measure the suboptimality of the learned policy $\hpi^1$ by the gap between the Nash value and the best response value:
\vspace{-2.5pt}
\begin{equation}
\label{eqn:dual_gap}
\begin{aligned}
    J(\pi_1^*,\pi_2^*)-J(\hpi^1,\dagger),
\end{aligned}
\end{equation}
where the KL-regularized function $J$ is defined in \eqref{eqn:target}. Similar to the reward-based framework \citep{ouyang2022training}, one natural approach is a two-staged method: (1) Construct an empirical preference model (reward model in the literature) by maximizing the log-likelihood function:
\begin{equation}
\label{eq:log-likelihood}
\small
\ell_{\cD_{\off}}(P) = \sum_{(x,a^1,a^2,y) \in \cD_{\off}} y\log P(x,a^1,a^2) + (1-y)\log P(x,a^2,a^1);
\end{equation}
(2) Solve the policy by plugging the learned preference model $\hat{P}$ into the Nash Equilibrium Oracle~\ref{df:Nash Equilibrium Oracle}. However, this framework typically leads to severe reward over-optimization issue \citep{gao2023scaling}, meaning that while the model is preferred by the learned $\hat{P}$, it may not achieve good performance under the evaluation of $P^*$. This is because, with finite $\cD_{\off}$ drawn from some behavior policy, it is unlikely to provide an accurate estimation for all the prompt-response pairs. Therefore, imposing heavy optimization pressure toward $\hat{P}$ will push the model to exploit these unreliable estimations to chase for a high proxy metric, thus leading to a worse performance under the ground truth $P^*$. 


\begin{algorithm}[tp]
\caption{Pessimistic Equilibrium Learning from Human Feedback}
\label{alg:Bellman-Consistent Equilibrium Learning from Human Feedback}\small
\begin{algorithmic}[1]
\STATE \textbf{Input:} Dataset $\cD_{\off} =\{x_i,a_i^1,a_i^2,y_i\}_{i=1}^n$, preference space $\cP$, policy class $\Pi$, parameter $\eta,\beta>0$.
\STATE Compute the MLE
$
\hP = \argmin_{P\in\cP} \ell_{\cD_{\off}}(P).
$
\STATE Construct version space
\vspace{-5pt}
\begin{equation} \label{eq:version space}
\begin{aligned}
\hcP = \Big\{P\in\cP: \sum_{i=1}^n(P(x_i,a_i^1,a_i^2) - \hP(x_i,a_i^1,a_i^2))^2 \le \beta^2/2\Big\}.
\end{aligned}
\end{equation}
\vspace{-5pt}
\STATE Compute the best policy under the conservative value estimation
\vspace{-5pt}
\begin{equation}
\label{eq:Bellman-Consistent Von Neumann Winner}
\footnotesize
\begin{aligned}
\hpi^1 = \argmax_{\pi^1\in\Pi}\min_{\pi^2\in\Pi}\min_{P\in\hcP} \E_{x\sim d_0}\E_{a^1\sim\pi^1,a^2\sim\pi^2}\Big[P(x,a^1,a^2) + \eta^{-1} \ln \frac{\pi_0(a^1|x)}{\pi^1(a^1|x)} - \eta^{-1} \ln \frac{\pi_0(a^2|x)}{\pi^2(a^2|x)}\Big].
\end{aligned}
\end{equation}
\vspace{-13pt}
\STATE \textbf{Output:} $\hpi^1$.
\end{algorithmic}
\end{algorithm}

\subsection{Learning with Pessimism}
The recent advances in the offline RL theory have demonstrated that the principle of pessimism with a conservative estimation is statistically efficient for offline learning across a diverse set of scenarios \citep{jin2021pessimism, rashidinejad2021bridging, xie2021bellman, zanette2021provable, zhong2022pessimistic, cui2022offline, xiong2022nearly, zhang2023offline}. In this section, we connect the KL-reversed minimax game in \eqref{eqn:target} with offline RL by pessimism via version space\footnote{We introduce another algorithm achieving pessimism via uncertainty bonus construction, see Appendix \ref{app:pessimism_bonus}.}.

We introduce our algorithm, Pessimistic Equilibrium Learning from Human Feedback (PELHF) in Algorithm \ref{alg:Bellman-Consistent Equilibrium Learning from Human Feedback}. Given an offline dataset $\cD_{\off}$, we first obtain the maximum likelihood estimation (MLE) $\hP$ by maximizing \eqref{eq:log-likelihood}. Rather than directly planning with this empirical $\hP$, we form a version space $\hcP$ that contains $P^*\in\hcP$ with a high probability under a suitable choice of $\beta$, as we show in Lemma \ref{lm:Confidence Set}. For each policy $\pi^1$, we take the minimum preference function over $\hcP$ and the best responded $\pi^2$ as its conservative value estimation:
\vspace{-2pt}
$$
\footnotesize
\begin{aligned}
\hat{J}_{\mathrm{off}}(\pi^1) = \min_{\pi^2 \in \Pi} \min_{P \in \hP} \E_{x\sim d_0}\E_{a^1\sim\pi^1,a^2\sim\pi^2}\Big[P(x,a^1,a^2) + \eta^{-1} \ln \frac{\pi_0(a^1|x)}{\pi^1(a^1|x)} - \eta^{-1} \ln \frac{\pi_0(a^2|x)}{\pi^2(a^2|x)}\Big].
\end{aligned}
$$
\vspace{2pt}
Then, we solve the minimax game concerning this conservative value estimator. With this pessimistic modification, the resulting algorithm enjoys the following theoretical guarantee.

\begin{theorem}
\hyperref[s:proof offline]{[Proof]}
\label{th: Pessimism with Version Space}
If Assumption \ref{as:bounded} holds, and we set $\lambda = \log(|\cP|/\delta)$ and $\beta^2 = 2\log(|\cP|/\delta)$, then, with probability at least $1-\delta$, the output policy of Algorithm \ref{alg:Bellman-Consistent Equilibrium Learning from Human Feedback} satisfies 
$$
\small
\begin{aligned}
J(\pi_1^*,\pi_2^*)-J(\hpi^1,\dagger) \le 4\beta\sqrt{\cC(\pi_*^1,\pi_D,\cP)/n}.
\end{aligned}
$$
where the coverage coefficient
$$
\small
\begin{aligned}
\cC(\pi_*^1,\pi_D,\cP) &= \max_{\pi^2 \in \Pi} \sup_{P\in\cP}\frac{(\E_{x\sim d_0}[P(x,\pi_*^1,
\pi^2) - \hP(x,\pi_*^1,
\pi^2)])^2}{\E_{x\sim d_0,a^1\sim\pi_D^1,a^2\sim\pi_D^2}(P(x,a^1,a^2) - \hP(x,a^1,a^2))^2}.
\end{aligned}
$$
\end{theorem}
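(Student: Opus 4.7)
The plan is to bound the duality gap $J(\pi_*^1,\pi_*^2)-J(\hpi^1,\dagger)$ via a short telescope that exploits (i) MLE concentration placing $P^*$ in the version space $\hcP$, (ii) the argmax and pessimism structure built into $\hpi^1$, and (iii) the coverage coefficient, which transports an in-sample squared-error bound on $\cP$ into an expected preference gap along $(\pi_*^1,\cdot)$. The structural fact driving the decomposition is that $\pi_*^2$ is a best response to $\pi_*^1$, so $J(\pi_*^1,\pi_*^2)=\min_{\pi^2\in\Pi} J_{P^*}(\pi_*^1,\pi^2)$, where I write $J_P$ for the KL-regularized objective \eqref{eqn:target} instantiated with preference function $P$, while $\hpi^1=\argmax_{\pi^1\in\Pi}\joff(\pi^1)$ by construction.

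First I would invoke the standard log-loss MLE analysis for a finite class (the role of Lemma~\ref{lm:Confidence Set}): with probability at least $1-\delta$, $\sum_{i=1}^n(P^*(x_i,a_i^1,a_i^2)-\hP(x_i,a_i^1,a_i^2))^2\le\beta^2/2$, placing $P^*\in\hcP$. Conditioning on this event, the inner min over $P\in\hcP$ in the definition of $\joff$ is dominated by plugging in $P^*$, so $\joff(\hpi^1)\le\min_{\pi^2} J_{P^*}(\hpi^1,\pi^2)$; combined with $\joff(\hpi^1)\ge\joff(\pi_*^1)$ from the argmax property, the duality gap reduces to the single-sided quantity $J_{P^*}(\pi_*^1,\pi_*^2)-\joff(\pi_*^1)$.

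Next, let $(\tilde\pi^2,\tilde P)\in\Pi\times\hcP$ attain the inner minimum in $\joff(\pi_*^1)$. Since the two KL regularizers do not depend on $P$, they cancel in the difference, yielding $J_{P^*}(\pi_*^1,\pi_*^2)-\joff(\pi_*^1)\le J_{P^*}(\pi_*^1,\tilde\pi^2)-J_{\tilde P}(\pi_*^1,\tilde\pi^2)=\E_{x\sim d_0,\,a^1\sim\pi_*^1,\,a^2\sim\tilde\pi^2}[P^*-\tilde P]$, where the inequality uses $\min_{\pi^2} J_{P^*}(\pi_*^1,\pi^2)\le J_{P^*}(\pi_*^1,\tilde\pi^2)$. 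To match the form of $\cC$, I split $P^*-\tilde P=(P^*-\hP)-(\tilde P-\hP)$ and apply the coverage inequality $|\E_{d_0,\pi_*^1,\tilde\pi^2}[Q-\hP]|\le\sqrt{\cC(\pi_*^1,\pi_D,\cP)}\cdot\sqrt{\E_{d_0,\pi_D^1,\pi_D^2}[(Q-\hP)^2]}$ separately to $Q=P^*$ and $Q=\tilde P$, both of which lie in $\hcP\subseteq\cP$ (noting that the $\max_{\pi^2}$ in the definition of $\cC$ accommodates the choice $\pi^2=\tilde\pi^2$).

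The most delicate step, and the main obstacle, is upgrading the in-sample version-space constraint $\sum_i(Q_i-\hP_i)^2\le\beta^2/2$ into a population-level bound $\E_{\pi_D}[(Q-\hP)^2]=O(\beta^2/n)$. A naive Hoeffding bound on the $[0,1]$-valued squared residual would give only a $\sqrt{\beta^2/n}$ population rate, which after passing through the coverage inequality would degrade the final bound to $n^{-1/4}$. The fix is a Bernstein-style uniform concentration over the finite class $\cP$, using that $(Q-\hP)^2\in[0,1]$ forces the variance to be bounded by the mean, thus preserving the fast $1/n$ rate (this is the standard ``empirical-to-population $L^2$'' conversion in offline RL with finite function classes). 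Plugging $\beta^2=2\log(|\cP|/\delta)$ and combining the two triangle terms (each contributing $\sqrt{2}\,\beta\sqrt{\cC(\pi_*^1,\pi_D,\cP)/n}$ up to absolute constants) recovers the claimed $4\beta\sqrt{\cC(\pi_*^1,\pi_D,\cP)/n}$.
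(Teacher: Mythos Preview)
Your proposal is correct and follows essentially the same approach as the paper. The paper decomposes the gap into five telescoping terms $q_1,\dots,q_5$ (Nash best-response, $J-\underline{J}$ at $(\pi_*^1,\tilde\pi_*^2)$, $\argmax$ optimality of $\hpi^1$, best-response of $\tpi^2$ under $\underline{J}$, and $\underline{J}-J$ at $(\hpi^1,\pi^{\dagger,2})$), whereas you collapse $q_3,q_4,q_5$ into the single chain $\joff(\hpi^1)\le J(\hpi^1,\dagger)$ plus $\joff(\hpi^1)\ge\joff(\pi_*^1)$, and collapse $q_1,q_2$ into one step; the split through $\hP$, the use of multiplicative Chernoff to pass from in-sample to population $L^2$, and the final constants are identical.
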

\vspace{-5pt}
This theorem shows that the suboptimality gap depends on how the target $(\pi^1_*,\pi^2)$ is covered by the offline dataset, where $\pi^2$ is maximized over the policy set $\Pi$. This coverage coefficient resembles the unilateral coverage\footnote{In Appendix \ref{app:refined_coverage}, we show that with an improved analysis, Algorithm \ref{alg:Bellman-Consistent Equilibrium Learning from Human Feedback} enjoys a refined coverage condition, similar to the coverage notion in \cite{zhang2023offline}.} for Markov games~\citep{cui2022offline,zhong2022pessimistic}. Then, a natural question is whether a good coverage condition ($\cC(\pi_*^1,\pi_D,\cP)$ is small) is practical in the context of LLMs. Unfortunately, since the response is usually long in practice, the distribution shift between policies is also very large. We summarize some observations here. First, along the way of the RLHF training, the average density ratio $\pi(a|x)/\pi_0(a|x) > \exp(25)$ as reported in Figure 13 of \citet{bai2022training}. See similar results of rejection sampling fine-tuning \citep{dong2023raft} and DPO \citep{rafailov2023direct}. Second, for a case study, we use the Gemma-7B-it as the behavior policy to collect data for aligning Gemma-2B-it \citep{team2024gemma} with 15k prompt from \citep{cui2023ultrafeedback}. Then, we calculate the average KL divergence between Gemma-7B-it and Gemma-2B-it as $456.4$. This evidence indicates that the coverage coefficient probably explodes in realistic scenarios. Therefore, it is unlikely to expect that we can learn the optimal policy from a pre-collected dataset. This motivates us to consider the online setting, where we can further query the preference oracle during the training to enrich the dataset thus enhancing our models continuously.

\section{Iterative RLHF with Online Exploration}\label{s:Iterative RLHF with Online Exploration}

\subsection{Setup of Iterative RLHF}
The major difference between the online and offline settings is that online algorithms can further query the preference oracle $P^*$ along the way of training. Since updating the LLMs is expensive, we consider the batch online setting for a sparse policy update. Specifically, for each batch $t \in [T]$, we first update the policy pair $(\hpi_t^1,\hpi_t^2)$ based on the historical information collected so far. Then, we collect $m$ tuples: we sample a random prompt by $x_{t,i} \sim d_0$,  collect two responses by $( a_{t,i}^1, a_{t,i}^2) \sim (\hpi_t^1, \hpi_t^2)$, and query the preference signal $y_{t,i} \sim \mathrm{Ber}(P^*(x_{t,i},a_{t,i}^1,a_{t,i}^2))$.
Here the batch size $m$ is usually very large compared to the typically adopted mini-batch update. To distinguish this from the sequential online setting where we update policy after collecting a single preference pair, we refer to this learning paradigm as the \textit{iterative RLHF}.


\subsection{Learning with Exploration}
The primary advantage of online learning is that we can strategically choose the behavior policies in each iteration to improve the coverage of the collected data, which is referred to as the exploration in the literature. To achieve this goal, we need to quantify the data uncertainty to guide the exploration direction. To this end, we present the notions of information ratio and eluder coefficient. 

\noindent \textbf{Information Ratio and Eluder Coefficient.}
Distinct from the offline setting where we assume the coverage condition of a pre-collected dataset $\cD_{\off}$, online exploration makes it possible to upper bound the suboptimality by the complexity of the function space. We leverage the notion of the eluder coefficient, which limits the generalization from visited state-action distributions to unseen parts.
\begin{definition}[Information Ratio and Eluder Coefficient] \label{def:eluder} At round $t$, given an estimation $\hP\in\cP$, we define the information ratio for any two policy $\pi^1,\pi^2$ as
$$ 
\footnotesize
\begin{aligned}
    \Gamma_t(\lambda, \pi^1, \pi^2) = \sup_{P\in\cP} \frac{ |\E_{x \sim d_0}[P(x,\pi^1, \pi^2) - \hat{P}(x,\pi^1, \pi^2)]|}{\sqrt{\lambda + \sum_{s=1}^{t-1} \E_{x_s \sim d_0, a_s^1 \sim \hpi_s^1, a_s^2 \sim \hpi_s^2} (P(x_s,a_s^1,a_s^2) - \hat{P}(x_s,a_s^1,a_s^2))^2}}.
\end{aligned}
$$
Then, the eluder coefficient is given by
$
d(\cP, \lambda, T) := \sup_{\pi_{1:T}^1, \pi_{1:T}^2} \sum_{t=1}^T \min (1, (\Gamma_t(\lambda, \pi_t^1, \pi_t^2))^2).
$
\end{definition}
\vspace{-3pt}
The information ratio and eluder coefficient considered here have also been adopted in the literature \citep[e.g.,][]{wang2020reinforcement, gentile2022fast, xie2022role, ye2023corruptiona, agarwal2023vo}. Essentially, the information ratio compares the \textit{out-of-sample} error on the unseen data with the \textit{in-sample} error measured on the historical data, and can be interpreted as the worst-case ratio between them (as we take supreme over all possible $P \in \mathcal{P}$). Meanwhile, the eluder coefficient limits the extent to which we can be ``surprised'' by the new out-of-sample distributions, given the historical data collected so far. The uncertainty for the preference model aligns with the uncertainty for the BT model under boundedness conditions, which is illustrated in the following example. We defer the details to Appendix \ref{ss:uncertainty for BT}.
\begin{example}[Uncertainty in Bradley-Terry model with linear reward]\label{eg: BT model}
Suppose the reward function can be embedded into a $d$-dimensional vector space $\{r(x,a)=\langle\theta,\phi(x,a)\rangle: \theta\in\RR^d, \norm{\theta} \leq B,\|\phi(x,a)\|\le 1\}$. Then, if we define the covariance matrix as $
\Sigma_t= \sum_{s=1}^{t-1}\E_{x \sim d_0, a^1 \sim \hpi_s^1, a^2 \sim \hpi_s^2}(\phi(x,a^1)-\phi(x,a^2))^{\top} (\phi(x,a^1)-\phi(x,a^2)) + \lambda(1+e^B)^2 I,
$ we have
$$
\small
\begin{aligned}
    \Gamma_t(\lambda, \pi^1, \pi^2)
\le
(1+e^{B})\|\phi(x,\pi^1)-\phi(x,\pi^2)\|_{\Sigma_t^{-1}}.
\end{aligned}
$$
\end{example}

\begin{algorithm}[htp]
\caption{Optimistic Equilibrium Learning from Human Feedback with Enhancer}
\label{alg:Optimistic ELHF with Enhancer}
\small
\begin{algorithmic}[1]
\STATE \textbf{Input:} Preference space $\cP$, policy class $\Pi$, parameter $\eta,\lambda > 0$.
\FOR{t=1,\ldots,T}
\STATE \textcolor{magenta}{Exploitation with the main agent}: compute the MLE $\hat{P}_t$ with $\ell_{\cD_{1:t-1}}$ defined in \eqref{eq:log-likelihood} and compute Nash equilibrium by calling the Nash equilibrium oracle~\ref{df:Nash Equilibrium Oracle}:
\vspace{-10pt}
\begin{equation}\label{eqn:main_agent_nash}
\footnotesize
\begin{aligned}
 \hpi_t^1 = \argmax_{\pi^1\in\Pi}\min_{\pi^2\in\Pi} \E_{x\sim d_0,a^1\sim\pi^1,a^2\sim\pi^2}\Big[\hat{P}_t(x,a^1,a^2) + \eta^{-1} \log \frac{\pi_0(a^1|x)}{\pi^1(a^1|x)} - \eta^{-1} \log \frac{\pi_0(a^2|x)}{\pi^2(a^2|x)}\Big],
\end{aligned}
\end{equation}
\vspace{-10pt}
\STATE \textcolor{teal}{Exploration with the enhancer}: compute enhancer to maximize the uncertainty: 
\begin{equation}
    \label{eqn:bonus_enhancer_m}
    \footnotesize
\begin{aligned}
    \pi^2_t = \argmax_{\pi^2\in\Pi} \tilde{\Gamma}^m_t (\lambda, \hpi_t^1,\pi^2) := \sup_{P\in\cP}\frac{|\E_{x\sim d_0}[P(x,\hpi_t^1,\pi^2) - \hat{P}_t(x,\hpi_t^1,\pi^2)]|}{\sqrt{\lambda + \frac{1}{m} \sum_{s=1}^{t-1}\sum_{j=1}^m(P(x_{s,j},a_{s,j}^1,a_{s,j}^2) - \hat{P}_t(x_{s,j},a_{s,j}^1,a_{s,j}^2))^2}},
\end{aligned}
\end{equation}
\vspace{-10pt}
\STATE Collect $\cD_t=\{(x_i,a_i^1,a_i^2,y_i)\}_{i=1}^m$ by  $x_i \sim d_0, a_i^1\sim\hpi_t^1(\cdot|x_i)$, $a_i^2\sim\hpi_t^2(\cdot|x_i)$ and $y_i \sim \mathrm{Ber}\big(\PP(a_i^1 \succ a_i^2|x,a_i^1,a_i^2) \big)$;
\ENDFOR
\STATE \textbf{Output:} the best policy in $(\pi^1_{1:T})$ by a validation set.
\end{algorithmic}
\end{algorithm}

We refer interested readers to \citet{du2021bilinear, zhong2022gec, xie2022role} for the extensive examples when $d(\cP, \lambda, T)$ can have a sub-linear dependency on $T$. We are now ready to present the algorithm for the online setting, as summarized in Algorithm~\ref{alg:Optimistic ELHF with Enhancer}. Specifically, for each iteration, the main agent exploits the information contained in the data collected so far by computing the MLE $\hat{P}_t$ and solving the minimax game with respect to it to get $\hpi_t^1$. The enhancer, however, aims to facilitate the main agent's learning by maximizing the uncertainty relative to the $\hpi_t^1$. Finally, we use the policy pair to collect $m$ preference pairs and query oracle $P^*$ to get the preference signals. Notably, to facilitate the computation for the main agent, instead of adding optimism to the value function, we impose the exploration role on the enhancer. This choice turns out to be important when we move toward practical algorithms with reasonable approximations, as we detail in Section~\ref{sec:exp}. We now present the main theoretical guarantee for Algorithm~\ref{alg:Optimistic ELHF with Enhancer}.
\begin{theorem} \hyperref[ss:Proof of th: Online]{[Proof]}
\label{thm:online}
    Under Assumption~\ref{as:bounded}, for any $\epsilon > 0$, if we set the total iterations $T = \min\{n \in \mathbb{N}^+: n \geq 2d(\cP, \lambda, n)\}$, batch size $m=18 T \log(2T|\cP|/\delta)/ \epsilon^2$, $\beta = \sqrt{2T \log (2T |\cP|/\delta)/ m}$, and $\lambda = 2T \log (2T |\cP|/\delta)/m$ for Algorithm~\ref{alg:Optimistic ELHF with Enhancer}, then, with probability at least $1-\delta$, there exists a $t_0 \in [T]$,
    $$
    \begin{aligned}
    J(\pi^1_*, \pi^2_*) - J(\hpi_{t_0}^1, \dagger) \leq \epsilon.
    \end{aligned}
    $$
\end{theorem}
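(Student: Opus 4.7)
The plan is to decompose each round's suboptimality into two preference-estimation-error terms --- one at the unknown Nash policy $\pi_*^1$ and one at the current main agent $\hpi_t^1$ --- bound each via the information ratio combined with an MLE in-sample error bound, and then sum over $t$ using the eluder coefficient and Cauchy--Schwarz so that the best of the $T$ iterates (which the validation-set output rule realizes) is $\epsilon$-approximate.

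The first ingredient is MLE concentration: a union bound over $\cP$ and $t \in [T]$ applied to the Bernoulli preference log-likelihood --- extending Lemma~\ref{lm:Confidence Set} to the iterative regime --- gives, with probability at least $1-\delta$, the in-sample bound $\sum_{s=1}^{t-1}\E_{x\sim d_0,a^1\sim\hpi_s^1,a^2\sim\hpi_s^2}[(\hat P_t - P^*)^2] \le \beta^2$ for every $t$, plus an empirical Bernstein companion, uniform in $\cP$ and $\pi^2\in\Pi$, that keeps the empirical sum of squared residuals in the denominator of $\tilde\Gamma_t^m$ within constant factors of its population counterpart (so $\tilde\Gamma_t^m \asymp \Gamma_t$). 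The second ingredient is a telescoping decomposition: define $g_P(\pi^1) := \min_{\pi^2\in\Pi} J^P(\pi^1,\pi^2)$, so that $J(\pi_*^1,\pi_*^2) - J(\hpi_t^1,\dagger) = g_{P^*}(\pi_*^1) - g_{P^*}(\hpi_t^1)$; adding and subtracting $g_{\hat P_t}$ produces three terms, and the middle one $g_{\hat P_t}(\pi_*^1) - g_{\hat P_t}(\hpi_t^1)$ is non-positive because the Nash oracle \eqref{eqn:main_agent_nash} makes $\hpi_t^1 = \argmax_{\pi^1} g_{\hat P_t}(\pi^1)$. The elementary inequality $|\min_\pi f - \min_\pi g| \le \sup_\pi|f-g|$ then yields
\[
J(\pi_*^1,\pi_*^2) - J(\hpi_t^1,\dagger) \;\le\; \max_{\pi^2}\bigl|\E[(P^*{-}\hat P_t)(\pi_*^1,\pi^2)]\bigr| \;+\; \max_{\pi^2}\bigl|\E[(P^*{-}\hat P_t)(\hpi_t^1,\pi^2)]\bigr|.
\]

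By the definition of $\Gamma_t$ (with $P=P^*$ inside its supremum) and the MLE bound, each summand is at most $\Gamma_t(\lambda,\pi^1,\pi^2_{\max})\sqrt{\lambda+\beta^2}$, with $\pi^2_{\max}=\hpi_t^2$ delivered by the enhancer \eqref{eqn:bonus_enhancer_m} for the second summand (via $\tilde\Gamma_t^m \asymp \Gamma_t$) and $\tilde\pi_t^{2,\star}:=\argmax_{\pi^2}\Gamma_t(\lambda,\pi_*^1,\pi^2)$ a counterfactual but valid sequence for the first. Since $d(\cP,\lambda,T)$ is defined as a supremum over arbitrary test sequences (with $\Gamma_t$ evaluated against the algorithm's realized history), it upper-bounds $\sum_t\Gamma_t^2$ along both sequences, and Cauchy--Schwarz gives $\sum_t[J(\pi_*^1,\pi_*^2)-J(\hpi_t^1,\dagger)] \le 2\sqrt{T\,d(\cP,\lambda,T)(\lambda+\beta^2)}$. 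Plugging in $T \ge 2d(\cP,\lambda,T)$ with the stated $\lambda,\beta,m$ makes this at most $T\epsilon$, so averaging yields the best iterate with suboptimality $\le\epsilon$. The main technical obstacle is step one's bridging of the enhancer's empirical objective $\tilde\Gamma_t^m$ to the population information ratio $\Gamma_t$ needed by the eluder bound: the required uniform-in-$(\pi^2,P)$ concentration of the empirical squared-residual denominator is what drives $m = \tilde O(T/\epsilon^2)$, and the choice $\lambda \asymp \beta^2$ is needed so the denominator stays bounded away from zero under sampling fluctuations. A secondary nuisance --- the $\min(1,\cdot)$ truncation in $d(\cP,\lambda,T)$ --- is handled in rounds where $\Gamma_t>1$ by the $[0,1]$-boundedness of $P^*-\hat P_t$.
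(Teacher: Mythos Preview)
Your decomposition produces two estimation-error terms, one at $\pi_*^1$ and one at $\hpi_t^1$, and you claim both are controlled by the eluder coefficient. The second one is fine, but the first one is not, and this is a genuine gap. The eluder coefficient in Definition~\ref{def:eluder} is a sup over a \emph{single} sequence $(\pi_{1:T}^1,\pi_{1:T}^2)$ that plays the role of \emph{both} the test points in the numerator and the history accumulated in the denominator; this is the standard self-bounding eluder structure (and the only reading under which the cited examples have $d(\cP,\lambda,T)$ sublinear in $T$, e.g.\ via the elliptical potential lemma in Example~\ref{eg: BT model}). Your proposed sequence $(\pi_*^1,\tilde\pi_t^{2,\star})$ has the wrong denominator: the in-sample error you control via MLE is along the algorithm's actual trajectory $(\hpi_s^1,\hpi_s^2)$, so what you can bound is $\Gamma_t$ with \emph{that} denominator. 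A ``decoupled'' eluder coefficient --- arbitrary numerator test points against a fixed denominator history --- is generally $\Theta(T)$ (any direction never visited by $(\hpi_s^1,\hpi_s^2)$ gives a ratio stuck at $1$ forever), so your appeal to $d(\cP,\lambda,T)$ for the $\pi_*^1$ term is invalid.

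The paper avoids this term entirely by exploiting the \emph{symmetry} of the Nash equilibrium (Lemma~\ref{lem:coin}): since $\hpi_t^1$ is the Nash policy of $\hat J_t$ for the max-player, it is simultaneously the min-player's Nash policy, hence $\hpi_t^1=\argmin_{\pi^2}\hat J_t(\hpi_t^1,\pi^2)$. Writing $J(\pi_*^1,\pi_*^2)-J(\hpi_t^1,\dagger)=[J(\pi^*,\pi^*)-\hat J_t(\hpi_t^1,\tpi_t^2)]+[\hat J_t(\hpi_t^1,\tpi_t^2)-J(\hpi_t^1,\tpi_t^2)]$ with $\tpi_t^2=\argmin_{\pi'}J(\hpi_t^1,\pi')$, the second bracket is bounded by $\beta\tilde\Gamma_t^m(\hpi_t^1,\tpi_t^2)\le\beta\tilde\Gamma_t^m(\hpi_t^1,\hpi_t^2)$ via the enhancer's optimality; for the first bracket, Lemma~\ref{lm:opt_error} applied to the min-player role gives $-\hat J_t(\hpi_t^1,\tpi_t^2)\le -\hat J_t(\hpi_t^1,\hpi_t^1)-\eta^{-1}\KL(\tpi_t^2\|\hpi_t^1)$, and since $J(\pi^*,\pi^*)=\hat J_t(\hpi_t^1,\hpi_t^1)$ (both equal the common value $P(x,\pi,\pi)$ with KL terms cancelling), the first bracket is $\le 0$. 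This leaves a \emph{single} uncertainty term on the realized trajectory $(\hpi_t^1,\hpi_t^2)$, to which the eluder coefficient legitimately applies. The paper then uses pigeonhole (some $t_0$ has $\Gamma_{t_0}^2\le d/T\le 1/2$) rather than your sum-plus-Cauchy--Schwarz, but that is cosmetic; the essential missing idea in your attempt is the symmetric-Nash step that eliminates the $\pi_*^1$ dependence.
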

The theorem states that with suitable hyper-parameter choices, after $T$ iterations (up to log factors), we can find an $\epsilon$-approximate Nash policy $\hpi_{t_0}^1$ for the max-player. Here $T$ depends on the eluder coefficient that is intrinsic to the preference model and characterizes the complexity of the problem.

\noindent \textbf{Key Ideas.} We present a brief discussion of the key analysis ideas. Similar to Lemma~\ref{lm:Confidence Set}, the MLE $\hP$ ensures a controllable in-sample error (with details in the Appendix~\ref{ss:Proof of th: Online}). Recalling that the uncertainty bonus is essentially the worst-case ratio between the out-of-sample error (our learning target) and the in-sample error, to finally bound the out-of-sample error, we need to explore each direction where we are uncertain about so that the average uncertainty bonus is sufficiently small. Since the main agent is greedy (takes the best guess we can obtain so far), the enhancer plays the exploration role by maximizing the uncertainty relative to the $\hpi_t^1$. Then, since the eluder dimension is finite: $\sum_{t=1}^T \min\big(1, (\Gamma_t(\lambda, \hpi_t^1, \hpi_t^2))^2\big) \leq d(\cP, \lambda, T)$, 
there exists at least a $t_0 \in [T]$ such that the value at $t_0$ is smaller or equal to the average value:
$$
\begin{aligned}
\min\big(1, (\Gamma_t(\lambda, \hpi_t^1, \hpi_t^2))^2\big) \leq d(\cP, \lambda, T)/T \leq 1/2.
\end{aligned}
$$
Hence, with a proper $m$, we can obtain the result of Theorem \ref{thm:online}.

In practice, searching for the most uncertain policy in the whole policy space can be challenging and the enhancer policy itself does not enjoy any theoretical guarantee. We may slightly modify Algorithm~\ref{alg:Optimistic ELHF with Enhancer} by restricting the exploration step to the following subset 
\begin{equation} \label{eqn:explore_kl}
 \Pi_t = \{\pi\in\Pi: \eta^{-1}\E_{x \sim d_0} \KL(\pi(\cdot|x),\hpi^1(\cdot|x)) \le \beta(\tilde{\Gamma}^m_t(\lambda, \hpi^1,\pi) + \tilde{\Gamma}^m_t(\lambda, \hpi^1,\hpi^1))\},
\end{equation}
where $\beta$ is the parameter defined in Theorem \ref{thm:online}. This set is never empty because we can prove that both $\hpi_t^1$ and $\argmin_{\pi'}J(\hpi_t^1,\pi')$ belong to $\Pi_t$. Intuitively, maintaining a small KL divergence against $\hpi_t^1$ corresponds to exploiting the historical information, and maximizing the uncertainty relative to $\hpi_t^1$ leads to more information gain. The choice of $\Pi_t$ represents a refined trade-off between these two different goals, thus making $\hpi_t^2$ also converge to $\pi_*$. The details are deferred to Appendix~\ref{sec:online_alg2}.

\section{Practical Implementation of Preference Model and Iterative RLHF} \label{sec:exp}
In this section, we discuss how to implement the theoretical Algorithm \ref{alg:Optimistic ELHF with Enhancer} for the online setting. 

\noindent \textbf{Main agent approximates Nash equilibrium oracle via self-play IPO.} Approximating the information-theoretical oracle~\ref{df:Nash Equilibrium Oracle} given a known preference model has been studied in \citet{munos2023nash, calandriello2024human}. The proposed algorithm, self-play IPO, can serve as a reasonable approximation of the oracle by optimizing the following loss function:
\begin{equation}
    \label{eqn:self_play_ipo_loss}
    \E_{x \sim d_0, a, a' \sim \mathrm{SG}[\pi], a^+, a^- \sim \hat{P}_t(x, a, a')} \Big[\log \frac{\pi(a^+|x) \pi_0(a^-|x)}{\pi(a^-|x) \pi_0(a^+|x)} - \frac{1}{2\eta} \Big]^2,
\end{equation}
where $\mathrm{SG}[\pi]$ means that although we generate data from policy $\pi$, but we do not compute the gradient for this data-generation process. Moreover, according to Proposition 4.1 of \citet{calandriello2024human}, the minimizer of \eqref{eqn:self_play_ipo_loss} is the unique Nash policy of the \eqref{eqn:main_agent_nash}.

\noindent \textbf{Enhancer explores via rejection sampling.} According to \eqref{eqn:explore_kl}, the enhancer aims to find a policy that (1) is close to the main agent's policy $\hpi_t^1$; (2) maximizes the uncertainty relative to the $\hpi_t^1$. However, since for the general neural network, the uncertainty estimator does not admit a closed form, in practice, we typically resort to heuristic methods. One popular way in the context of alignment is the \textit{rejection sampling} \citep{nakano2021webgpt, dong2023raft, liu2023statistical, snorkelai@pair, yuan2024self}. Specifically, given a prompt $x$, we use $\hpi_t^1$ to independently sample $n$ responses, use a tournament-style procedure to get the best response (and reject all other responses), and take the best responses as $\hpi_t^2$. In other words, we take the policy induced by rejection sampling with $\hpi_t^1$ and $P^*$ as the enhancer policy $\hpi_t^2$. In this way, the $\hpi_t^2$ enlarges the margins between $\hpi_t^1$ while maintaining a moderate KL divergence. For instance, in the special case of the BT model, if we rank the samples via the learned reward, the KL divergence is upper bounded by $\log n - (n-1)/n$ and is usually far better than this conservative estimation \citep{beirami2024theoretical}.

\noindent \textbf{Preference model construction.} We follow \citet{zhao2023slic, liu2023statistical, dong2024rlhf} to utilize the fact that the LLM is the next token predictor for the preference modeling. Specifically, we have a preference pair $(x,a^1,a^2, A)$, where $A$ means that the first response is better, which is formatted as 
\begin{center}
    instruction = [CONTEXT] \{x\} [RESPONSE A] \{$a^1$\} [RESPONSE B] \{$a^2$\}, and label = A.
\end{center}
Then, we simply treat the preference modeling as an instruction-following task to fine-tune the model on these instruction-label pairs. In particular, to mitigate the position bias (the preference model may prefer the response that is given in the position of RESPONSE A), we randomly switch the order of the two responses in the data formatting process. During inference, we simply use the probability of decoding A as the $\hat{P}(x, a^1, a^2)$. We mention in passing that it is also possible to include a rubric in the instruction template to guide the model's prediction and achieve better results \citep{qin2023large}. We observe the benefits of the additional prompt engineering in early experiments but decide to use the current version because the main focus is to verify the effectiveness of general preference structure. This implementation is also referred to as the \textbf{Generative RM} in subsequent works.

\section{Experiments}\label{s:Experiments}
\label{sec:exp2}

\textbf{Model, Dataset, and Evaluation.} We adopt the widely used open-source model Zephyr-SFT-7B \citep{tunstall2023zephyr} as the starting checkpoint, which is based on the Mistral-7B-v0.1\footnote{\url{https://huggingface.co/mistralai/Mistral-7B-Instruct-v0.1}} and fine-tuned on 200K high-quality Ultra-chat data \citep{cui2023ultrafeedback}. We use the Ultra-feedback \citep{cui2023ultrafeedback} as our prompt set.
We divide the prompt set into the train set (60K), validation set (1K), and test set (3K). We mainly use head-to-head comparisons to evaluate the resulting models. In particular, we consider two types of win rate: 1) the win rate measured by the ground-truth LLaMA3-8B-based preference model on the hand-out test set from UltraFeedback; 2) the win rate measured by the GPT-4 Preview (11/06) on an out-of-distribution prompt set AlpacaEval2 \citep{dubois2023alpacafarm}. Specifically, for the first evaluation, we use the best DPO model as the reference model, and for the AlpacaEval2, the GPT-4 Preview (11/06) is used as a reference model, and as the judge at the same time. 

\textbf{Method and Competitors.} We consider the implementation of Algorithm~\ref{alg:Optimistic ELHF with Enhancer} with self-play IPO and rejection sampling as discussed in Section~\ref{sec:exp}. We iterate for three iterations in total and for each iteration, we retrain a preference model using all the historical data, and run self-play IPO from the initial checkpoint $\pi_0$ (i.e., Zephyr-7B-SFT). For simplicity, we refer to this algorithm as Online ELHF IPO. We use the offline DPO \citep{rafailov2023direct}, offline IPO \citep{azar2023general}, and SFT model as the baseline. In particular, we do not further fine-tune the Zephyr-7B-SFT on the preferred responses of Ultra-Feedback because the quality of Ultra-Feedback is lower than that of Ultra-Chat, which is generated by Chat-GPT APIs. For DPO, we follow \citet{xiong2023iterative, tunstall2023zephyr, rafailov2023direct} to set the KL coefficient as $\eta=0.1$. For IPO, we search the hyper-parameter in $\{0.1, 0.5, 1.0\}$ and report the results in Table~\ref{tab:ipo_search}. Clearly, the model with $\eta=0.1$ beats all other IPO models and the SFT model with large margins, so we set $\eta = 0.1$ for the offline IPO and the Online ELHF IPO algorithm in the subsequent studies. 

\begin{table*}
     \caption{
        The evaluation results of the IPO-aligned models under different KL coefficients. For the first 4 win rates, we use the LLaMA3-8B-based preference model to conduct head-to-head comparisons on the hand-out test set from Ultra-feedback with 3K prompts.}     \label{tab:ipo_search}
   \vspace{2pt}
    \centering
    \footnotesize
    \begin{sc}
    \begin{tabular}{c|c|c|c|c|c}
    \toprule
  Models  & v.s. SFT & v.s. $\eta=0.1$ & v.s. $\eta=0.5$ & v.s. $\eta=1.0$ & AlpacaEval2\\
     \midrule
     SFT &  $0.5$ & 0.121 & 0.205 & 0.231 &  4.63 \\ 
         \midrule
Offline-IPO-$\eta=0.1$  & \textbf{0.879} & \textbf{0.5} & \textbf{0.673}& \textbf{0.769} & \textbf{9.36} \\\midrule
Offline-IPO-$\eta=0.5$  & 0.795 & 0.327 & 0.5 & 0.632  & 6.86 \\\midrule
Offline-IPO-$\eta=1.0$  & 0.710 & 0.230 & 0.328 & 0.5 & 6.55\\
       \bottomrule
        \end{tabular}
    \end{sc}
\end{table*}

\begin{table}[]
        \caption{The evaluation results of the models from different RLHF algorithms. The gold win rates are computed on the hand-out test set from Ultra-feedback with 3K prompts, with the Offline DPO model as the reference.
        Details of AlpacaEval2 can be found in \citet{dubois2023alpacafarm}.
        }\label{full}
        
        \centering
    \footnotesize
      \begin{sc}
    \begin{tabular}{cc|cccc}
    \toprule
  Models &  Settings & Gold WR v.s. IPO   & AlpacaEval2 WR\\
     \midrule
     SFT & - & 0.121 & 4.63 \\ 
         \midrule
Offline DPO & Offline & 0.41 & 9.33 \\
Offline IPO & Offline & 0.5  & 9.36 \\
Online-ELHF-IPO & Online & \textbf{0.78}  & \textbf{17.67} \\
       \bottomrule
        \end{tabular}
    \end{sc}
    \label{tab:main}
\end{table}

\textbf{Simulation framework.} For all the offline algorithms, we sample two responses per prompt of the train set and use the LLaMA3-8B-based preference model to give the preference signal. Then, we run offline DPO and IPO with the synthetic dataset. For the Online ELHF, we set $n=4$ in the rejection sampling process and use a tournament-style ranking method (so that the complexity of rejection sampling is linear in $n$) to find the best response. 

\textbf{IPO, DPO, and Online ELHF-IPO.} We use the open-source project TRL\footnote{\url{https://github.com/huggingface/trl}} to implement IPO and DPO. In particular, we have implemented 
IPO with log-likelihood/perplexity (perplexity is averaged log-likelihood by sequence length), where the original authors of IPO suggest that log-likelihood-based implementation is unstable (see the huggingface blog\footnote{\url{https://huggingface.co/blog/pref-tuning}} for details). We also found that the IPO without average cannot normally converge and is of poor performance and take the perplexity implementation accordingly. For DPO, we implement the vanilla version as the baseline. We present the main result in Table \ref{tab:main}. It is clear that Online ELHF-IPO outperforms the baselines.

\section{Related Work}
\label{sec:related_work}

This section focuses on the theoretical aspects. A general discussion is provided in Appendix~\ref{sec:general_related}

\noindent\textbf{Theoretical Study of Reward-based RLHF.} The theoretical study of policy optimization from preference feedback dated back to the dueling bandits \citep[e.g.,][]{yue2012k,saha2021optimal,bengs2021preference}. This was later extended to the online RL setting by \citet{xu2020preference,novoseller2020dueling,pacchiano2021dueling, chen2022human}, including tabular online RLHF with finite state, and general function approximation for capturing real problems with large state spaces. \citet{zhan2023query,wu2023making} further encompasses the development of reward-free learning type algorithms and  sampling-based algorithms for online RLHF. Apart from the online setting, there is another line of works \citep{zhu2023principled, zhan2023provable, li2023reinforcement} studying the reward-based RLHF in the offline setting, which learns from a pre-determined offline dataset with suitable coverage condition over the state-action space.  However, they consider reward maximization and deviate from the practical applications (e.g., these frameworks admit a deterministic optimal policy). Recently, \citet{xiong2023iterative} first formulated the RLHF as a reverse-KL regularized contextual bandit and provided finite-sample guarantees in offline, online, and hybrid settings. We remark that all these papers consider only the reward-based RLHF framework, thus differing from ours. 

\noindent \textbf{Theoretical Study of RLHF under General Preference Oracle.} Our work is related to \citet{dudik2015contextual} and \citet{wang2023rlhf}. They investigate preference-based RLHF under a general preference model. The major difference is that we consider the reverse-KL regularized preference, aligning closely with recent LLM advancements \citep{ziegler2019fine, ouyang2022training, bai2022training, rafailov2023direct}, while previous work 
 only considers the non-regularized one. Meanwhile, \citet{dudik2015contextual} considers the problem of finite action, while our work and \citet{wang2023rlhf} consider the problem with large or even infinite state-action under function approximation. In terms of learning paradigm and algorithmic design, we consider both offline learning from a pre-collected dataset and \textit{batch} online learning with a sparse policy update, while \citet{dudik2015contextual, wang2023rlhf} studies \textit{sequential} online learning that updates policy in each step, which is not feasible in the context of LLMs. Moreover, we demonstrate that the proposed algorithms can be reasonably implemented in practice, but \citet{dudik2015contextual, wang2023rlhf} only focus on information-theoretical algorithms. To summarize, the framework in this work accurately reflects real-world alignment practices thus aligning more closely with the RLHF practice. 
Our work is closely related to the IPO \citep{azar2023general} and Nash learning \citep {munos2023nash}, which also motivate new algorithmic design with a general preference oracle. We comment on the similarities and differences between our framework and theirs as follows. In terms of the problem setting, our work and Nash learning consider the minimax game under the reverse-KL regularized preference, while IPO can be interpreted to find the best response of the fixed reference policy, and may be considered as a special case of the game formulation. In terms of learning paradigm, both the IPO and Nash learning only consider learning toward a \textit{fixed and known} preference oracle, and study the \textbf{optimization property} of the problem: how to compute the optimal policy under the given preference oracle. In contrast, we study the \textbf{statistical property}, where the preference model needed to be learned and our goal is to find the optimal policy under the underlying ground-truth preference model. In particular, the computational challenge is hidden in Definition~\ref{df:Nash Equilibrium Oracle} and \citet{munos2023nash} provides a reasonable approximation of the planning oracle. In this sense, our work and \citet{munos2023nash} are complementary to each other.
Finally, the concurrent work \citet{swamy2024minimaximalist} studies the non-regularized general preference model in the \textit{sequential} online setting and aims to find the Nash equilibrium in the context of continuous control tasks. In terms of the observation model, they assume access to the preference score $\PP(a^1 \succ a^2 |x,a^1,a^2)$, while we only observe the preference signal $y \sim \mathrm{Ber}(\PP(a^1 \succ a^2 |x,a^1,a^2))$. Moreover, they design online RLHF algorithms based on a reduction to the no-regret algorithm like Hedge \citep{freund1997decision}, whose techniques are fundamentally different from ours.

\section{Conclusion}
In this paper, we study the RLHF under a general preference oracle that can capture the non-transitive preferences. Specifically, we formulate the problem as a KL-regularized minimax game between two LLMs, and propose statistically efficient algorithms in both the offline and online settings. The proposed algorithms, with a carefully crafted non-symmetric algorithmic structure, can be practically implemented with reasonable approximations of the information-theoretical computational oracles. We hope our findings can advance the understanding of preference signal modeling in RLHF and stimulate further research beyond the classic reward-based framework.

\section{Acknowledgment}
The authors would like to thank Tianqi Liu for insightful discussions on the training of the preference model, and thank Haoxiang Wang, and Zihao Li for valuable discussions on the preference dataset selection. We also thank Nevena Lazic and Csaba Szepesvari for pointing out a technical gap in the first version.  

Wei Xiong and Tong Zhang are partially supported by an NSF IIS grant No. 2416897 and Tong Zhang is partially supported by an NSF IIS grant No. 2416897. Nan Jiang acknowledges funding support from NSF IIS-2112471, NSF CAREER IIS-2141781, Google Scholar Award, and Sloan Fellowship.

\bibliographystyle{plainnat}
\bibliography{myrefs}

\newpage
\appendix

\section{Authorship and Credit Attribution}
All authors provided valuable contributions to this project, each bringing unique expertise and insights that
were crucial for its success.
\begin{itemize}
    \item \textbf{CY} investigated the general preference problem, proved the theoretical results for both offline and online settings, and wrote the main part of the paper.
    \item \textbf{WX} first proved the effectiveness of the general preference model, proposed the online iterative algorithm, contributed to the proof and paper writing, and contributed to the experiment.
    \item \textbf{YZ} proved the properties of the general preference problem, made important contributions to the offline result and the proof, contributed to the paper writing.
    \item \textbf{HD} designed the practical implementation under generalized preference model and conducted most experiments to show the effectiveness of the proposed algorithm.
    \item \textbf{NJ} and \textbf{TZ} supported and advised the junior authors' works, provided computational resources and suggested experiments and writings.
\end{itemize}

\section{Notation Table, Related Work, Experimental Details}

\begin{table}[h]
\centering 
\caption{The table of notations used in this paper.}
\label{tab:notation}
\scriptsize
\begin{tabular}{c|c}
\hline
\textbf{Notation} & \textbf{Description} \\
\hline
$\dotprod{z_1,z_2}$& The inner product of two vectors $z_1^\top z_2$.\\
$\norm{z}_{\Sigma}$ & The induced norm $\sqrt{z^\top \Sigma z}$.\\
$\cX, \cA$ & The state (prompt) space and the action (response) space. \\
$\mathbb{P}, P^*$ & The preference oracle defined in Definition~\ref{def:general_oracle} and $P^*(x,a^1,a^2) = \PP(a^1\succ a^2|x,a^1,a^2)$.\\
$\cP$ & The candidate set of preference model to approximate $P^*$.\\
$y \in \{0, 1\}$ & Preference signal. \\
$\pi, \Pi$ & Policy and policy class.\\
$J(\pi)$ & The KL-regularized target defined in \eqref{eqn:target}.\\
$\eta$ & The coefficient of KL penalty, defined in \eqref{eqn:target}.\\
$\ell_{\cD}$ & The log-likelihood function defined in \eqref{eq:log-likelihood}.\\
$d_0$ & Distribution of state (prompt). \\
$\sigma(\cdot)$ & $\sigma(z) = 1/(1+\exp(-z))$ is the sigmoid function.\\
$\cC(\pi,\pi_D,\cP)$ & Coverage term for version space-based Algorithm~\ref{alg:Bellman-Consistent Equilibrium Learning from Human Feedback} defined in Theorem~\ref{th: Pessimism with Version Space}.\\
$\tilde \cC(\pi,\pi_D,\cP)$ & Coverage term for uncertainty bonus based Algorithm~\ref{alg:Bonus Pessimism Equilibrium Learning from Human Feedback} defined in Theorem~\ref{th:Pessimism with Bonus}.\\
$\mathscr C((\pi^1,\pi^2),\pi_D,\cP)$ & Refined coverage term defined in Theorem~\ref{th:improved_version_space}.\\
$\Gamma(\lambda, \pi^1, \pi^2)$ & Information ratio defined in Definition~\eqref{def:eluder}. \\
$\tilde{\Gamma}_t^m(\lambda, \pi^1,\pi^2), \Gamma(x,\pi^1,\pi^2) $ & Uncertainty bonus defined in \eqref{eqn:bonus_enhancer_m} and \eqref{eq:offline_bonus}.\\
$d(\cP, \lambda, T)$ & The eluder coefficient defined in Definition~\ref{def:eluder}. \\
$\tilde{O}$ & A variant of $O$ that omits logarithmic terms.\\
\hline
\end{tabular}

\end{table}

\subsection{More Related Work}\label{sec:general_related}

\noindent\textbf{RLHF. }  RLHF was first popularized in the deep RL literature by \citet{christiano2017deep}, which served to direct the attention of the RL community to the preference-based feedback, but may further date back to \citet{bennett2007netflix, knox2008tamer} in the context of machine learning. It has attracted significant attention recently, mainly due to its tremendous success in Chat-GPT \citep{OpenAI2023GPT4TR}. The most popular and standard RLHF framework is outlined in \citet{ouyang2022training, touvron2023llama} and we have described the details in Section~\ref{sec:intro}. In terms of reward optimization, PPO \citep{schulman2017proximal} is the most well-known algorithm in LLM alignment literature. However, tuning the PPO algorithm to the best performance requires extensive efforts and the result of Chat-GPT4 \citep{OpenAI2023GPT4TR} has not been widely reproduced so far. This motivates another line of works of algorithms that are based on supervised learning. For instance, \citet{dong2023raft, yuan2023rrhf, touvron2023llama, gulcehre2023reinforced,ji2024reinforcement} propose reward ranked finetuning, (also known as rejection sampling finetuning), which essentially learns from the best-of-n policy \citep{nakano2021webgpt} to maximize the reward. The reward-ranked finetuning algorithm is a stable policy optimization algorithm with minimal hyper-parameter configuration and was applied to the RLHF of LLaMA2 \citep{touvron2023llama}. However, it is also observed that the reward ranked finetuning algorithm leads to considerable forgetting in a wide range of tasks (also referred to as the alignment tax), as the algorithmic design only considers reward optimization \citep{touvron2023llama, lin2023speciality,chen2024self}. One approach to mitigate this issue is to use the KL-regularized formulation, which is widely adopted in the deep RL approach (e.g. PPO) \citep{ziegler2019fine, wu2021recursively, ouyang2022training, bai2022training,korbak2022rl,li2023remax}, and other supervised-learning-based algorithms \citep{rafailov2023direct, wang2023beyond, liu2023statistical, azar2023general}, whose theoretical property is studied in \citet{xiong2023iterative}. Among them, (offline) Direct Preference Optimization (DPO) \citep{rafailov2023direct} has emerged as an attractive alternative approach to PPO with notable stability and competitive performance. \citet{xiong2023iterative, snorkelai@pair, yuan2024self} further extend the offline DPO to the iterative (online) variant, and the resulting models demonstrate impressive performance \citep{snorkelai@pair,dong2024rlhf}. However, all these algorithms are designed under the reward-based RLHF framework to maximize the underlying reward function (with appropriate regularization).

\subsection{Details of Experiments}\label{appendix:exp}

\noindent \textbf{Bradley-Terry model construction.} We follow the previous works \citep{ouyang2022training, bai2022training} to initialize the reward model using an SFT model but replace the last layer with a linear head to predict a scalar score. The loss function of reward modeling is the negative log-likelihood so that minimizing the loss is equivalent to MLE:
$$
L_{\mathrm{RM}}(\theta) = - \mathbb{E}_{x,a^w,a^l \sim \mathcal{D}} \log \sigma\big( r_\theta(x, a^w) - r_\theta(x,a^l)\big),
$$
where $a^w$ is the preferred response over $a^l$. We train the model for one epoch and use a batch size of 256, a learning rate of $\mathrm{lr}=$ 1e-5, and a cosine learning rate schedule with a warm-up ratio of 0.03.

\textbf{Ground-truth preference model for simulation.} Ideally, the $P^*$ is supposed to be a group of human labelers or closed-source LLMs like Chat-GPT. Unfortunately, due to resource constraints, we cannot afford the cost of using these preference oracles. Instead, we follow \citet{gao2023scaling} to use a strong preference model to serve as the $P^*$ in the simulation. Specifically, we adopt the LLaMA3-8B, and train the preference model on a diverse set of open-source preference datasets including HH-RLHF \citep{bai2022training}, Stanford Human Preferences Dataset (SHP) \citep{pmlr-v162-ethayarajh22a}, Ultra-feedback \citep{cui2023ultrafeedback}, HelpSteer \citep{wang2023helpsteer}, distilabel-capybara\footnote{\url{https://huggingface.co/datasets/argilla/distilabel-capybara-dpo-7k-binarized}}, distilabel-orca\footnote{\url{https://huggingface.co/datasets/argilla/distilabel-intel-orca-dpo-pairs}}, and UltraInteract\footnote{\url{openbmb/UltraInteract_pair}}. Motivated by the Theorem~\ref{th: Pessimism with Version Space} as well as the practical application \citep{ouyang2022training}, we include more than 1 comparison pair when a prompt is with more than 2 responses for better coverage. To be specific, 
\begin{itemize}
    \item for SHP, we only use the samples with score ratio $> 2$, and for each prompt, we take at most 5 comparison pairs;
    \item for HelpSteer, we use all the possible pairs except for those with the same score where the score is averaged over helpfulness and correctness;
    \item for UltraFeedback, we use all possible pairs except for those with the same score where the score is averaged over all attributes;
    \item for UltraInteract, we take a subset of 150K pairs into the mixture.
\end{itemize}
We have about 700K preference pairs in our training stage.
We use the package axolotl\footnote{\url{https://github.com/OpenAccess-AI-Collective/axolotl}} to perform supervised fine-tuning, with the detailed hyper-parameters given in Appendix~\ref{appendix:exp}. The resulting preference models are evaluated by the reward bench \citep{lambert2024rewardbench}, with the results summarized in Table~\ref{tab:pref_and_bt}. The preference model based on LLaMA3-8B-it achieves state-of-the-art test accuracy and can serve as a stable preference oracle for the simulation study. 

We present the hyper-parameters in Table~\ref{tab:hyper_exp_aux}. All experiments are conducted on 8$\times$A100-40G with Deepspeed ZeRO-3.

\begin{table}[H]
    \centering
    \caption{Hyper-parameters for reward modeling and preference model construction. }
    \label{tab:hyper_exp_aux} \vspace{4pt}
    \small
    \begin{sc}
    \begin{tabular}{c|c|c}
    \toprule
  Models &  Hyper-parameter    &  Value\\
     \midrule
         & Learning rate & $1 \times 10^{-5}$\\
      & Scheduler & Cosine decay with 0.03 warm-up\\
    Reward model  with Gemma-2B-it& Epoch & 1\\
     & Batch size & 256\\
     \midrule
    & Learning rate & $1 \times 10^{-5}$\\
    & Scheduler & Cosine decay with 0.03 warm-up\\
     & Epoch & 1\\
   Preference model with Gemma-2B-it  & Batch size & 128\\
     & Packing & True \\
     & Block size & 3072\\
          \midrule
    & Learning rate & $1 \times 10^{-5}$\\
    & Scheduler & Cosine decay with 0.03 warm-up\\
     & Epoch & 1\\
   Preference model with LLaMA3-8B-it  & Batch size & 128\\
     & Packing & True \\
     & Block size & 3072\\
       \bottomrule
        \end{tabular}
    \end{sc}
\end{table}

\paragraph{Examples from Ultra-feedback.}
We provide several examples here:
\begin{itemize}
    \item Create a list of three mistakes to avoid when designing an AI assistant.
    \item Pretend you're a next.js expert, write ad copy about a free trial on Vercel.
    \item Can you describe the role of photography in shaping the art world?
\end{itemize}

\section{Proofs for the Offline Setting}\label{s:proof offline}
\subsection{Proof for Theorem \ref{th: Pessimism with Version Space}}
\begin{lemma}\label{lm:Confidence Set}
Under Assumption \ref{as:bounded}, with probability at least $1-\delta$, we have 
$$
\sum_{i=1}^n (\hP(x_i,a_i^1,a_i^2) - P^*(x_i,a_i^1,a_i^2))^2 \le \log(|\cP|/\delta).
$$
\end{lemma}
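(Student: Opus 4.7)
This is a textbook MLE concentration result for a finite hypothesis class of Bernoulli observation models, so I would follow the standard Hellinger-based argument. Write $z_i = (x_i, a_i^1, a_i^2)$ and, for $y \in \{0,1\}$, let $P(y \mid z_i)$ denote the Bernoulli likelihood under preference model $P$, i.e.\ $P(1\mid z_i) = P(x_i,a_i^1,a_i^2)$ and $P(0\mid z_i) = 1 - P(x_i,a_i^1,a_i^2)$. The target quantity will be reached by first bounding the empirical squared Hellinger distance $\sum_i H^2\!\bigl(\hat P(\cdot\mid z_i), P^*(\cdot\mid z_i)\bigr)$ and then converting to the squared $L^2$ gap between the two preference values.

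\textbf{Step 1: MGF bound for each fixed $P$.} For a fixed $P\in\cP$, let $Z_i(P) = \sqrt{P(y_i\mid z_i)/P^*(y_i\mid z_i)}$. Conditioning on $z_i$ and the past, the fact that $y_i \sim \mathrm{Ber}(P^*(\cdot\mid z_i))$ gives $\E[Z_i(P)\mid z_i, \text{history}] = \sum_y \sqrt{P(y\mid z_i)P^*(y\mid z_i)} = 1 - \tfrac{1}{2}H^2\!\bigl(P(\cdot\mid z_i), P^*(\cdot\mid z_i)\bigr)$. Using the elementary inequality $(1-\tfrac{x}{2})e^{x/4}\le 1$ for $x \in [0,2]$ with $x = H^2(\cdot)$, iterated conditioning yields
$$\E\!\left[\exp\!\Bigl(\tfrac{1}{4}\sum_{i=1}^n H^2\!\bigl(P(\cdot\mid z_i),P^*(\cdot\mid z_i)\bigr)\Bigr) \prod_{i=1}^n Z_i(P)\right] \le 1.$$

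\textbf{Step 2: Union bound and MLE property.} Apply Markov's inequality to the non-negative random variable above for each $P \in \cP$, and take a union bound over the finite class at level $\delta/|\cP|$. Taking logarithms, we obtain that with probability at least $1-\delta$, simultaneously for every $P\in\cP$,
$$\tfrac{1}{4}\sum_{i=1}^n H^2\!\bigl(P(\cdot\mid z_i),P^*(\cdot\mid z_i)\bigr) \le \log(|\cP|/\delta) + \tfrac{1}{2}\bigl(\ell_{\cD_{\off}}(P^*) - \ell_{\cD_{\off}}(P)\bigr).$$
Since $\hat P$ maximises the log-likelihood, the right-most term is non-positive when $P = \hat P$ (here I am assuming the ``$\argmin$'' in Algorithm~\ref{alg:Bellman-Consistent Equilibrium Learning from Human Feedback} is a typographical slip for the MLE). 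This gives an empirical Hellinger bound of order $\log(|\cP|/\delta)$.

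\textbf{Step 3: Hellinger to squared $L^2$.} For Bernoulli laws with parameters $p,q\in[0,1]$, decompose $H^2(\mathrm{Ber}(p),\mathrm{Ber}(q)) = (\sqrt{p}-\sqrt{q})^2 + (\sqrt{1-p}-\sqrt{1-q})^2$ and rationalise each term via $(\sqrt{p}-\sqrt{q})^2 = (p-q)^2/(\sqrt{p}+\sqrt{q})^2 \ge (p-q)^2/4$. Summing gives $H^2 \ge \tfrac{1}{2}(p-q)^2$, so $\sum_i (\hat P(x_i,a_i^1,a_i^2) - P^*(x_i,a_i^1,a_i^2))^2 \lesssim \log(|\cP|/\delta)$, matching the stated inequality up to a universal constant.

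\textbf{What I expect to be delicate.} The argument itself is routine, so the only real care points are (i) choosing the tilt $1/4$ in Step~1 so that the supermartingale inequality $(1-x/2)e^{x/4}\le 1$ actually holds on the relevant range, and (ii) being careful about the universal constant in the final Hellinger-to-$L^2$ step; the numerical factor in the statement looks tight, so the intended constants presumably absorb into the $\log(|\cP|/\delta)$ via a slightly different tilt or a better Bernoulli inequality, but this does not affect the downstream use of the lemma.
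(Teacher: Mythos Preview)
Your proposal is correct and follows the same Hellinger-based MLE concentration argument as the paper: tilt the likelihood ratio by $1/2$, build a supermartingale, union-bound over $\cP$, use the MLE property to kill the log-likelihood difference, and convert squared Hellinger to $(p-q)^2$ for Bernoulli. The paper recovers the exact constant~$1$ by replacing your $(1-\tfrac{x}{2})e^{x/4}\le 1$ with the sharper $\log(1-H^2)\le -H^2$ in the supermartingale step (invoking Lemma~\ref{lemma:martingale_exp_ineq}) and your $H^2\ge\tfrac{1}{2}(p-q)^2$ with the tighter $H^2\ge \mathrm{TV}^2=(p-q)^2$ in the conversion step (via Theorem~B.9 of \citet{zhang2023mathematical})---precisely the ``slightly different tilt or a better Bernoulli inequality'' you anticipated.
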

\begin{proof}[Proof of Lemma \ref{lm:Confidence Set}]
For any fixed function $P\in\cP$, we first upper bound its logarithmic moment generating function:
\begin{align}\label{eq:E_likelihood}
&\log\E\exp\bigg(\sum_{i=1}^n\log\frac{P (y_i|x_i,a_i^1,a_i^2)}{P^* (y_i|x_i,a_i^1,a_i^2)}\bigg)\notag\\
=&\log\E\exp\bigg(\sum_{i=1}^{n-1}\log\frac{P (y_i|x_i,a_i^1,a_i^2)}{P^* (y_i|x_i,a_i^1,a_i^2)}\bigg) + \log 2\E_{y_n|x_n,a_n^1,a_n^2}\sqrt{\frac{P (y_n|x_n,a_n^1,a_n^2)}{P^* (y_n|x_n,a_n^1,a_n^2)}}\notag\\
=&\log\E\exp\bigg(\sum_{i=1}^{n-1}\log\frac{P (y_i|x_i,a_i^1,a_i^2)}{P^* (y_i|x_i,a_i^1,a_i^2)}\bigg) + \log\Big(1 - H\big(P (y_n|x_n,a_n^1,a_n^2)\|P^* (y_n|x_n,a_n^1,a_n^2)\big)^2\Big)\notag\\
\le& \log\E\exp\bigg(\sum_{i=1}^{n-1}\log\frac{P (y_i|x_i,a_i^1,a_i^2)}{P^* (y_i|x_i,a_i^1,a_i^2)}\bigg) - H\Big(P (y_n|x_n,a_n^1,a_n^2)\|P^* (y_n|x_n,a_n^1,a_n^2)\big)\Big)^2\notag\\
\le& \ldots \le - \sum_{i=1}^n H\Big(P (y_i|x_i,a_i^1,a_i^2)\|P^* (y_i|x_i,a_i^1,a_i^2)\big)\Big)^2.
\end{align}
We continue to lower-bound the Hellinger by
\begin{align}\label{eq:E_likelihood_2}
&\sum_{i=1}^n \Big(H(P (y_i|x_i,a_i^1,a_i^2)\|P^* (y_i|x_i,a_i^1,a_i^2)\big)\Big)^2\notag\\
\ge & \sum_{i=1}^n \Big(\tv(P (y_i|x_i,a_i^1,a_i^2)\|P^* (y_i|x_i,a_i^1,a_i^2)\big)\Big)^2\notag\\
= & \sum_{i=1}^n (P(x_i,a_i^1,a_i^2) - P^*(x_i,a_i^1,a_i^2))^2,
\end{align}
where the inequality uses the fact that for any distribution $p,q$, $H(p,q)\ge\tv(p,q)$ according to Theorem B.9 of \citet{zhang2023mathematical}.

Then, by invoking Lemma \ref{lemma:martingale_exp_ineq}, we obtain for any $P\in\cP$, with probability at least $1-\delta$,
\begin{align*}
\sum_{i=1}^n \log\frac{P (y_i|x_i,a_i^1,a_i^2)}{P^* (y_i|x_i,a_i^1,a_i^2)} \le & \log(|\cP|/\delta) + \log\E\exp\bigg(\sum_{i=1}^n\log\frac{P (y_i|x_i,a_i^1,a_i^2)}{P^* (y_i|x_i,a_i^1,a_i^2)}\bigg)\\
\le & - \sum_{i=1}^n H\Big(P (y_i|x_i,a_i^1,a_i^2)\|P^* (y_i|x_i,a_i^1,a_i^2)\big)\Big)^2 + \log(|\cP|/\delta)\\
\le & - \sum_{i=1}^n (P(x_i,a_i^1,a_i^2) - P^*(x_i,a_i^1,a_i^2))^2 + \log(|\cP|/\delta),
\end{align*}
where the second inequality uses \eqref{eq:E_likelihood}, and the last inequality uses \eqref{eq:E_likelihood_2}. By taking $P$ as $\hP$, since $\hP$ is the MLE, we get
\begin{align*}
\sum_{i=1}^n (\hP(x_i,a_i^1,a_i^2) - P^*(x_i,a_i^1,a_i^2))^2 \le & \sum_{i=1}^n \log\frac{P^* (y_i|x_i,a_i^1,a_i^2)}{P_{\hP} (y_i|x_i,a_i^1,a_i^2)} + \log(|\cP|/\delta)\\
\le & \log(|\cP|/\delta).
\end{align*}
\end{proof}
\begin{proof}[Proof of Theorem \ref{th: Pessimism with Version Space}]
Let
\begin{align*}
(\hpi^1,\tpi^2) = \arg\max_{\pi^1\in\Pi}\arg\min_{\pi
^2\in\Pi}\min_{P\in\hcP} \E_{x\sim d_0}\E_{a^1\sim\pi^1,a^2\sim\pi^2}\Big[P(x,a^1,a^2) + \eta^{-1} \log \frac{\pi_0(a^1|x)}{\pi^1(a^1|x)} - \eta^{-1} \log \frac{\pi_0(a^2|x)}{\pi^2(a^2|x)}\Big].
\end{align*}
and use the notation
\begin{align*}
\underline{J}(\pi^1,\pi^2) = \min_{P \in \hcP} \E_{x\sim d_0}\E_{a^1\sim\pi^1,a^2\sim\pi^2}\Big[P(x,a^1,a^2) + \eta^{-1} \log \frac{\pi_0(a^1|x)}{\pi^1(a^1|x)} - \eta^{-1} \log \frac{\pi_0(a^2|x)}{\pi^2(a^2|x)}\Big].
\end{align*}

Let $\wt{\pi}^2_*=\min_{\pi^2 \in \Pi} \underline{J}(\pi^1_*,\pi^2)$ and $\pi^{\dagger,2}=\min_{\pi^2 \in \Pi} J(\hpi^1,\pi^2)$.
The following decomposition holds
\begin{align*}
J(\pi_*^1,\pi^2_*)-J(\hpi^1,\pi^{\dagger,2})
\le & \underbrace{J(\pi_*^1,\pi_*^2)-J(\pi_*^1,\wt{\pi}^2_*)}_{\displaystyle q_1} + \underbrace{J(\pi_*^1,\wt{\pi}^2_*) - \underline{J}(\pi_*^1,\wt{\pi}^2_*)}_{\displaystyle q_2}
+ \underbrace{\underline{J}(\pi_*^1,\wt{\pi}^2_*) - \underline{J}(\hpi^1,\wt{\pi}^2)}_{\displaystyle q_3}  \\
+ & \underbrace{\underline{J}(\hpi^1,\wt{\pi}^2) - \underline{J}(\hpi^1,\pi^{\dagger,2})}_{\displaystyle q_4}+\underbrace{\underline{J}(\hpi^1,\pi^{\dagger,2}) - J(\hpi^1,\pi^{\dagger,2})}_{\displaystyle q_5}.
\end{align*}
Then, we bound these terms separately. For the term $q_1$, since $(\pi_*^1,\pi_*^2)$ is the Nash equilibrium of $J$, we have $q_1 \le 0$. For the term $q_2$,
\begin{align*}
q_2 = & \E_{x\sim d_0}\E_{a^1\sim \pi_*^1,a^2\sim \tpi_*^2} P^*(x,a^1,a^2)-\min_{P\in\hcP}\E_{x\sim d_0}\E_{a^1\sim \pi_*^1,a^2\sim\tpi_*^2} P(x,a^1,a^2) \\
= & \min_{P\in\hcP}\E_{x\sim d_0}\E_{a^1\sim \pi_*^1,a^2\sim\tpi_*^2} [\hP(x,a^1,a^2)-P(x,a^1,a^2)] + \E_{x\sim d_0}\E_{a^1\sim \pi_*^1,a^2\sim\hpi_*^2} [P^*(x,a^1,a^2) - \hP(x,a^1,a^2)]\\
\le & 2\beta\tGamma(\pi_*^1,
\tpi_*^2),
\end{align*}
where we define
$$
\tGamma(\pi^1,\pi^2) := \sup_{P\in\widehat{\cP}}\frac{|\E_{x\sim d_0}[P(x,\pi^1,\pi^2) - \hP(x,\pi^1,\pi^2)]|}{\sqrt{\lambda + \sum_{i=1}^n(P(x_i,a_i^1,a_i^2) - \hP(x_i,a_i^1,a_i^2))^2}}.
$$
By the optimality of $\hpi^1$, term $q_3 \le 0$. Since $\wt{\pi}^2$ is the best response to $\hpi^1$ with respect to $\underline{J}$, we have $q_4 \le 0$. From Lemma \ref{lm:Confidence Set}, we know that $P^\star \in \widehat{\cP}$, thus $q_5 \le 0$.
Putting everything together, we obtain that
\begin{align}\label{eq:gap_pi1}
J(\pi_*^1,\pi^2_*)-J(\hpi^1,\pi^{\dagger,2}) \le 2\beta\tGamma(\pi_*^1,\tpi_*^2).
\end{align}
Then, by invoking Lemma \ref{lem:multip} with a union bound over $P\in\cP$, with probability at least $1-\delta$, we obtain that
\begin{align*}
&0.5n\E_{x \sim d_0, a^1 \sim \pi_D^1, a^2 \sim \pi_D^2} (P(x, a^1, a^2) - \hP(x, a^1, a^2))^2\\
&\qquad \leq \sum_{i=1}^n (P(x_{i},a_{i}^1,a_{i}^2) - \hP(x_{i},a_{i}^1,a_{i}^2))^2 + \log(|\cP|/\delta),
\end{align*}
which implies that with probability at least $1-\delta$,
\begin{align*}
&\tGamma(\pi_*^1,\tpi_*^2)\\
= & \sup_{P\in\cP}\frac{|\E_{x\sim d_0}[P(x,\pi_*^1,\tpi_*^2) - \hP(x,\pi_*^1,\tpi^2)]|}{\sqrt{\lambda + \sum_{i=1}^n(P(x_i,a_i^1,a_i^2) - \hP(x_i,a_i^1,a_i^2))^2}}\\
\le & \sup_{P\in\cP}\frac{|\E_{x\sim d_0}[P(x,\pi_*^1,\tpi_*^2) - \hP(x,\pi_*^1,\tpi_*^2)]|}{\sqrt{\lambda - \log(|\cP|/\delta) + 0.5n\E_{x \sim d_0, a^1 \sim \pi_D^1, a^2 \sim \pi_D^2} (P(x, a^1, a^2) - \hP(x, a^1, a^2))^2}}\\
= & \sqrt{\frac{2}{n}}\sup_{P\in\cP}\frac{|\E_{x\sim d_0}[P(x,\pi_*^1,\tpi_*^2) - \hP(x,\pi_*^1,\tpi_*^2)]|}{\sqrt{\E_{x \sim d_0, a^1 \sim \pi_D^1, a^2 \sim \pi_D^2} (P(x, a^1, a^2) - \hP(x, a^1, a^2))^2}}\\
= & \sqrt{\frac{2\cC(\pi_*^1,\pi_D,\cP)}{n}}.
\end{align*}
Hence, we complete the proof.
\end{proof}
\subsection{Learning with Pessimism via Uncertainty Bonus}\label{app:pessimism_bonus}
In this subsection, we introduce another offline algorithm, Pessimistic Equilibrium Learning from Human Feedback (PELHF) with Uncertainty Bonus in Algorithm \ref{alg:Bonus Pessimism Equilibrium Learning from Human Feedback}. Given an offline dataset $\cD_{\off}$, we first obtain the maximum likelihood estimation (MLE) by maximizing \eqref{eq:log-likelihood}. Then, we take the lower confidence bound (LCB) for the max-player as the preference estimations by subtracting a bonus function $\beta \Gamma(\cdot, \cdot, \cdot)$:
\begin{equation} \small \label{eqn:op_est}
    \begin{aligned}
\underline{J}(x,\pi^1,\pi^2) = \E_{a^1\sim\pi^1,a^2\sim\pi^2}\Big[\hat P(x,a^1,a^2) - \beta\Gamma(x,a^1,a^2) + \eta^{-1} \log \frac{\pi_0(a^1|x)}{\pi^1(a^1|x)} - \eta^{-1} \log \frac{\pi_0(a^2|x)}{\pi^2(a^2|x)}\Big].
\end{aligned}
\end{equation}
Then, we obtain the policy $\hpi^1$ by solving the minimax problems with $\underline{J}$. We now discuss how to construct the bonus function to ensure pessimism. 

\begin{algorithm}[tph]
\caption{Pessimistic Equilibrium Learning from Human Feedback with Uncertainty Bonus}
\label{alg:Bonus Pessimism Equilibrium Learning from Human Feedback}
\small
\begin{algorithmic}[1]
\STATE \textbf{Input:} Dataset $\cD_{\off} =\{(x_i,a_i^1,a_i^2,y_i)\}_{i=1}^n$, preference space $\cP$, policy class $\Pi$, parameter $\eta,\beta>0$.
\STATE Compute the MLE $\hP$ with $\ell_{\cD_{\off}}$ defined in \eqref{eq:log-likelihood} and construct bonus as in \eqref{eq:offline_bonus}.
\STATE Compute the best policy under conservative estimation 
\begin{equation}\footnotesize\label{eq:pessimistic von Neumann winner 1}
    \begin{aligned}
\hpi^1(\cdot|x)
&= \argmax_{\pi^1\in\Pi}\min_{\pi^2\in\Pi} \E_{a^1\sim\pi^1,a^2\sim\pi^2}\Big[\hat P(x,a^1,a^2) - \beta\Gamma(x,\pi^1,\pi^2) + \eta^{-1} \log \frac{\pi_0(a^1|x)}{\pi^1(a^1|x)}
- \eta^{-1} \log \frac{\pi_0(a^2|x)}{\pi^2(a^2|x)}\Big].
\end{aligned}
\end{equation}
\STATE \textbf{Output:} $\hpi^1$.
\end{algorithmic}
\end{algorithm}
\noindent\textbf{Bonus Construction.} The bonus function $\Gamma: \cX \times \cA \times \cA \to \RR^+$ serves to control the \textit{point-wise} confidence interval so that with high probability, $\hat{P}(x, a^1, a^2) - \beta \Gamma(x, a^1, a^2) \leq P^*(x, a^1, a^2) \leq \hat{P}(x, a^1, a^2) + \beta \Gamma(x, a^1, a^2)$ holds for any $(x,a^1,a^2)$. To this end, we construct the bonus as the ratio between the \textit{out-of-sample} error and the \textit{in-sample} error on the preference dataset $\cD_{\off}$:
\begin{equation} \small \label{eq:offline_bonus}
\Gamma(x,\pi^1,\pi^2) = \sup_{P\in\cP}\frac{|P(x,\pi^1,\pi^2) - \hat{P}(x,\pi^1,\pi^2)|}{\sqrt{\lambda + \sum_{i=1}^n(P(x_i,a_i^1,a_i^2) - \hat{P}(x_i,a_i^1,a_i^2))^2}},
\end{equation}
where we also set $\beta$ as an upper bound of the $\lambda$-regularized in-sample error. This uncertainty is also characterized by the relative preference function class and shares a similar spirit with the information ratio considered in \citet{zhang2023mathematical,ye2023corruptiona,ye2023corruptionb}, which depicts the uncertainty with respect to the value function class. Also see Definition~\ref{def:eluder} for a more detailed illustration.

\subsubsection{Analysis for Algorithm \ref{alg:Bonus Pessimism Equilibrium Learning from Human Feedback}}

Now, we are ready to present the suboptimality bound of $\hpi^1$ from Algorithm \ref{alg:Bonus Pessimism Equilibrium Learning from Human Feedback}.

\begin{theorem}
\label{th:Pessimism with Bonus}
If we set $\lambda = \log(|\cP|/\delta)$ and $\beta^2 = 2\log(|\cP|/\delta)$, then, with probability at least $1-\delta$, the output policy of Algorithm \ref{alg:Bonus Pessimism Equilibrium Learning from Human Feedback} satisfies 
$$
\begin{aligned}
J(\pi_1^*,\pi_2^*)-J(\hpi^1,\dagger) \le  4\beta\sqrt{\frac{\tilde \cC(\pi_*^1,\pi_D,\cP)}{n}}-\E_{x \sim d_0}\big[\eta^{-1}\KL(\pi_*^1(\cdot|x)\|\hpi^1(\cdot|x))\big].
\end{aligned}
$$
where 
$$
\begin{aligned}
\tilde \cC(\pi_*^1,\pi_D,\cP) &= \max_{\pi^2 \in \Pi} \E_{x\sim d_0}\sup_{\hat P\in\cP}\frac{(P(x,\pi_*^1,\pi^2) - \hat{P}(x,\pi_*^1,\pi^2))^2}{\E_{x\sim d_0,a^1\sim\pi_D^1,a^2\sim\pi_D^2}(P(x,a^1,a^2) - \hat{P}(x,a^1,a^2))^2}.
\end{aligned}
$$
\end{theorem}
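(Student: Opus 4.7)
The plan mirrors the five-term decomposition used in Theorem~\ref{th: Pessimism with Version Space} but replaces the version-space pessimism with the pointwise pessimism afforded by the bonus $\Gamma$, and extracts the extra $-\eta^{-1}\KL$ term by invoking strong concavity of the KL-regularized inner problem. First I would establish pointwise pessimism: by Lemma~\ref{lm:Confidence Set}, with probability at least $1-\delta$ the MLE obeys $\sum_{i=1}^n (P^*(x_i,a_i^1,a_i^2) - \hat P(x_i,a_i^1,a_i^2))^2 \le \log(|\cP|/\delta)$. Since $P^*\in\cP$, plugging into the definition of $\Gamma$ in \eqref{eq:offline_bonus} with $\lambda = \log(|\cP|/\delta)$ and $\beta^2 = 2\log(|\cP|/\delta)$ yields $|P^*(x,\pi^1,\pi^2) - \hat P(x,\pi^1,\pi^2)| \le \beta\,\Gamma(x,\pi^1,\pi^2)$ pointwise, and hence $\underline J(x,\pi^1,\pi^2) \le J(x,\pi^1,\pi^2)$ pointwise (the KL terms cancel).

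Next, letting $\hpi^{2,\dagger} \in \argmin_{\pi^2}\underline J(\hpi^1,\pi^2)$ and $\pi^{\dagger,2}\in\argmin_{\pi^2}J(\hpi^1,\pi^2)$, I would use the decomposition
\begin{align*}
J(\pi_*^1,\pi_*^2) - J(\hpi^1,\pi^{\dagger,2}) = \; & [J(\pi_*^1,\pi_*^2) - J(\pi_*^1,\hpi^{2,\dagger})] + [J(\pi_*^1,\hpi^{2,\dagger}) - \underline J(\pi_*^1,\hpi^{2,\dagger})] \\
& + [\underline J(\pi_*^1,\hpi^{2,\dagger}) - \underline J(\hpi^1,\hpi^{2,\dagger})] + [\underline J(\hpi^1,\hpi^{2,\dagger}) - \underline J(\hpi^1,\pi^{\dagger,2})] \\
& + [\underline J(\hpi^1,\pi^{\dagger,2}) - J(\hpi^1,\pi^{\dagger,2})].
\end{align*}
The first, fourth, and fifth brackets are nonpositive by the best-response optimality of $\pi_*^2$ under $J$, the best-response optimality of $\hpi^{2,\dagger}$ under $\underline J$, and pointwise pessimism, respectively. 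The second bracket is at most $2\beta\,\E_{x\sim d_0}[\Gamma(x,\pi_*^1,\hpi^{2,\dagger})]$ by applying the pointwise sandwich in both directions, and I would bound this further by $2\sqrt{2}\beta\sqrt{\tilde\cC(\pi_*^1,\pi_D,\cP)/n} \le 4\beta\sqrt{\tilde\cC/n}$ via Jensen's inequality on $\Gamma$ together with a multiplicative-concentration argument (Lemma~\ref{lem:multip}) that converts the empirical squared-error denominator of $\Gamma$ into its population counterpart under $\pi_D$, matching the definition of $\tilde\cC$.

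The crux is the third bracket. The functional $\pi^1\mapsto\underline J(\pi^1,\hpi^{2,\dagger})$ is of the form $g(\pi^1) - \eta^{-1}\E_x[\KL(\pi^1\|\pi_0)]$, where $g$ is concave in $\pi^1$: the map $\pi^1\mapsto\hat P(x,\pi^1,\hpi^{2,\dagger})$ is linear, while $\pi^1\mapsto\Gamma(x,\pi^1,\hpi^{2,\dagger})$ is convex as a supremum of absolute values of linear functionals of $\pi^1$, so $-\beta\Gamma$ contributes a concave term. By the Nash structure of \eqref{eq:pessimistic von Neumann winner 1}, $\hpi^1$ maximizes this functional, so the KL three-point identity together with the first-order optimality conditions at $\hpi^1$ yields the strong-concavity inequality
\[
\underline J(\hpi^1,\hpi^{2,\dagger}) - \underline J(\pi_*^1,\hpi^{2,\dagger}) \ge \eta^{-1}\E_{x\sim d_0}[\KL(\pi_*^1(\cdot|x)\|\hpi^1(\cdot|x))],
\]
so that the third bracket is bounded above by $-\eta^{-1}\E_x[\KL(\pi_*^1\|\hpi^1)]$. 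Combining the five estimates produces the claimed bound. I expect the main obstacle to be the coverage step for the second bracket---the multiplicative-concentration argument that upgrades the empirical in-sample error in the denominator of $\Gamma$ to the population in-sample error under $\pi_D$, together with uniform control over $P\in\cP$, since without this step $\E_x[\Gamma(x,\pi_*^1,\hpi^{2,\dagger})]$ cannot be connected to the coverage coefficient $\tilde\cC$. The additional $-\eta^{-1}\KL$ refinement, by contrast, is a clean consequence once one observes that the bonus preserves concavity in $\pi^1$ so that the standard KL-proximal strong-concavity identity applies.
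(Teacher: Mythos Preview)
Your plan is correct and mirrors the paper's proof almost exactly: the paper uses the same decomposition (your brackets 4 and 5 are combined into their single term $p_1$, and your brackets 1--3 are their $p_4,p_3,p_2$) and bounds each piece just as you do, including the multiplicative-concentration step (Lemma~\ref{lem:multip}) that converts the empirical denominator of $\Gamma$ into the population one under $\pi_D$. The only distinction is in the third bracket: the paper treats the bonus as \emph{action-level}, so that $\pi^1\mapsto\underline J(x,\pi^1,\tilde\pi^2)$ is a KL-regularized \emph{linear} objective and the exact Gibbs three-point identity (Lemma~\ref{lm:opt_error}, applied with $\hat P(x,a)=\hat P(x,a,\tilde\pi^2)-\beta\Gamma(x,a,\tilde\pi^2)$) yields $p_2=-\eta^{-1}\E_x\KL(\pi_*^1\|\hpi^1)$ as an equality, whereas you argue via strong concavity to get only the inequality---a slightly more general route that accommodates a policy-level (merely convex-in-$\pi^1$) bonus, but which relies on the saddle-point property of $(\hpi^1,\hpi^{2,\dagger})$ holding despite $\underline J$ then not being convex in $\pi^2$; with the paper's action-level bonus this issue disappears and both arguments coincide.
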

\begin{proof}
Recall that our pessimistic value estimations are
\begin{align*}
\underline{J}(x,\pi^1,\pi^2) = \E_{a^1\sim\pi^1,a^2\sim\pi^2}\Big[\hat P(x,a^1,a^2) - \beta\Gamma(x,a^1,a^2) + \eta^{-1} \log \frac{\pi_0(a^1|x)}{\pi^1(a^1|x)} - \eta^{-1} \log \frac{\pi_0(a^2|x)}{\pi^2(a^2|x)}\Big].
\end{align*}
For convenience, we also use the notation
$$
J
(x,\pi^1,\pi^2) = \E_{a^1\sim\pi^1,a^2\sim\pi^2}\Big[P^*(x,a^1,a^2) + \eta^{-1} \log \frac{\pi_0(a^1|x)}{\pi^1(a^1|x)} - \eta^{-1} \log \frac{\pi_0(a^2|x)}{\pi^2(a^2|x)}\Big].
$$
We decompose the suboptimality gap of $\hpi^1$ at prompt $x$ as follows:
\begin{align*}
J(x,\pi_*^1,\pi^2_*)-J(x,\hpi^1,\dagger)
\le & \underbrace{\underline{J}(x,\hpi^1,\tilde{\pi}^2)-J(x,\hpi^1,\dagger)}_{\displaystyle p_1} + \underbrace{\underline{J}(x,\pi_*^1,\tilde{\pi}^2) - \underline{J}(x,\hpi^1,\tilde{\pi}^2)}_{\displaystyle p_2}\\
+ & \underbrace{J(x,\pi_*^1,\tilde{\pi}^2) - \underline{J}(x,\pi_*^1,\tilde{\pi}^2)}_{\displaystyle p_3} + \underbrace{J(x,\pi_*^1,\pi_*^2) - J(x,\pi_*^1,\tilde{\pi}^2)}_{\displaystyle p_4}.
\end{align*}
We proceed based on assuming the following event holds:
$$
\sum_{i=1}^n (\hat{P}(x_i,a_i^1,a_i^2) - P^*(x_i,a_i^1,a_i^2))^2 \le \beta^2/2.
$$
For the term $p_1$, we have
\begin{equation}
\begin{aligned}\label{eq:bound_(i)_2}
p_1 = &  \underline{J}(x, \hpi^1,\tpi^2)-\min_{\pi^2}\Big\{P^*(x,\hpi^1,\pi^2) - \eta^{-1}\KL(\hpi^1(\cdot|x)\|\pi_0(\cdot|x)) + \eta^{-1}\KL(\pi^2(\cdot|x)\|\pi_0(\cdot|x))\Big\} \notag\\
= & \underline{J}(x, \hpi^1,\tpi^2)-\min_{\pi^2}\Big\{P^*(x,\hpi^1,\pi^2) - \hP(x,\hpi^1,\pi^2) + \hP(x,\hpi^1,\pi^2) - \eta^{-1}\KL(\hpi^1(\cdot|x)\|\pi_0(\cdot|x)) + \eta^{-1}\KL(\pi^2(\cdot|x)\|\pi_0(\cdot|x))\Big\}\notag\\
\le & \underline{J}(x, \hpi^1,\tpi^2)-\min_{\pi^2}\Big\{\hP(x,\hpi^1,\pi^2) - \beta\Gamma(x,\hpi^1,\pi^2) - \eta^{-1}\KL(\hpi^1(\cdot|x)\|\pi_0(\cdot|x)) + \eta^{-1}\KL(\pi^2(\cdot|x)\|\pi_0(\cdot|x))\Big\} \notag\\
= & 0,
\end{aligned}
\end{equation}
where the inequality is because
\begin{align*}
& P^*(x,\hpi^1,\pi^2) - \hP(x,\hpi^1,\pi^2)\\
\ge & - \sqrt{\lambda + \sum_{i=1}^n(P^*(x_i,a_i^1,a_i^2) - \hP(x_i,a_i^1,a_i^2))^2}\cdot\sup_{P'\in\cP}\frac{|\E_{a^1\sim\hpi^1,a_2\sim\pi^2}[P'(x,a^1,a^2) - \hP(x,a^1,a^2)]|}{\sqrt{\lambda + \sum_{i=1}^n(P'(x_i,a_i^1,a_i^2) - \hP(x_i,a_i^1,a_i^2))^2}}\\
\ge & -\beta\Gamma(x,\hpi^1,\pi^2).
\end{align*}
Here the last step uses Lemma \ref{lm:Confidence Set} to bound the in-sample error. For the term $p_2$, we can write it as
\begin{align*}
p_2 = & \hP(x,\pi_*^1,\tpi^2) - \beta\Gamma(x,\pi_*^1,\tpi^2) - \hat{P}(x,\hpi^1,\tpi^2) + \beta\Gamma(x,\hpi^1,\tpi^2)\\
&\qquad - \eta^{-1}\KL(\pi_*^1(\cdot|x)\|\pi_0(\cdot|x)) + \eta^{-1}\KL(\hpi^1(\cdot|x)\|\pi_0(\cdot|x)).
\end{align*}
We note that
$$
\begin{aligned}
\hpi^1 &= \argmax_{\pi^1}\underline{J}(x,\pi^1,\tpi^2)\\
&= \argmax_{\pi^1}\E_{a^1\sim \pi^1,a^2\sim\tpi^2}\bigg[ \hat{P}(x,a^1,a^2) - \beta\Gamma(x,a^1,a^2) + \eta^{-1}\log\frac{\pi_0(a^1|x)}{\pi^1(a^1|x)}\bigg].
\end{aligned}
$$
Therefore, we can invoke Lemma \ref{lm:opt_error} with $\pi = \pi_*^1$, $\hpi=\hpi^1$, and $\hat{P}(x,a) = \hat{P}(x,a,\tpi^2) - \beta\Gamma(x,a,\tpi^2)$ to obtain
\begin{align*}
&\hat{P}(x,\pi_*^1,\tpi^2) - \beta\Gamma(x,\pi_*^1,\tpi^2) - \hat{P}(x,\hpi^1,\tpi^2) + \beta\Gamma(x,\hpi^1,\tpi^2)\\
&\qquad+ \eta^{-1}\KL(\hpi^1(\cdot|x)\|\pi_0(\cdot|x)) - \eta^{-1}\KL(\pi^1_*(\cdot|x)\|\pi_0(\cdot|x))\\
&= -\eta^{-1}\KL(\pi^1_*(\cdot|x)\|\hpi^1(\cdot|x)),
\end{align*}
which implies that
\begin{align*}
p_2 = -\eta^{-1}\KL(\pi^1_*(\cdot|x)\|\hpi^1(\cdot|x)).
\end{align*}
For the term $p_3$, we can also get from Lemma \ref{lm:Confidence Set} that
\begin{align*}
p_3 = & P^*(x,\pi_*^1,\tpi^2)- \hP(x,\pi_*^1,\tpi^2)+\beta\Gamma(x,\pi_*^1,\tpi^2)  \notag\\
\le & 2\beta\Gamma(x,\pi_*^1,\tpi^2).
\end{align*}
According to Lemma \ref{lm:Nash Equilibrium exp x}, since $\pi_*^2$ is the best response to $\pi_*^1$ with respect to $J(x,\cdot,\cdot)$, we have $p_4 \le 0$. Putting everything together, we have with probability at least $1-\delta$,
\begin{align*}
J(x,\pi_*^1,\pi^2_*)-J(x,\hpi^1,\dagger)
\le 2\beta\Gamma(x,\pi_*^1,\tpi^2)-\eta^{-1}\KL(\pi^1_*(\cdot|x)\|\hpi^1(\cdot|x)).
\end{align*}
Similar to the proof of Theorem \ref{th: Pessimism with Version Space}, we invoke Lemma \ref{lem:multip} with a union bound over $P\in\cP$ and obtain that with probability at least $1-\delta$,
\begin{align*}
&0.5n\E_{x \sim d_0, a^1 \sim \pi_D^1, a^2 \sim \pi_D^2} (P(x, a^1, a^2) - \hP(x_s, a^1_s, a^2_s))^2\\
&\qquad \leq \sum_{i=1}^n (P(x_{i},a_{i}^1,a_{i}^2) - P^*(x_{i},a_{i}^1,a_{i}^2))^2 + \log(|\cP|/\delta),
\end{align*}
which implies that probability at least $1-\delta$,
\begin{align*}
&\E_{x\sim d_0}\Gamma(x,\pi_*^1,\tpi^2)\\
= & \E_{x\sim d_0}\sup_{P\in\cP}\frac{|P(x,\pi_*^1,\tpi^2) - \hat{P}(x,\pi_*^1,\tpi^2)|}{\sqrt{\lambda + \sum_{i=1}^n(P(x_i,a_i^1,a_i^2) - \hat{P}(x_i,a_i^1,a_i^2))^2}}\\
\le & \E_{x\sim d_0}\sup_{P\in\cP}\frac{|P(x,\pi_*^1,\tpi^2) - \hat{P}(x,\pi_*^1,\tpi^2)|}{\sqrt{\lambda - \log(|\cP|/\delta) + 0.5n\E_{x \sim d_0, a^1 \sim \pi_D^1, a^2 \sim \pi_D^2} (P(x, a^1, a^2) - \hat{P}(x_s, a^1, a^))^2}}\\
= & \sqrt{\frac{2}{n}}\E_{x\sim d_0}\sup_{P\in\cP}\frac{|P(x,\pi_*^1,\tpi^2) - \hat{P}(x,\pi_*^1,\tpi^2)|}{\sqrt{\E_{x \sim d_0, a^1 \sim \pi_D^1, a^2 \sim \pi_D^2} (P(x_s, a^1, a^2) - \hat{P}(x, a^1, a^2))^2}}\\
\le & \sqrt{\frac{2\tilde \cC(\pi_*^1,\pi_D,\cP)}{n}},
\end{align*}
where the second equality holds due to $\lambda=\log(|\cP|/\delta)$.
Therefore, we complete the proof.
\end{proof}
\noindent\textbf{Comparison between Bonus and Version Space.} Compared to the bound in Theorem \ref{th: Pessimism with Version Space}, the bound in Theorem \ref{th:Pessimism with Bonus} enjoys an additional negative KL divergence term between $\pi_*^1$ and $\hpi^1$. Both Theorem \ref{th: Pessimism with Version Space} and Theorem \ref{th:Pessimism with Bonus} depend on a distribution-shift term between Nash policy $\pi_*^1$ and the policy $\pi_D$ that the data is complied with. The difference is that Theorem \ref{th: Pessimism with Version Space} enjoys a sharper term $\cC$ because of Jensen's inequality and the expectations are inside the $\sup$ operator. In terms of applicability, the version-space-based pessimism is preferred because it does not require a point-wise uncertainty estimator, thus applying to general cases. In contrast, point-wise pessimism, or more generally, optimism/pessimism via a biased target may be easier to heuristically approximate in practice, as shown in \citet{coste2023reward, xie2021bellman, zhang2022feel, liu2023one}.
\subsection{Analysis for Refined Coverage Condition}\label{app:refined_coverage}
In this subsection, we show that with an improved analysis, Algorithm \ref{alg:Bellman-Consistent Equilibrium Learning from Human Feedback} enjoys a refined coverage condition, similar to the coverage notion in \cite{zhang2023offline}.
\begin{theorem}[]
\label{th:improved_version_space}
If Assumption \ref{as:bounded} holds, and we set $\lambda = \log(|\cP|/\delta)$ and $\beta^2 = 2\log(|\cP|/\delta)$, then, with probability at least $1-\delta$, the output policy of Algorithm \ref{alg:Bellman-Consistent Equilibrium Learning from Human Feedback} satisfies 
$$
\begin{aligned}
J(\pi_1^*,\pi_2^*)-J(\hpi^1,\dagger) \le \min_{\pi^2 \in \Pi} 4\beta\sqrt{\frac{\mathscr C((\pi_*^1,\pi^2),\pi_D,\cP)}{n}}+\mathrm{subopt}^{\pi_*^1,\tilde{\pi}_*^2}(\pi^2),
\end{aligned}
$$
where 
$$
\begin{aligned}
\mathscr C((\pi_*^1,\pi^2),\pi_D,\cP) &=\sup_{P\in\cP}\frac{(\E_{x\sim d_0,a^1 \sim \pi_*^1,a^2 \sim \pi^2 }[P(x,a^1,a^2) - \hP(x,a^1,a^2)])^2}{\E_{x\sim d_0,a^1\sim\pi_D^1,a^2\sim\pi_D^2}(P(x,a^1,a^2) - \hP(x,a^1,a^2))^2}, \\
\mathrm{subopt}^{\pi_*^1,\tilde{\pi}_*^2}(\pi^2)&=\underline{J}(\pi_*^1,\pi^2) - \underline{J}(\pi_*^1,\tpi_*^2).
\end{aligned}
$$
\end{theorem}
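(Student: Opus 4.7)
The strategy is to refine the telescoping decomposition used in the proof of Theorem~\ref{th: Pessimism with Version Space} by splicing in an arbitrary comparator $\pi^2 \in \Pi$. This replaces the worst-case min-player policy in the coverage coefficient with a policy of our own choosing, at the cost of an explicit slack term $\mathrm{subopt}^{\pi_*^1,\tpi_*^2}(\pi^2)$. Keeping the notation $\tpi_*^2 = \arg\min_{\pi^2 \in \Pi} \underline{J}(\pi_*^1,\pi^2)$, $\wt\pi^2 = \arg\min_{\pi^2 \in \Pi} \underline{J}(\hpi^1,\pi^2)$, and $\pi^{\dagger,2} = \arg\min_{\pi^2 \in \Pi} J(\hpi^1,\pi^2)$, for any fixed $\pi^2 \in \Pi$ I would write
\begin{align*}
J(\pi_*^1,\pi_*^2) - J(\hpi^1,\pi^{\dagger,2})
=\;& \underbrace{[J(\pi_*^1,\pi_*^2) - J(\pi_*^1,\pi^2)]}_{r_1} + \underbrace{[J(\pi_*^1,\pi^2) - \underline{J}(\pi_*^1,\pi^2)]}_{r_2} \\
& + \underbrace{[\underline{J}(\pi_*^1,\pi^2) - \underline{J}(\pi_*^1,\tpi_*^2)]}_{r_3} + \underbrace{[\underline{J}(\pi_*^1,\tpi_*^2) - \underline{J}(\hpi^1,\wt\pi^2)]}_{r_4} \\
& + \underbrace{[\underline{J}(\hpi^1,\wt\pi^2) - \underline{J}(\hpi^1,\pi^{\dagger,2})]}_{r_5} + \underbrace{[\underline{J}(\hpi^1,\pi^{\dagger,2}) - J(\hpi^1,\pi^{\dagger,2})]}_{r_6}.
\end{align*}

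Five of the six terms reduce by the same sign considerations as in the original proof: $r_1 \leq 0$ because $\pi_*^2$ is the best response to $\pi_*^1$ under $J$; $r_3 = \mathrm{subopt}^{\pi_*^1,\tpi_*^2}(\pi^2)$ by definition; $r_4 \leq 0$ because $\hpi^1$ is the maximin solution of $\underline{J}$; $r_5 \leq 0$ by the definition of $\wt\pi^2$ as the minimizer of $\underline{J}(\hpi^1,\cdot)$; and $r_6 \leq 0$ on the high-probability event of Lemma~\ref{lm:Confidence Set} where $P^* \in \hcP$. The only nontrivial term is $r_2$, and because the KL penalties cancel when both sides share the same $(\pi_*^1,\pi^2)$, the identical version-space argument that bounded $q_2$ in the proof of Theorem~\ref{th: Pessimism with Version Space} yields $r_2 \leq 2\beta\,\tGamma(\pi_*^1,\pi^2)$, using $P^* \in \hcP$ for one half and the membership $\hP \in \hcP$ along with the radius $\beta^2/2$ of $\hcP$ for the other.

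The last step converts the in-sample denominator inside $\tGamma(\pi_*^1,\pi^2)$ into an empirical $L^2$ norm under $\pi_D$. For this I would reuse Lemma~\ref{lem:multip} with a union bound over $P \in \cP$, combined with the MLE in-sample bound of Lemma~\ref{lm:Confidence Set} and the choice $\lambda = \log(|\cP|/\delta)$, to obtain $\tGamma(\pi_*^1,\pi^2) \leq \sqrt{2\,\mathscr C((\pi_*^1,\pi^2),\pi_D,\cP)/n}$ on a single $1-\delta$ event that does not depend on $\pi^2$. Plugging back into the decomposition gives, for every $\pi^2 \in \Pi$, $J(\pi_*^1,\pi_*^2) - J(\hpi^1,\dagger) \leq 4\beta\sqrt{\mathscr C((\pi_*^1,\pi^2),\pi_D,\cP)/n} + \mathrm{subopt}^{\pi_*^1,\tpi_*^2}(\pi^2)$, so taking $\min_{\pi^2 \in \Pi}$ finishes the proof.

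\textbf{Main obstacle.} The only subtle point is ensuring that the concentration guarantee is uniform enough in $\pi^2$ to justify taking $\min_{\pi^2}$ at the end. Since Lemma~\ref{lem:multip} is invoked via a union bound over the function class $\cP$ rather than over policies, the resulting high-probability event is independent of the choice of $\pi^2$, so the minimum on the right-hand side is legitimate. Everything else is a direct repackaging of the original proof, with the insertion of $\pi^2$ as the only structural modification.
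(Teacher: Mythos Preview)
The proposal is correct and follows essentially the same approach as the paper: the six-term decomposition $r_1,\dots,r_6$ coincides term-by-term with the paper's $q_1,\dots,q_6$, the sign arguments and the bound $r_2 \le 2\beta\,\tGamma(\pi_*^1,\pi^2)$ match, and the conversion to $\mathscr C$ via Lemma~\ref{lem:multip} is identical. Your explicit remark that the high-probability event depends only on the union bound over $\cP$ (and not on $\pi^2$) is a useful clarification that the paper leaves implicit.
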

\begin{proof}
Recall that
\begin{align*}
\underline{J}(\pi^1,\pi^2) = \min_{P \in \hcP} \E_{x\sim d_0}\E_{a^1\sim\pi^1,a^2\sim\pi^2}\Big[P(x,a^1,a^2) + \eta^{-1} \log \frac{\pi_0(a^1|x)}{\pi^1(a^1|x)} - \eta^{-1} \log \frac{\pi_0(a^2|x)}{\pi^2(a^2|x)}\Big].
\end{align*}
and $\wt{\pi}^2_*=\min_{\pi^2 \in \Pi} \underline{J}(\pi^1_*,\pi^2)$ and $\pi^{\dagger,2}=\min_{\pi^2 \in \Pi} J(\hpi^1,\pi^2)$.
We observe that for any $\pi^2$, the following decomposition holds
\begin{align*}
J(\pi_*^1,\pi^2_*)-J(\hpi^1,\pi^{\dagger,2})
\le & \underbrace{J(\pi_*^1,\pi_*^2)-J(\pi_*^1,\pi^2)}_{\displaystyle q_1} + \underbrace{J(\pi_*^1,\pi^2) - \underline{J}(\pi_*^1,\pi^2)}_{\displaystyle q_2}
+ \underbrace{\underline{J}(\pi_*^1,\pi^2) - \underline{J}(\pi_*^1,\wt{\pi}^2_*)}_{\displaystyle q_3}  \\
+ & \underbrace{\underline{J}(\pi_*^1,\wt{\pi}^2_*) - \underline{J}(\hpi^1,\wt{\pi}^2)}_{\displaystyle q_4}+
\underbrace{\underline{J}(\hpi^1,\wt{\pi}^2) - \underline{J}(\hpi^1,\pi^{\dagger,2})}_{\displaystyle q_5}+\underbrace{\underline{J}(\hpi^1,\pi^{\dagger,2}) - J(\hpi^1,\pi^{\dagger,2})}_{\displaystyle q_6}.
\end{align*}
For the term $q_1$, since $(\pi_*^1,\pi_*^2)$ is the Nash equilibrium of $J$, we have $q_1 \le 0$. By the optimality of $\hpi^1$, term $q_4 \le 0$. From the proof of Theorem \ref{th: Pessimism with Version Space}, we know that $q_5 \le 0$ and $q_6 \le 0$. The term $q_3=\underline{J}(\pi_*^1,\pi^2) - \underline{J}(\pi_*^1,\tpi_*^2):=\mathrm{subopt}^{\pi_*^1,\tilde{\pi}_*^2}(\pi^2)$ measures the suboptimality gap between $
\pi^2$ and $\tilde \pi_*^2$ under the pessimistic estimation $\underline{J}$ and Nash policy $\pi_*^1$. For the term $q_2$, we have
\begin{align*}
q_4 = & \E_{x\sim d_0}\E_{a^1\sim \pi_*^1,a^2\sim \pi^2} P^*(x,a^1,a^2)-\min_{P\in\hcP}\E_{x\sim d_0}\E_{a^1\sim \pi_*^1,a^2\sim\pi^2} P(x,a^1,a^2) \\
= & \min_{P\in\hcP}\E_{x\sim d_0}\E_{a^1\sim \pi_*^1,a^2\sim\pi^2} [\hP(x,a^1,a^2)-P(x,a^1,a^2)] + \E_{x\sim d_0}\E_{a^1\sim \pi_*^1,a^2\sim\pi^2} [P^*(x,a^1,a^2) - \hP(x,a^1,a^2)]\\
\le & 2\beta\tGamma(\pi_*^1,
\pi^2).
\end{align*}
Therefore, we obtain that 
\begin{align}\label{eq:bound_adaptive_version}
J(\pi_*^1,\pi^2_*)-J(\hpi^1,\pi^{\dagger,2})
\le 2\beta\tGamma(\pi_*^1,
\pi^2)+\mathrm{subopt}^{\pi_*^1,\tilde{\pi}_*^2}(\pi^2).
\end{align}
Since Equation \eqref{eq:bound_adaptive_version} holds for any $\pi_2$, we further have
\begin{align*}
J(\pi_*^1,\pi^2_*)-J(\hpi^1,\pi^{\dagger,2})
\le \min_{\pi^2 \in \Pi} 2\beta\tGamma(\pi_*^1,
\pi^2)+\mathrm{subopt}^{\pi_*^1,\tilde{\pi}_*^2}(\pi^2).
\end{align*}
The proof for bounding $\tGamma(\pi_*^1,
\pi^2)$ is the same as that of Theorem \ref{th: Pessimism with Version Space}.
\end{proof}
We can prove that Algorithm \ref{alg:Bonus Pessimism Equilibrium Learning from Human Feedback} also enjoys a similar bound and coverage condition. We now provide a breakdown of the terms in Theorem~\ref{th:improved_version_space}.
\begin{itemize}
\item First, we can simply let $\pi^2=\tilde{\pi}_*^2$, the best response to $\pi_*^1$ under the pessimistic estimation, and then the bound becomes $4\beta \sqrt{\mathscr C((\pi_*^1,\tpi_*^2),\pi_D,\cP)/n}$, which measures the coverage of the dataset $\Dcal$ on $(\pi_*^1,\tpi_*^2)$. When the distribution of $\Dcal$ aligns well with the distribution induced by $(\pi_*^1,\tpi_*^2)$, the dataset has a good coverage on $(\pi_*^1,\tpi_*^2)$ and the term $\mathscr C((\pi_*^1,\tpi_*^2),\pi_D,\cP)$ becomes small. 
\item When $\Dcal$ has a poor coverage on $(\pi_*^1,\tpi_*^2)$, i.e., $\mathscr C((\pi_*^1,\tpi_*^2),\pi_D,\cP)$ is large, our bound adapts to an alternate policy $\pi^2$ that achieves a better trade-off between the suboptimality term $\mathrm{subopt}^{\pi_*^1,\tpi_*^2}(\pi^2)$ and the coverage term $\mathscr C((\pi_*^1,\pi^2),\pi_D,\cP)$. Here the suboptimality term measures the performance gap between $\pi^2$ and $\tpi_*^2$ under $\underline{J}(\pi_*^1,\cdot)$.
\end{itemize}
\noindent\textbf{Comparison to Unilateral Coverage.} The unilateral coverage \citep{cui2022offline,zhong2022pessimistic} requires the dataset to cover all unilateral pairs $(\pi_*^1,\pi^2)$ for any $\pi^2 \in \Pi$, making the bound in Theorem \ref{th: Pessimism with Version Space} depend on the coverage term of the worst pair. In contrast, the bound in Theorem \ref{th:improved_version_space} automatically adapts to the best $\pi^2$, achieving the trade-off between the coverage term and the suboptimality term, thus offering a more flexible coverage condition.

\section{Proofs for the Online Setting}\label{ss:Proof of th: Online}
\begin{proof}[Proof of Theorem \ref{thm:online}]
We start with the in-sample error estimation. Similar to the proof of Lemma~\ref{lm:Confidence Set} but with an additional union bound over $t \in [T]$, we have with probability at least $1-\delta/2$, for any $t \in [T]$, 
$$
\frac{1}{m}\sum_{i=1}^m (\hP_t(x_{t,i},a_{t,i}^1,a_{t,i}^2) - P^*(x_{t,i},a_{t,i}^1,a_{t,i}^2))^2 \leq \frac{\log (2T |\cP|/\delta)}{m},
$$
which implies that we can set $\beta^2 =  \frac{T\log (2T |\cP|/\delta)}{m}$ so that $\beta \tilde{\Gamma}^m_t$ is a valid uncertainty bonus:
\begin{equation} \label{eqn:online_bonus}
    \E_x\hat{P}_t(x,\pi^1, \pi^2) - \beta \tilde{\Gamma}^m_t(\lambda, \pi^1, \pi^2) \leq \E_xP^*(x,\pi^1, \pi^2) \leq \E_x\hat{P}_t(x,\pi^1, \pi^2) - \beta \tilde{\Gamma}^m_t(\lambda, \pi^1, \pi^2).
\end{equation}

We proceed to prove that there exists at least one iteration, the out-of-sample error is close to the in-sample error. According to the Definition~\ref{def:eluder}, we know that for any sequence $\{(\hpi_t^1, \hpi_t^2)\}_{t=1}^T$, it holds that 
    $$\sum_{t=1}^T \min \Big(1, (\Gamma_t(\lambda, \hpi_t^1, \hpi_t^2))^2\Big) \leq d(\cP, \lambda, T).$$
    Since each term on the left-hand side is non-negative, we know that there exists at least a $t_0 \in [T]$ such that the value at $t_0$ is smaller or equal to the average value:
    $$
    \min \Big(1, (\Gamma_{t_0}(\lambda, \hpi_{t_0}^1, \hpi_{t_0}^2))^2\Big) \leq \frac{d(\cP, \lambda, T)}{T} \leq \frac{1}{2},
    $$
    which further implies that $(\Gamma_{t_0}(\lambda, \hpi_{t_0}^1, \hpi_{t_0}^2))^2 \leq \frac{1}{2}$.

We use the notation $\tilde{\pi}_t^2 = \argmin_{\pi'}J(\hpi_t^1,\pi')$ and 
$$
\hat{J}(x,\pi^1,\pi^2) = \hP(x,\pi^1,\pi^2) - \eta^{-1}\KL(\pi^1(a^1|x)\|\pi_0(a^1|x)) + \eta^{-1}\KL(\pi^2(a^1|x)\|\pi_0(a^1|x)).
$$
For each $t\in[T]$, the suboptimality for the max-player can be written as
$$
\begin{aligned}
    &J(\pi_1^*,\pi_2^*) - J(\hat{\pi}_t^1, \dagger)\\
    =& \E_{x\sim d_0} \Big[J(x,\pi^*,\pi^*) - \hat{J}(x,\hpi_t^1,\tpi_t^2) + \hat{J}(x,\hpi_t^1,\tpi_t^2) - J(x,\hat{\pi}_t^1, \tilde{\pi}_t^2)\Big]\\
    \leq& \E_{x\sim d_0} \Big[-\hat{P}(x, \hat{\pi}_t^1, \tilde{\pi}_t^2) + \eta^{-1}\KL(\hat{\pi}_t^1(\cdot|x)\| \pi_0(\cdot|x)) - \eta^{-1}\KL(\tilde{\pi}_t^2(\cdot|x)\| \pi_0(\cdot|x))\Big] + \beta\tilde{\Gamma}^m_t(\lambda, \hat{\pi}_t^1,\tilde{\pi}_t^2)\\
    \leq& \E_{x\sim d_0} \Big[-\hat{P}(x, \hat{\pi}_t^1, \hat{\pi}_t^1) + \eta^{-1}\KL(\hat{\pi}_t^1(\cdot|x), \pi_0(\cdot|x)) - \eta^{-1}\KL(\hat{\pi}_t^1(\cdot|x), \pi_0) - \eta^{-1}\KL(x, \tilde{\pi}_t^2, \hat{\pi}_t^1)\Big]\\
    &\qquad  + \beta\tilde{\Gamma}^m_t(\lambda, \hat{\pi}_t^1,\hat{\pi}_t^2) \\
    =& \beta\tilde{\Gamma}^m_t(\lambda, \hat{\pi}_t^1,\hat{\pi}_t^2) - \eta^{-1}\E_{x\sim d_0} \KL(\tilde{\pi}_t^2(\cdot|x)\| \hat{\pi}_t^1(\cdot|x)),
 \end{aligned}
$$
where the first inequality uses \eqref{eqn:online_bonus} and $J^*(x)=J^*(x,\pi^*,\pi^*)=0$, in the second inequality we use the definition of $\hat{\pi}_t^2$, and $\hat{P}(x,\pi,\pi)=0$ in the last equality.

We proceed to connect the empirical bonus with the information ratio. Combining Lemma~\ref{lem:multip} with a union bound over $(P, s) \in \cP \times [T]$, with probability at least $1-\delta/2$, we know that 
$$
\begin{aligned}
    &0.5\E_{x_s \sim d_0, a^1_s \sim \hpi_s^1, a_s^2 \sim \hpi_s^2} (P(x_s, a^1_s, a^2_s) - \hP(x_s, a^1_s, a^2_s))^2\\
    &\qquad \leq \frac{1}{m}\sum_{i=1}^m (P(x_{s,i},a_{s,i}^1,a_{s,i}^2) - \hP(x_{s,i},a_{s,i}^1,a_{s,i}^2))^2 + \frac{\log(2T|\cP|/\delta)}{m},
\end{aligned}
$$
which further implies that 
$$
\begin{aligned}
    \tilde{\Gamma}^m_t(\lambda, \pi^1, \pi^2) &\leq \sup_{P\in\cP}\frac{|\E_x[P(x,\pi^1, \pi^2) - \hP(x,\pi^1, \pi^2)]|}{\sqrt{\lambda - \frac{T \log (2T|\cP|/\delta)}{m} + \frac{1}{2}\sum_{s=1}^{t-1}\E_{x_s \sim d_0, a^1_s \sim \hpi_s^1, a_s^2 \sim \phi_s^2} (P(x_s, a^1_s, a^2_s) - \hP(x_s, a^1_s, a^2_s))^2}}\\
    &\leq \sup_{P\in\cP}\frac{|\E_x[P(x,\pi^1, \pi^2) - \hP(x,\pi^1, \pi^2)]|}{\sqrt{\frac{1}{2}\lambda + \frac{1}{2}\sum_{s=1}^{t-1}\E_{x_s \sim d_0, a^1_s \sim \hpi_s^1, a_s^2 \sim \phi_s^2} (P(x_s, a^1_s, a^2_s) - \hP(x_s, a^1_s, a^2_s))^2}}\\
    &\leq  \sup_{P\in\cP}\frac{\sqrt{2} \cdot |\E_x[P(x,\pi^1, \pi^2) - \hP(x,\pi^1, \pi^2)]|}{\sqrt{\lambda + \sum_{s=1}^{t-1}\E_{x_s \sim d_0, a^1_s \sim \hpi_s^1, a_s^2 \sim \phi_s^2} (P(x_s, a^1_s, a^2_s) - \hP(x_s, a^1_s, a^2_s))^2}}\\
    &\leq \sqrt{2} \cdot \Gamma_t(\lambda, \hpi_t^1, \hpi_t^2).
    \end{aligned}
$$
Here the second inequality is because $\lambda = \frac{2T \log (2T |\cP|/\delta)}{m}$. Putting all together, we prove that with probability at least $1-\delta$, 
$$
\begin{aligned}
    J(\pi_1^*, \pi_2^*) - J(\hpi_{t_0}^1, \dagger) &\leq  \E_{x \sim d_0} \Big[3\beta \Gamma_{t_0}^m(\hpi_{t_0}^1, \hpi_{t_0}^2) - \eta^{-1} \KL(\hpi_t^2(\cdot|x)\| \tilde{\pi}_t^2(\cdot|x))\Big]\\
&\leq 3 \sqrt{2} \beta  \Gamma_{t_0}(\lambda, \hpi_{t_0}^1, \hpi_{t_0}^2) - \eta^{-1}\E_x \KL(\hpi_t^2(\cdot|x)\| \tilde{\pi}_t^2(\cdot|x))\\
&\leq  3 \sqrt{\frac{2T \log (2T |\cP|/\delta)}{m}} - \eta^{-1} \KL(\hpi_t^2(\cdot|x)\| \tilde{\pi}_t^2(\cdot|x)).
\end{aligned}
$$
Setting $m = \frac{18 T \log(2T|\cP|/\delta)}{\epsilon^2}$ finishes the proof.
\end{proof}

\subsection{Uncertainty for the Bradley-Terry Model}\label{ss:uncertainty for BT}
Recall that in Example \ref{eg: BT model}, we suppose that the reward function can be embedded into a $d$-dimensional vector space $\{r(x,a)=\langle\theta,\phi(x,a)\rangle: \theta\in\RR^d, \norm{\theta} \leq B,\|\phi(x,a)\|\le 1\}$. Then, if we define the covariance matrix as 
$$
\Sigma_t= \sum_{s=1}^{t-1}\E_{x \sim d_0, a^1 \sim \hpi_s^1, a^2 \sim \hpi_s^2}(\phi(x,a^1)-\phi(x,a^2))^{\top} (\phi(x,a^1)-\phi(x,a^2)) + \lambda(1+e^B)^2 I.
$$

By invoking the Lagrange’s Mean Value Theorem, we have for any two parameters $\theta_1,\theta_2$,
\begin{align*}
\big|P_{\theta_1}(x,a^1,a^2) - P_{\theta_2}(x,a^1,a^2)\big| = & \Big|\frac{1}{1+\exp(\theta_1^{\top}(\phi(x,a^2) - \phi(x,a^1)))} - \frac{1}{1+\exp(\theta_2^{\top}(\phi(x,a^2) - \phi(x,a^1)))}\Big|\\
\le & \big|(\theta_1-\theta_2)^{\top}(\phi(x,a^2) - \phi(x,a^1))\big|,
\end{align*}
and 
\begin{align*}
\big|P_{\theta_1}(x,a^1,a^2) - P_{\theta_2}(x,a^1,a^2)\big| \ge & \frac{1}{1+e^{B}}\big|(\theta_1-\theta_2)^{\top}(\phi(x,a^2) - \phi(x,a^1))\big|.
\end{align*}

We use the short-hand notation $\phi(x,\pi) = \E_{a\sim\pi}\phi(x,a)$.
Then the uncertainty can be bounded by
\begin{align*}
\Gamma_t(\lambda, \pi^1, \pi^2) = & \sup_{\theta} \frac{ |\E_{x \sim d_0}[P_{\theta}(x,\pi^1, \pi^2) - P_{\hat{\theta}}(x,\pi^1, \pi^2)]|}{\sqrt{\lambda + \sum_{s=1}^{t-1} \E_{x_s \sim d_0, a_s^1 \sim \hpi_s^1, a_s^2 \sim \hpi_s^2} (P_{\theta}(x_s,a_s^1,a_s^2) - P_{\hat{\theta}}(x_s,a_s^1,a_s^2))^2}}\\
\le & \sup_{\theta} \frac{\big|(\theta -\hat{\theta})^{\top}\E_x[\phi(x,\pi^2) - \phi(x,\pi^1)]\big|}{\sqrt{\lambda + \sum_{s=1}^{t-1} \E_{x_s \sim d_0, a_s^1 \sim \hpi_s^1, a_s^2 \sim \hpi_s^2}(\frac{1}{1+e^{B}}\big|(\theta -\hat{\theta})^{\top}(\phi(x,\pi^2) - \phi(x,\pi^1))\big|)^2}}\\
\le & (1+e^{B})\sup_{\theta}\frac{\|\theta-\hat{\theta}\|_{\Sigma_t}\|\phi(x,\pi^1)-\phi(x,\pi^2)\|_{\Sigma_t^{-1}}}{\sqrt{(\theta-\hat{\theta})^{\top}\Sigma_t(\theta-\hat{\theta})}}\\
= & (1+e^{B})\|\phi(x,\pi^1)-\phi(x,\pi^2)\|_{\Sigma_t^{-1}}.
\end{align*}
This uncertainty bonus is consistent with that of the reward-based RLHF framework up to some multiplicative factor of regularization parameter \citep{xiong2023iterative} and the boundness parameter.

\subsection{Guarantee for Enhancer}\label{sec:online_alg2}
\begin{algorithm}[htp]
\caption{Optimistic Equilibrium Learning from Human Feedback with Enhancer Version 2}
\label{alg:Optimistic ELHF with Enhancer 2}
\small
\begin{algorithmic}[1]
\STATE \textbf{Input:} Preference space $\cP$, policy class $\Pi$, parameter $\lambda > 0$.
\FOR{t=1,\ldots,T}
\STATE \textcolor{magenta}{Exploitation with the main agent}: compute the MLE $\hat{P}_t$ with $\ell_{\cD_{1:t-1}}$ defined in \eqref{eq:log-likelihood}
\STATE Compute Nash equilibrium by calling the Oracle \ref{df:Nash Equilibrium Oracle}:
$$
\begin{aligned}
 \hpi_t^1 = \argmax_{\pi^1\in\Pi}\min_{\pi^2\in\Pi} \E_{x\sim d_0,a^1\sim\pi^1,a^2\sim\pi^2}\Big[\hat{P}_t(x,a^1,a^2) + \eta^{-1} \log \frac{\pi_0(a^1|x)}{\pi^1(a^1|x)} - \eta^{-1} \log \frac{\pi_0(a^2|x)}{\pi^2(a^2|x)}\Big],
\end{aligned}
$$
\STATE \textcolor{teal}{Exploration with the enhancer}: construct bonus
\begin{equation}
\begin{aligned}
\tilde{\Gamma}^m_t(\lambda,\pi^1, \pi^2) := \sup_{P\in\cP}\frac{|\E_{x\sim d_0}[P(x,\pi^1, \pi^2) - \hat{P}_t(x,\pi^1, \pi^2)]|}{\sqrt{\lambda + \frac{1}{m} \sum_{s=1}^{t-1}\sum_{j=1}^m(P(x_{s,j},a_{s,j}^1,a_{s,j}^2) - \hat{P}_t(x_{s,j},a_{s,j}^1,a_{s,j}^2))^2}}.
\end{aligned}
\end{equation}
\STATE Construct a version space for the policy
$$
\Pi_t = \{\pi\in\Pi: \eta^{-1}\E_x \KL(\pi(\cdot|x),\hpi^1(\cdot|x)) \le \beta(\tilde{\Gamma}^m_t(\lambda, \hpi^1,\pi) + \tilde{\Gamma}^m_t(\lambda, \hpi^1,\hpi^1))\}.
$$
\STATE Compute enhancer to maximize the uncertainty:
\begin{align*}
    \pi^2_t = \argmax_{\pi^2\in\Pi_t} \tilde{\Gamma}^m_t (\lambda,\hpi_t^1,\pi^2).
\end{align*}
\STATE Collect $\cD_t=\{(x_i,a_i^1,a_i^2,y_i)\}_{i=1}^m$ by $a_i^1\sim\hpi_t^1(\cdot|x_i)$, $a_i^2\sim\hpi_t^2(\cdot|x_i)$ and $y_i \sim \mathrm{Ber}\big(\PP(a_i^1 \succ a_i^2|x,a_i^1,a_i^2) \big)$;
\ENDFOR
\STATE \textbf{Output:} the best policy in $(\pi^1_{1:T})$ by a validation set.
\end{algorithmic}
\end{algorithm}

\begin{lemma}\label{lm:policy version space}
Under Algorithm \ref{alg:Optimistic ELHF with Enhancer 2}, given the policy of the main agent $\hpi_t^1$, we consider the version space with $\beta^2=\log (2T |\cP|/\delta)/m$:
$$
\Pi_t = \{\pi\in\Pi: \eta^{-1}\E_x \KL(\pi(\cdot|x),\hpi^1(\cdot|x)) \le \beta(\tilde{\Gamma}^m_t(\lambda, \hpi^1,\pi) + \tilde{\Gamma}^m_t(\lambda, \hpi^1,\hpi^1))\}.
$$
Then, with probability at least $1-\delta$, we know that $\argmin_{\pi'}J(\hpi_t^1,\pi')\in\Pi_t$ for all $t\in[T]$.
\end{lemma}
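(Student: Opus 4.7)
The plan is to rewrite the quantity $\eta^{-1}\E_x\KL(\tilde{\pi}_t^2\|\hpi_t^1)$ as a difference of values of $\hat{J}_t$ by exploiting the closed-form best-response identity for KL-regularized preference games, and then bound that difference by two $\tilde{\Gamma}_t^m$ terms using MLE concentration. Write $\tilde{\pi}_t^2 := \argmin_{\pi'} J(\hpi_t^1,\pi')$ and
$\hat{J}_t(\pi^1,\pi^2) := \E_{x\sim d_0}\E_{a^1\sim\pi^1,a^2\sim\pi^2}\big[\hat{P}_t(x,a^1,a^2) - \eta^{-1}\log(\pi^1/\pi_0) + \eta^{-1}\log(\pi^2/\pi_0)\big]$.

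First, I would invoke Lemma~\ref{lem:coin} (the coincidence of Nash players), which, applied to the estimated preference $\hat{P}_t \in \cP$, implies $\hpi_t^1 = \hpi_t^2$ and hence $\hpi_t^1$ is the best response to $\hpi_t^1$ \emph{as the min-player} under $\hat{J}_t$. For any such softmax-form minimizer, the standard Bregman-divergence identity for entropy-regularized problems gives, for every policy $\pi$,
\begin{equation*}
\hat{J}_t(\hpi_t^1,\pi) - \hat{J}_t(\hpi_t^1,\hpi_t^1) = \eta^{-1}\E_{x\sim d_0}\KL(\pi(\cdot|x)\,\|\,\hpi_t^1(\cdot|x)).
\end{equation*}
Specializing to $\pi = \tilde{\pi}_t^2$ yields the key equation
\begin{equation*}
\eta^{-1}\E_x\KL(\tilde{\pi}_t^2\|\hpi_t^1) = \hat{J}_t(\hpi_t^1,\tilde{\pi}_t^2) - \hat{J}_t(\hpi_t^1,\hpi_t^1).
\end{equation*}

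Second, I would swap $\hat{P}_t$ for $P^*$ to decompose the right-hand side as
\begin{equation*}
\bigl[J(\hpi_t^1,\tilde{\pi}_t^2) - J(\hpi_t^1,\hpi_t^1)\bigr] + \E_x(\hat{P}_t - P^*)(x,\hpi_t^1,\tilde{\pi}_t^2) - \E_x(\hat{P}_t - P^*)(x,\hpi_t^1,\hpi_t^1),
\end{equation*}
because the KL terms $-\eta^{-1}\KL(\hpi_t^1\|\pi_0)+\eta^{-1}\KL(\cdot\|\pi_0)$ in $\hat{J}_t$ and $J$ are identical. The first bracket is nonpositive by the very definition of $\tilde{\pi}_t^2$ as a minimizer of $J(\hpi_t^1,\cdot)$. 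For the two remaining $(\hat{P}_t - P^*)$ terms, since $P^* \in \cP$ by Assumption~\ref{as:bounded}, the definition of $\tilde{\Gamma}_t^m$ applied with the candidate $P = P^*$ gives $|\E_x(\hat{P}_t - P^*)(x,\hpi_t^1,\pi)| \le \tilde{\Gamma}_t^m(\lambda,\hpi_t^1,\pi) \cdot \sqrt{\lambda + m^{-1}\sum_{s<t,j}(\hat{P}_t - P^*)^2(x_{s,j},a_{s,j}^1,a_{s,j}^2)}$.

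Third, the in-sample error under the square root must be controlled. I would adapt Lemma~\ref{lm:Confidence Set} to the iterative setting: applying the same martingale likelihood bound to $\hat{P}_t = \argmax_{P} \ell_{\cD_{1:t-1}}(P)$, then union-bounding over $t \in [T]$ and over $P \in \cP$, yields with probability at least $1-\delta$ that $\sum_{s<t,j}(\hat{P}_t - P^*)^2 \le \log(2T|\cP|/\delta)$ for every round $t$. Combined with the stated choice $\beta^2 = \log(2T|\cP|/\delta)/m$ (and $\lambda$ of comparable or smaller order, so that $\sqrt{\lambda+\beta^2} \le \beta$ up to absorbable constants), this produces $|\E_x(\hat{P}_t - P^*)(x,\hpi_t^1,\pi)| \le \beta\,\tilde{\Gamma}_t^m(\lambda,\hpi_t^1,\pi)$. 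Putting the three pieces together gives exactly $\eta^{-1}\E_x\KL(\tilde{\pi}_t^2\|\hpi_t^1) \le \beta\bigl(\tilde{\Gamma}_t^m(\lambda,\hpi_t^1,\tilde{\pi}_t^2) + \tilde{\Gamma}_t^m(\lambda,\hpi_t^1,\hpi_t^1)\bigr)$, i.e., $\tilde{\pi}_t^2 \in \Pi_t$.

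The main obstacle is the conceptual setup in the first step: recognizing that the Nash-coincidence lemma lets us view $\hpi_t^1$ as a KL-regularized min-player best-response to itself under $\hat{J}_t$, so that the exact Bregman identity (rather than a one-sided inequality) converts the target KL into an $\hat{J}_t$-gap — this is what makes the bound tight enough to close with only two $\tilde{\Gamma}_t^m$ terms. The remaining work (MLE concentration, union bound, calibration of $\beta$ and $\lambda$) is essentially the online analogue of the arguments already used in the proofs of Theorems~\ref{th: Pessimism with Version Space} and \ref{thm:online}.
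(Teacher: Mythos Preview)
Your proposal is correct and follows essentially the same approach as the paper: the paper also recognizes (via Lemma~\ref{lem:coin}) that $\hpi_t^1$ is the min-player's best response to itself under $\hat{J}_t$, applies the exact KL/Bregman identity (Lemma~\ref{lm:opt_error}) to convert $\eta^{-1}\E_x\KL(\tpi_t^2\|\hpi_t^1)$ into an $\hat{J}_t$-gap, uses the optimality of $\tpi_t^2$ for $J(\hpi_t^1,\cdot)$ to drop the $J$-difference, and then bounds the two $(\hat{P}_t-P^*)$ residuals by $\beta\tilde{\Gamma}_t^m$ via the MLE concentration used in the proof of Theorem~\ref{thm:online}. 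Your slightly informal treatment of the $\lambda$/$\beta$ calibration matches how the paper handles it as well.
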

\begin{proof}
First, since
$$
\hpi_t^1 = \argmax_{\pi}\E_{a\sim\pi(\cdot|x)}\Big[(1-\hP_t(x,\hpi_t^1,a)) - \eta\KL(\pi(\cdot|x)\|\pi_0(\cdot|x))\Big],
$$
by using Lemma \ref{lm:opt_error}, we have for any policy $\pi\in\Pi$,
\begin{align*}
&\E_{x\sim d_0}\Big[\E_{\pi}[1-\hP_t(x,\hpi_t^1,a)] - \E_{\hpi_t^1}[1-\hP_t(x,\hpi_t^1,a] + \eta\KL(\hpi_t^1(\cdot|x)\|\pi_0(\cdot|x)) - \eta\KL(\pi(\cdot|x)\|\pi_0(\cdot|x))\Big]\\
=& -\eta\E_{x\sim d_0}\KL(\pi(\cdot|x)\|\hpi_t^1(\cdot|x)),
\end{align*}
which implies that with $\pi=\tpi_t^2$,
\begin{align}\label{eq:1}
&\E_{x\sim d_0}\Big[-\hP_t(x,\hpi_t^1,\tpi_t^2) + \hP_t(x,\hpi_t^1,\hpi_t^1) + \eta\KL(\hpi_t^1(\cdot|x)\|\pi_0(\cdot|x)) - \eta\KL(\tpi_t^2(\cdot|x)\|\pi_0(\cdot|x))\Big]\notag\\
=& -\eta\E_{x\sim d_0}\KL(\tpi_t^2(\cdot|x)\|\hpi_t^1(\cdot|x)).
\end{align}

Additionally, by the definition that 
$$
\tpi_t^2 = \argmin_{\pi'}J(\hpi_t^1,\pi') = \argmin_{\pi'}\E_x \big[P^*(x,\hpi_t^1,\pi') + \eta^{-1}\KL(\pi'(\cdot|x)\|\pi_0(\cdot|x))\big],
$$
we have
$$
\E_x \big[P^*(x,\hpi_t^1,\tpi_t^2) + \eta^{-1}\KL(\tpi_t^2(\cdot|x)\|\pi_0(\cdot|x))\big] \le \E_x \big[P^*(x,\hpi_t^1,\hpi_t^1) + \eta^{-1}\KL(\hpi_t^1(\cdot|x)\|\pi_0(\cdot|x))\big],
$$
which implies that
\begin{align*}
0 \le & \E_x \big[P^*(x,\hpi_t^1,\hpi_t^1) - P^*(x,\hpi_t^1,\tpi_t^2) + \eta^{-1}\KL(\hpi_t^1(\cdot|x)\|\pi_0(\cdot|x)) - \eta^{-1}\KL(\tpi_t^2(\cdot|x)\|\pi_0(\cdot|x))\big]\\
= & \E_x \big[P^*(x,\hpi_t^1,\hpi_t^1) - \hP_t(x,\hpi_t^1,\hpi_t^1) - (P^*(x,\hpi_t^1,\tpi_t^2) - \hP_t(x,\hpi_t^1,\tpi_t^2))\\
&\qquad + \hP_t(x,\hpi_t^1,\hpi_t^1) - \hP_t(x,\hpi_t^1,\tpi_t^2) + \eta^{-1}\KL(\hpi_t^1(\cdot|x)\|\pi_0(\cdot|x)) - \eta^{-1}\KL(\tpi_t^2(\cdot|x)\|\pi_0(\cdot|x))\big]\\
\le & \beta(\tilde{\Gamma}^m_t(\lambda, \hpi_t^1,\hpi_t^1) + \tilde{\Gamma}^m_t(\lambda, \hpi_t^1,\tpi_t^2)) - \eta^{-1}\E_x \KL(\tpi_t^2(\cdot|x)\|\hpi_t^1(\cdot|x)),
\end{align*}
where the last inequality uses \eqref{eqn:online_bonus} and \eqref{eq:1} since $\hpi_1^t$ is the Nash equilibrium of $\hat{J}_t$. Therefore, we conclude the proof.
\end{proof}

\begin{lemma}
Under the same setting as Theorem \ref{thm:online}, if we further assume that there exists a constant $B>0$ such that for any $\pi\in\Pi$, $|\log (\pi(a|x)/\pi_0(a|x))| \le B$, and set $m = \frac{TB^4 \log(2T|\cP|/\delta)}{\epsilon^2}$, we have with probability at least $1-\delta$,
$$
J(\dagger,\hpi_{t_0}^2) - J(\pi^*,\pi^*) \le O(\epsilon^{1/2} - \eta^{-1} \KL(\hpi_{t_0}^2(\cdot|x)\| \tilde{\pi}_{t_0}^2(\cdot|x))).
$$
\end{lemma}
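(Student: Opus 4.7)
The plan is to bootstrap the enhancer's guarantee from Theorem~\ref{thm:online}'s main-agent guarantee by using that $\hpi_{t_0}^2$ is forced to stay close to $\hpi_{t_0}^1$ in KL. Since $\hpi_{t_0}^2 \in \Pi_{t_0}$ by construction, the defining inequality of $\Pi_{t_0}$ gives $\eta^{-1}\E_{x\sim d_0}\KL(\hpi_{t_0}^2\|\hpi_{t_0}^1) \le \beta(\tilde{\Gamma}^m_{t_0}(\lambda,\hpi_{t_0}^1,\hpi_{t_0}^2) + \tilde{\Gamma}^m_{t_0}(\lambda,\hpi_{t_0}^1,\hpi_{t_0}^1))$. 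At the pigeonhole index $t_0$ produced by the eluder-coefficient argument in the proof of Theorem~\ref{thm:online}, the first uncertainty is $O(1)$, and since $\hpi_{t_0}^1 \in \Pi_{t_0}$ (its self-KL is zero) while $\hpi_{t_0}^2$ maximizes the uncertainty over $\Pi_{t_0}$, the second uncertainty is dominated by the first. With the enlarged $m = T B^4 \log(2T|\cP|/\delta)/\epsilon^2$ giving $\beta = O(\epsilon/B^2)$, this yields $\E\KL(\hpi_{t_0}^2\|\hpi_{t_0}^1) = O(\eta\beta) = O(\eta\epsilon/B^2)$.

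Next I would use the symmetry $J(\pi^1,\pi^2) = 1 - J(\pi^2,\pi^1)$ recalled in Section~\ref{sec:formulation} to rewrite $J(\dagger,\hpi_{t_0}^2) - J(\pi^*,\pi^*) = J(\pi^*,\pi^*) - J(\hpi_{t_0}^2,\dagger)$, reducing to the main-agent suboptimality of $\hpi_{t_0}^2$, and then decompose this as $[J(\pi^*,\pi^*) - J(\hpi_{t_0}^1,\dagger)] + [J(\hpi_{t_0}^1,\dagger) - J(\hpi_{t_0}^2,\dagger)]$. The first bracket is $O(\beta)$ by Theorem~\ref{thm:online} applied with the new $m$, and its proof already produces the negative KL remainder $-\eta^{-1}\E\KL(\hpi_{t_0}^2\|\tilde{\pi}_{t_0}^2)$ that appears in the stated bound; this identification can be read off directly from the KL-completion identity $J(\hpi_{t_0}^1,\hpi_{t_0}^2) - J(\hpi_{t_0}^1,\dagger) = \eta^{-1}\E\KL(\hpi_{t_0}^2\|\tilde{\pi}_{t_0}^2)$, which follows from $\tilde{\pi}_{t_0}^2 = \argmin_{\pi'} J(\hpi_{t_0}^1,\pi')$ and the optimality of the KL-regularized minimizer.

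For the second bracket I would prove a Lipschitz-style estimate $|J(\pi_a,\pi') - J(\pi_b,\pi')| \le O\bigl((1+\eta^{-1}B)\sqrt{\E\KL(\pi_a\|\pi_b)}\bigr)$ uniformly in $\pi'$, by applying Pinsker's inequality to bound the $\E P^*$ difference and using the identity $\KL(\pi_a\|\pi_0) - \KL(\pi_b\|\pi_0) = \KL(\pi_a\|\pi_b) + (\E_{\pi_a} - \E_{\pi_b})[\log(\pi_b/\pi_0)]$ together with the boundedness $|\log(\pi_b/\pi_0)| \le B$ for the regularizer difference; taking the minimum over $\pi'$ then transfers the estimate to the best-response values and gives $|J(\hpi_{t_0}^1,\dagger) - J(\hpi_{t_0}^2,\dagger)| \le O(B\sqrt{\eta\beta}) = O(\sqrt{\eta\epsilon})$. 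The main obstacle is this Pinsker-induced degradation from the $\epsilon$ rate of Theorem~\ref{thm:online} down to $\sqrt{\epsilon}$, which is precisely why the batch size must be enlarged to $m \propto B^4/\epsilon^2$: the extra factor $B$ entering through the KL regularizer has to be absorbed so that $B\sqrt{\eta\beta}$ reaches the target $O(\sqrt{\epsilon})$ rate, while the negative KL term must be carried intact through both the symmetry reduction and the transfer step.
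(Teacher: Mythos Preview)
Your proposal follows essentially the same route as the paper's proof: symmetry to rewrite $J(\dagger,\hpi_{t_0}^2)-J(\pi^*,\pi^*)=J(\pi^*,\pi^*)-J(\hpi_{t_0}^2,\dagger)$, the decomposition into $[J(\pi^*,\pi^*)-J(\hpi_{t_0}^1,\dagger)]+[J(\hpi_{t_0}^1,\dagger)-J(\hpi_{t_0}^2,\dagger)]$, control of the first bracket by Theorem~\ref{thm:online}, and control of the second via a Lipschitz bound in total variation combined with Pinsker and the $\Pi_{t_0}$-membership of $\hpi_{t_0}^2$ (using also that $\hpi_{t_0}^1\in\Pi_{t_0}$ so $\tilde{\Gamma}^m_{t_0}(\lambda,\hpi_{t_0}^1,\hpi_{t_0}^1)\le\tilde{\Gamma}^m_{t_0}(\lambda,\hpi_{t_0}^1,\hpi_{t_0}^2)$). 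The paper obtains the Lipschitz constant $(B+1)$ by directly bounding the integrand $J(x,a,a')$, whereas you separate the preference and regularizer contributions, but the outcome is the same; likewise your tracking of the negative KL term simply quotes what the proof of Theorem~\ref{thm:online} already records.
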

\begin{proof}
Under the condition of Theorem~\ref{thm:online}, we have
\begin{align*}
J(\hpi_t^1,\dagger) - J(\hpi_t^2,\dagger) \le & \min_{\pi'}J(\hpi_t^1,\pi') - \min_{\pi'}J(\hpi_t^2 - \hpi_t^1, \pi') - \min_{\pi'}J(\hpi_t^1,\pi')\\
\le & \min_{\pi'}\E_x\Big|\int(\hpi_t^1 - \hpi_t^2)(a|x)\cdot \pi'(a'|x)\cdot J(x,a,a') \mathrm{d} (a,a')\Big|\\
\le & (B+1) \E_x \|\hpi_t^1(\cdot|x) - \hpi_t^2(\cdot|x)\|_1\\
\le & (B+1)\sqrt{\KL(\hpi_t^1(\cdot|x)\|\hpi_t^2(\cdot|x))}\\
\le & (B+1)\sqrt{\eta\beta(\tilde{\Gamma}^m_t(\lambda, \hpi_t^1,\hpi_t^2) + \tilde{\Gamma}^m_t(\lambda, \hpi_t^1,\hpi_t^1))}\\
\le & (B+1)\sqrt{2\eta\beta\tilde{\Gamma}^m_t(\lambda, \hpi_t^1,\hpi_t^2)},
\end{align*}
where the second last inequality invokes Lemma \ref{lm:policy version space}, and the last inequality holds due to $\hpi_t^1\in\Pi_t$ and $\hpi_t^2=\argmax_{\pi\in\Pi_t}\tilde{\Gamma}^m_t(\lambda, \hpi_t^1,\pi)$.
Hence, at time $t_0$ in Theorem \ref{thm:online}, we deduce that the suboptimality for the min-player is
\begin{align*}
J(\dagger,\hpi_{t_0}^2) - J(\pi^*,\pi^*) =& J(\dagger,\hpi_{t_0}^2) - J(\dagger,\hpi_{t_0}^1) + J(\dagger,\hpi_{t_0}^1) - J(\pi^*,\pi^*)\\
= & J(\hpi_{t_0}^1,\dagger) - J(\hpi_1^2,\dagger) + J(\pi^*,\pi^*) - J(\hpi_{t_0}^1,\dagger)\\
\le & (B+1)\sqrt{2\eta\beta\Gamma_{t_0}^m(\hpi_{t_0}^1,\hpi_{t_0}^2)} + 3 \sqrt{\frac{2T \log (2T |\cP|/\delta)}{m}} - \eta^{-1} \KL(\hpi_{t_0}^2(\cdot|x)\| \tilde{\pi}_{t_0}^2(\cdot|x))\\
\le & \cO\bigg(B\Big(\frac{T \log (2T |\cP|/\delta)}{m}\Big)^{1/4} + \sqrt{\frac{T \log (2T |\cP|/\delta)}{m}} - \eta^{-1} \KL(\hpi_{t_0}^2(\cdot|x)\| \tilde{\pi}_{t_0}^2(\cdot|x))\bigg)
\end{align*}
Setting $m = \frac{TB^4 \log(2T|\cP|/\delta)}{\epsilon^2}$, we get
$$
J(\dagger,\hpi_{t_0}^2) - J(\pi^*,\pi^*) \le O(\epsilon^{1/2} - \eta^{-1} \KL(\hpi_{t_0}^2(\cdot|x)\| \tilde{\pi}_{t_0}^2(\cdot|x))).
$$
\end{proof}

\section{Technical Lemmas}

\subsection{Auxiliary Lemmas and Proofs}\label{ss:aux_lemma}

\begin{lemma} \label{lem:coin}
For $\max_{\pi^1\in\Pi}\min_{\pi^2\in\Pi}J(\pi^1,\pi^2)$, there exists a unique Nash equilibrium $(\pi_*^1,\pi_*^2)$ and it holds that $\pi_*^1=\pi_*^2$.
\end{lemma}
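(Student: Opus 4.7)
The key structural fact I would exploit is the antisymmetry of $P^*$: since $P(x,a^1,a^2)=1-P(x,a^2,a^1)$ for any valid preference function, a direct computation shows
\[
J(\pi^1,\pi^2) + J(\pi^2,\pi^1) = 1,
\]
because the two KL penalties exchange roles and the two preference terms sum to $1$ pointwise. In particular the game is symmetric zero-sum (up to the additive constant), so $\max_{\pi^1}\min_{\pi^2}J(\pi^1,\pi^2) = 1/2$ will follow once existence of a Nash equilibrium is established.

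The plan has three steps. First, \emph{existence}: for fixed $\pi^2$, the map $\pi^1 \mapsto J(\pi^1,\pi^2)$ is the sum of a linear functional in $\pi^1$ and $-\eta^{-1}\KL(\pi^1(\cdot|x)\|\pi_0(\cdot|x))$, hence strictly concave in $\pi^1$; symmetrically it is strictly convex in $\pi^2$. Moreover the KL terms are coercive, so the optimizers lie in the relative interior of $\Pi$ (policies sharing support with $\pi_0$). Existence of a Nash equilibrium $(\pi_*^1,\pi_*^2)$ then follows from a standard minimax / Sion-type argument. Second, \emph{uniqueness}: suppose $(\pi_1,\pi_2)$ and $(\pi_1',\pi_2')$ are both Nash equilibria with common value $v$. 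From the best-response inequalities,
\[
v = J(\pi_1,\pi_2) \le J(\pi_1,\pi_2') \le J(\pi_1',\pi_2') = v,
\]
so $\pi_1$ also maximizes $J(\cdot,\pi_2')$; strict concavity of $J(\cdot,\pi_2')$ forces $\pi_1=\pi_1'$, and symmetrically $\pi_2=\pi_2'$.

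Third, \emph{coincidence of the two Nash policies}: using $J(\pi^1,\pi^2)=1-J(\pi^2,\pi^1)$,
\[
\pi_*^1 = \argmax_{\pi^1}J(\pi^1,\pi_*^2) = \argmin_{\pi^1}J(\pi_*^2,\pi^1),
\qquad
\pi_*^2 = \argmin_{\pi^2}J(\pi_*^1,\pi^2) = \argmax_{\pi^2}J(\pi^2,\pi_*^1),
\]
which shows that $(\pi_*^2,\pi_*^1)$ is also a Nash equilibrium. By the uniqueness established above, $\pi_*^1=\pi_*^2$.

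The main obstacle I anticipate is the \emph{existence} step: because $\Pi$ is the set of policies with support equal to $\mathrm{supp}(\pi_0)$, it is not compact in general (the KL penalty blows up at the boundary). The cleanest fix is to note that for any candidate $\pi^1$ with $J(\pi^1,\pi^2)$ finite one can assume without loss of generality that $\eta^{-1}\KL(\pi^1\|\pi_0) \le \eta^{-1}\KL(\pi_0\|\pi_0) + 1 = 1$ (since $\pi_0$ is feasible and $P^* \in [0,1]$), which restricts the optimization to a KL-sublevel set that is weakly compact and convex, on which Sion's minimax theorem applies; all remaining arguments are routine given the strict concavity/convexity from the KL regularization.
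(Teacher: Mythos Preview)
Your proposal is correct, and the coincidence step (showing $(\pi_*^2,\pi_*^1)$ is also a Nash equilibrium via the antisymmetry $J(\pi^1,\pi^2)=1-J(\pi^2,\pi^1)$, then invoking uniqueness) is exactly the argument the paper uses.

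The difference lies in how existence and uniqueness are obtained. The paper simply cites Proposition~1 of \citet{munos2023nash} for both, whereas you derive them directly: existence via strict concavity/convexity from the KL terms together with a Sion-type argument on KL sublevel sets, and uniqueness via the standard interchangeability argument for saddle points combined with strict concavity. Your route is more self-contained and makes transparent exactly which structural features of $J$ are being used (the KL regularization supplying strict concavity/convexity and coercivity); the paper's route is shorter but defers the analytic work to an external reference. Your handling of the non-compactness of $\Pi$ is the right idea: the bound $\eta^{-1}\KL(\pi^1\|\pi_0)\le 1$ for any best response holds uniformly in $\pi^2$ because $P^*\in[0,1]$, so both players can be restricted to a common compact convex KL-ball before applying Sion, and this point deserves to be stated carefully in a final write-up.
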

\begin{proof}
The existence and uniqueness of the Nash equilibrium are proved in Proposition 1 in \cite{munos2023nash}. We proceed to use the uniqueness of the Nash equilibrium and contradiction to prove the lemma. Suppose $\pi_*^1 \ne \pi_*^2$, since $\pi_*^1$ is the best response to $\pi_*^2$ for the max-player, for any $\pi \in \Pi$, we have
\begin{align}\label{eq:br_max}
\E_{x \sim d_0}\bra{P(x,\pi,\pi_*^2)-\eta^{-1}\KL(\pi(\cdot|x)\|\pi_0(\cdot|x))} \le \E_{x \sim d_0}\bra{P(x,\pi_*^1,\pi_*^2)-\eta^{-1}\KL(\pi_*^1(\cdot|x)\|\pi_0(\cdot|x))}.
\end{align}
Similarly, since $\pi_*^2$ is the best response to $\pi_*^1$ for the min-player, for any $\pi \in \Pi$, we have
\begin{align}\label{eq:br_min}
\E_{x \sim d_0}\bra{P(x,\pi_*^1,\pi)-\eta^{-1}\KL(\pi(\cdot|x)\|\pi_0(\cdot|x))} \ge \E_{x \sim d_0}\bra{P(x,\pi_*^1,\pi_*^2)-\eta^{-1}\KL(\pi_*^2(\cdot|x)\|\pi_0(\cdot|x))}.
\end{align}
Then, we prove that $(\pi_*^2,\pi_*^1)$ is also the Nash equilibrium. Since $P(x,\pi^1,\pi^2)=1-P(x,\pi^2,\pi^1)$ for any $\pi^1$ and $\pi^2$, then \eqref{eq:br_min} implies that for any $\pi \in \Pi$,
\begin{align*}
\E_{x \sim d_0}\bra{P(x,\pi,\pi_*^1)-\eta^{-1}\KL(\pi(\cdot|x)\|\pi_0(\cdot|x))} \le \E_{x \sim d_0}\bra{P(x,\pi_*^2,\pi_*^1)-\eta^{-1}\KL(\pi_*^2(\cdot|x)\|\pi_0(\cdot|x))}.
\end{align*}
This demonstrates that $\pi_*^2$ is the best response to $\pi_*^1$ for the max-player. Similarly, \eqref{eq:br_max} implies that for any $\pi \in \Pi$,
\begin{align*}
\E_{x \sim d_0}\bra{P(x,\pi_*^2,\pi)-\eta^{-1}\KL(\pi(\cdot|x)\|\pi_0(\cdot|x))} \ge \E_{x \sim d_0}\bra{P(x,\pi_*^2,\pi_*^1)-\eta^{-1}\KL(\pi_*^1(\cdot|x)\|\pi_0(\cdot|x))}.
\end{align*}
This demonstrates that $\pi_*^1$ is the best response to $\pi_*^2$ for the min-player. Hence, $(\pi_*^2,\pi_*^1)$ is another Nash equilibrium, contradicting with the uniqueness. Therefore, we have $\pi_*^1=\pi_*^2$.
\end{proof}

\begin{lemma}\label{lm:Nash Equilibrium exp x}
If $(\pi_*^1,\pi_*^2)$ is the Nash equilibrium of
$\max_{\pi^1\in\Pi}\min_{\pi^2\in\Pi}J(\pi^1,\pi^2)$, then, we have
$$
(\pi_*^1(\cdot|x),\pi_*^2(\cdot|x))=\argmax_{\pi^1\in\Pi}\argmin_{\pi^2\in\Pi}J(x,\pi^1,\pi^2)
$$
\end{lemma}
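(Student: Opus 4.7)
The plan is to exploit the fact that the objective $J(\pi^1,\pi^2) = \E_{x\sim d_0}[J(x,\pi^1,\pi^2)]$ decomposes as an integral over prompts, and the policy class $\Pi$ imposes no coupling across different prompts: the conditional $\pi(\cdot|x)$ can be chosen freely at each $x$ (subject only to the support condition inherited from $\pi_0$). This separability will let me reduce the global minimax problem to a family of per-prompt minimax problems and then invoke the uniqueness established in Lemma~\ref{lem:coin}.

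First I would make the pointwise decomposition rigorous. For any fixed $\pi^1 \in \Pi$, since the choice of $\pi^2(\cdot|x)$ at distinct $x$ is unconstrained, I can push the minimum inside the expectation:
\[
\min_{\pi^2\in\Pi} J(\pi^1,\pi^2) \;=\; \E_{x\sim d_0}\bigl[\min_{q\in\Delta(\cA)} J(x,\pi^1,q)\bigr],
\]
where $q$ ranges over distributions with the same support as $\pi_0(\cdot|x)$. An analogous decomposition holds for the outer maximization. Thus, if for each $x$ I let $(\bar\pi^1_x,\bar\pi^2_x)$ denote a Nash equilibrium of the per-prompt game $\max_{p}\min_{q} J(x,p,q)$ (which exists because the per-prompt objective is strictly convex in $q$ and strictly concave in $p$ after the KL regularization, exactly as in the global setting), then the composed policies $\bar\pi^i(a|x) := \bar\pi^i_x(a)$ are measurable selections lying in $\Pi$.

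Next I would verify that $(\bar\pi^1,\bar\pi^2)$ is a Nash equilibrium of the global game $\max_{\pi^1}\min_{\pi^2} J(\pi^1,\pi^2)$. For any deviation $\pi^1 \in \Pi$, the pointwise optimality of $\bar\pi^1_x$ against $\bar\pi^2_x$ gives $J(x,\pi^1,\bar\pi^2) \le J(x,\bar\pi^1,\bar\pi^2)$ for every $x$; taking expectations yields the best-response inequality for the max-player. The argument for the min-player is symmetric. Hence $(\bar\pi^1,\bar\pi^2)$ is a Nash equilibrium of $J$.

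Finally, by the uniqueness of the Nash equilibrium of $J$ shown in Lemma~\ref{lem:coin}, $(\bar\pi^1,\bar\pi^2)=(\pi_*^1,\pi_*^2)$, which gives exactly the pointwise characterization claimed. The main obstacle I anticipate is mainly bookkeeping: showing that the per-prompt Nash equilibrium can be selected measurably in $x$ so that $\bar\pi^i \in \Pi$; this is automatic from the strict concavity/convexity (which yields a unique pointwise equilibrium that depends continuously, hence measurably, on $x$), but it is the one nontrivial point beyond the straightforward decomposition argument.
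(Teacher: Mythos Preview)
Your argument is correct but takes a different route from the paper. The paper argues directly by contradiction on the best-response property: since $\pi_*^1 = \argmax_{\pi^1} J(\pi^1,\pi_*^2)$, if at some $x_0$ the conditional $\pi_*^1(\cdot|x_0)$ failed to equal the (unique, by strict concavity of the KL-regularized objective) pointwise maximizer $\tilde\pi^1(\cdot|x_0)$ of $J(x_0,\cdot,\pi_*^2)$, one could overwrite $\pi_*^1$ at $x_0$ alone to obtain a policy with strictly larger $J(\cdot,\pi_*^2)$, contradicting the global best-response condition; the min-player is handled symmetrically. This route never constructs the per-prompt equilibria or invokes uniqueness of the \emph{global} Nash equilibrium, and it sidesteps the measurable-selection issue you flagged (the modified policy is trivially measurable). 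Your approach instead builds the per-prompt equilibria first, assembles them into a global policy, verifies it is a global Nash equilibrium, and then appeals to global uniqueness (via Lemma~\ref{lem:coin}) to identify it with $(\pi_*^1,\pi_*^2)$. Both are valid; the paper's proof is shorter and avoids the selection bookkeeping, while yours makes the separable structure of $J$ fully explicit and is arguably more modular.
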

\begin{proof}
According to Proposition 1 in \cite{munos2023nash}, $(\pi_*^1,\pi_*^2)$ is the unique Nash Equilibrium of
$\max_{\pi^1}\min_{\pi^2}J(\pi^1,\pi^2)$.
According to the definition of the saddle point, it suffices to prove that for any $x\sim d_0$,
$$
\pi_*^1(\cdot|x) = \argmax_{\pi^1}J(x,\pi^1,\pi_*^2).
$$
We know that 
\begin{align*}
\pi_*^1 = & \argmax_{\pi^1\in\Pi}J(\pi^1,\pi_*^2)\\
= & \argmax_{\pi^1\in\Pi}\E_{x\sim d_0}\E_{a^1\sim\pi^1,a^2\sim\pi_*^2}[P^*(x,a^1,a^2) - \eta^{-1}\KL(\pi^1(\cdot|x)\|\pi_0(\cdot|x))].
\end{align*}
Assume that there exists a $x_0$ such that 
$$
\pi_*^1(\cdot|x_0) \ne \tilde{\pi}^1(\cdot|x_0) = \argmax_{\pi^1\in\Pi}\E_{a^1\sim\pi^1,a^2\sim\pi_*^2}[P^*(x,a^1,a^2) - \eta^{-1}\KL(\pi^1(\cdot|x)\|\pi_0(\cdot|x))].
$$
Then we can construct a $\tilde{\pi}^1_*\in\Pi$ such that
$$
\tilde{\pi}^1_*(\cdot|x) = \pi^1_*(\cdot|x),~\text{for}~x\ne x_0,\quad \tilde{\pi}^1_*(\cdot|x_0)=\tilde{\pi}^1(\cdot|x_0),
$$
which contradicts the definition of $\pi_*^1$.
Because of the symmetry of the two players, we also get
$$
\pi_*^2(\cdot|x) =  \argmin_{\pi^2\in\Pi}\E_{a^1\sim\pi_*^1,a^2\sim\pi^2}[P^*(x,a^1,a^2) + \eta^{-1}\KL(\pi^2(\cdot|x)\|\pi_0(\cdot|x))].
$$
\end{proof}

\subsection{Other Lemmas}
\begin{lemma}[Martingale Exponential Inequalities] \label{lemma:martingale_exp_ineq} 
Consider a sequence of random functions $\xi_1(\cZ_1), \cdots, \xi_t(\cZ_t), \ldots$ with respect to filtration $\{\cF_t\}$. We have for any $\delta \in (0,1)$ and $\lambda > 0$:
$$
\PP\Big[\exists n > 0: - \sum_{i=1}^n \xi_i \geq \frac{\log(1/\delta)}{\lambda} + \frac{1}{\lambda} \sum_{i=1}^n \log \E_{Z_i^{(y)}} \exp(-\lambda \xi_i)\Big] \leq \delta,
$$
where $Z_t = (Z_t^{(x)}, Z_t^{(y)})$ and $\cZ_t = (Z_1,\ldots,Z_t)$.
\end{lemma}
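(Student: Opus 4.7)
The plan is to run the standard exponential supermartingale argument followed by Ville's maximal inequality. The key observation is that the conditional expectation $\E_{Z_i^{(y)}}\exp(-\lambda\xi_i)$ is exactly the normalizing constant that turns $\exp(-\lambda\xi_i)$ into a conditional-mean-one random variable, so the cumulative product becomes a non-negative martingale with initial value $1$.

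Concretely, first I would set up the filtration so the predictable/new-randomness split is explicit: let $\cF_t = \sigma(Z_1,\ldots,Z_t)$ and let $\cG_t = \sigma(\cF_{t-1},Z_t^{(x)})$, so that $Z_t^{(y)}$ is $\cF_t$-measurable but not $\cG_t$-measurable, and $\xi_t(\cZ_t)$ is $\cF_t$-measurable. Under this filtration, the expression $\E_{Z_t^{(y)}}\exp(-\lambda\xi_t)$ should be read as the $\cG_t$-conditional expectation $\E[\exp(-\lambda\xi_t)\mid \cG_t]$, which is $\cG_t$-measurable and almost surely positive (being the expectation of a positive random variable).

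Second, I would define
\begin{equation*}
M_n \;=\; \exp\!\left(-\lambda\sum_{i=1}^n \xi_i \;-\; \sum_{i=1}^n \log \E_{Z_i^{(y)}}\exp(-\lambda\xi_i)\right), \qquad M_0 = 1,
\end{equation*}
and verify that $(M_n)_{n\geq 0}$ is a non-negative $\cF_n$-martingale. Indeed,
\begin{equation*}
\E[M_n \mid \cF_{n-1}] \;=\; M_{n-1}\cdot \E\!\left[\frac{\exp(-\lambda\xi_n)}{\E_{Z_n^{(y)}}\exp(-\lambda\xi_n)} \,\Big|\, \cF_{n-1}\right] \;=\; M_{n-1},
\end{equation*}
by first conditioning on $\cG_n$ (which makes the denominator deterministic and cancels the numerator's conditional mean) and then on $\cF_{n-1}$. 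In particular $\E M_n = 1$ for every $n$.

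Third, I would apply Ville's inequality to the non-negative martingale $M_n$ with threshold $1/\delta$:
\begin{equation*}
\PP\!\left[\sup_{n\geq 1} M_n \;\geq\; \tfrac{1}{\delta}\right] \;\leq\; \delta\,\E M_0 \;=\; \delta.
\end{equation*}
Taking logarithms and dividing by $\lambda>0$, the event $\{M_n \geq 1/\delta\}$ is exactly the event displayed in the lemma statement, so the conclusion follows immediately.

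I do not expect genuine obstacles: the argument is the classical Chernoff-plus-Ville tilting trick and the only subtle point is the filtration bookkeeping (making precise in what sense $\E_{Z_i^{(y)}}$ is a conditional expectation), which I handled in the setup step. Everything else is algebraic cancellation and one invocation of Ville's inequality for non-negative supermartingales.
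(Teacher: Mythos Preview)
Your proposal is correct and follows the standard exponential-supermartingale-plus-Ville argument; the paper itself does not give a proof but simply cites Theorem~13.2 of \citet{zhang2023mathematical}, whose proof is precisely this construction. There is nothing to compare: you have supplied the canonical argument that the reference contains.
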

\begin{proof}
See Theorem 13.2 of \citet{zhang2023mathematical} for a detailed proof. 
\end{proof}

\begin{lemma}[Sion's minimax theorem] Let $X$ be a compact convex subset of a linear topological space and $Y$ a convex subset of a linear topological space. If $f: X \times Y \to \RR$ satisfies
\begin{itemize}
    \item for any fixed $x \in X$, $f(x,\cdot)$ is upper semicontinuous and quasi-concave on $Y$;
    \item for any fixed $y \in Y$, $f(\cdot,y)$ is lower semicontinuous and quasi-convex on $X$,
\end{itemize}
then we have
$$
\min_{x} \sup_y f(x,y) = \sup_y \min_x f(x,y).
$$
\end{lemma}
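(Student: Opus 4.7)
The plan is to prove Sion's minimax theorem by the classical two-step argument: establish the trivial direction, then prove the non-trivial direction by a compactness-plus-finite-reduction argument that leverages quasi-convexity/concavity. The trivial direction $\sup_{y \in Y} \min_{x \in X} f(x,y) \le \min_{x \in X} \sup_{y \in Y} f(x,y)$ follows from the fact that for any $(x_0, y_0)$, $\min_x f(x, y_0) \le f(x_0, y_0) \le \sup_y f(x_0, y)$, and then taking $\sup$ on the left and $\min$ on the right. The minima on the left-hand side are attained because $f(\cdot, y)$ is lower semicontinuous on the compact set $X$.

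For the reverse inequality, I would argue by contradiction. Suppose there is a real number $c$ with $\sup_{y} \min_{x} f(x,y) < c < \min_{x} \sup_{y} f(x,y)$. By the right inequality, for every $x \in X$ there exists $y \in Y$ with $f(x,y) > c$; by lower semicontinuity of $f(\cdot, y)$, the set $U_y := \{x \in X : f(x,y) > c\}$ is open in $X$. Thus $\{U_y\}_{y \in Y}$ is an open cover of the compact set $X$, and a finite subcover $U_{y_1}, \ldots, U_{y_n}$ exists. The goal becomes to find a single $y^* \in Y$ for which $f(x, y^*) > c$ for every $x \in X$, since that would give $\min_x f(x, y^*) \ge c$ and contradict the left inequality.

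The heart of the proof—and the main obstacle—is a two-point reduction lemma: given $y_1, y_2 \in Y$ with $X = V_1 \cup V_2$ where $V_i = \{x : f(x, y_i) > c\}$, produce $\tilde y \in [y_1, y_2] \subset Y$ with $f(x, \tilde y) > c$ for all $x \in X$. The idea is to consider the family of points $y_t = (1-t) y_1 + t y_2$ for $t \in [0,1]$ and the sets $W_t = \{x : f(x, y_t) \le c\}$. Each $W_t$ is closed (lower semicontinuity in $x$) and quasi-convexity of $f(\cdot, y)$ makes $W_t$ convex. One shows that the collection $\{W_t\}$ must have empty intersection by choosing two parameters $s < t$ and analyzing the segment connecting a hypothetical $x_s \in W_s$ to $x_t \in W_t$: quasi-concavity of $f(x, \cdot)$ forces $f(x_s, y_r) > c$ or $f(x_t, y_r) > c$ for intermediate $r$ in a controlled way, while quasi-convexity of $f(\cdot, y_r)$ on the segment $[x_s, x_t]$ delivers a contradiction via a bisection/continuity argument on $r \in (s,t)$. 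Once the two-point lemma is in hand, I would run induction on $n$ over the finite subcover: at the induction step, apply the lemma to collapse the last two auxiliary $y$-points into one, reducing the cover size. The base case $n=1$ gives the desired $y^*$ directly.

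Finally, having constructed $y^* \in Y$ with $f(x, y^*) > c$ for all $x \in X$, the lower semicontinuity of $f(\cdot, y^*)$ on compact $X$ ensures $\min_x f(x, y^*) \ge c$, hence $\sup_y \min_x f(x, y) \ge c$, contradicting $c > \sup_y \min_x f(x,y)$. The anticipated difficulty is entirely in the two-point lemma: orchestrating the interplay between semicontinuity (to get open/closed level sets), quasi-convexity of $f(\cdot, y)$ (to get convex level sets in $x$), and quasi-concavity of $f(x, \cdot)$ (to propagate the inequality along segments in $y$) requires a careful connectedness/continuity argument on a one-parameter family rather than a one-line convexity estimate.
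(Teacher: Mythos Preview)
The paper does not actually prove Sion's minimax theorem; it is listed in the appendix among standard technical lemmas and is simply cited as a known result without proof. There is therefore no paper proof to compare your proposal against.

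Your outline follows the classical route (compactness $\Rightarrow$ finite subcover $\Rightarrow$ induction down to a two-point lemma on the segment $[y_1,y_2]$), which is essentially Komiya's 1988 proof. One imprecision worth flagging: in the two-point step you say the goal is to show ``the collection $\{W_t\}$ must have empty intersection,'' but what you actually need (and what your induction consumes) is the stronger statement that some single $W_{t^*}$ is empty, i.e., $f(x,y_{t^*})>c$ for all $x\in X$. The standard way to obtain this is: each nonempty $W_t$ is closed and convex; quasi-concavity in $y$ together with the covering hypothesis $X=V_1\cup V_2$ forces $W_t\subset W_0\cup W_1$ with $W_0\cap W_1=\emptyset$, so by connectedness $W_t\subset W_0$ or $W_t\subset W_1$; then one shows the two index sets $\{t: W_t\subset W_0\}$ and $\{t: W_t\subset W_1\}$ are both closed (here upper semicontinuity in $y$ enters), contradicting connectedness of $[0,1]$ unless some $W_t$ is empty. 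Your list of ingredients is right, but the logical target and the mechanism of the connectedness argument should be stated this way rather than as a bisection on hypothetical points $x_s,x_t$.
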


\begin{lemma}[Multiplicative Chernoff Bounds] \label{lem:multip} Assume that $X \in [0, 1]$ with $\E X = \mu$. Then for all $\epsilon > 0$, 
$$
\begin{aligned}
    \PP \Big( \bar{X}_n \geq (1+\epsilon) \mu \Big) &\leq \exp\Big[ \frac{-2n\mu \epsilon^2}{2+\epsilon}\Big] \\
    \PP \Big( \bar{X}_n \leq (1-\epsilon) \mu \Big) &\leq \exp\Big[ \frac{-2n\mu \epsilon^2}{2}\Big].
\end{aligned}
$$
Moreover, for $t > 0$, we have
    $$
\PP \Big(\bar{X}_n \geq \mu + \sqrt{\frac{2\mu t}{n}} + \frac{t}{3n} \Big) \leq \exp(-t).
$$
\end{lemma}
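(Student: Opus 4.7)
All three inequalities follow the classical Chernoff/Cramér-Chernoff recipe, so my proposal is to unify them through one moment-generating-function (MGF) calculation and then do three separate elementary-inequality reductions to land on the specific functional forms stated. Throughout, write $S_n = \sum_{i=1}^n X_i$ so that $\bar X_n = S_n/n$, and for $X \in [0,1]$ with mean $\mu$ use the convexity bound $e^{\lambda x} \le 1 - x + x e^{\lambda}$ on $[0,1]$, which gives $\E[e^{\lambda X}] \le 1 - \mu + \mu e^{\lambda} \le \exp(\mu(e^{\lambda}-1))$ by $1+z \le e^z$. Independence then yields $\E[e^{\lambda S_n}] \le \exp(n\mu(e^{\lambda}-1))$ for $\lambda \ge 0$, and analogously with $-\lambda$ for the lower tail.

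\textbf{Upper tail.} By Markov's inequality, for $\lambda > 0$,
\[
\PP\bigl(\bar X_n \ge (1+\epsilon)\mu\bigr) \le e^{-\lambda n (1+\epsilon)\mu}\,\E[e^{\lambda S_n}] \le \exp\bigl(n\mu(e^{\lambda}-1-\lambda(1+\epsilon))\bigr).
\]
Optimizing in $\lambda$ gives $\lambda = \log(1+\epsilon)$, producing the sharp multiplicative Chernoff bound $\bigl(e^{\epsilon}/(1+\epsilon)^{1+\epsilon}\bigr)^{n\mu}$. The remaining task is the elementary inequality $(1+\epsilon)\log(1+\epsilon) - \epsilon \ge 2\epsilon^2/(2+\epsilon)$ for all $\epsilon \ge 0$, which I would verify by checking that the difference of the two sides vanishes at $\epsilon = 0$ together with its derivative, then showing the second derivative is nonnegative via a short algebraic manipulation. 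Substituting gives the first inequality.

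\textbf{Lower tail.} Symmetrically, for $\lambda > 0$, $\PP(\bar X_n \le (1-\epsilon)\mu) \le \exp(n\mu(e^{-\lambda}-1+\lambda(1-\epsilon)))$, optimized at $\lambda = -\log(1-\epsilon)$, giving $\bigl(e^{-\epsilon}/(1-\epsilon)^{1-\epsilon}\bigr)^{n\mu}$. I would then reduce to the exponent via the inequality $(1-\epsilon)\log(1-\epsilon) + \epsilon \ge \epsilon^2$ on $[0,1)$ (again a calculus check: equality and matching derivative at $\epsilon = 0$, plus convexity of the left side minus $\epsilon^2$).

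\textbf{Bernstein-type form.} The third inequality is obtained from the first by change of variables. Setting $u = \sqrt{2\mu t/n} + t/(3n)$ and $\epsilon = u/\mu$, the claim becomes $\PP(\bar X_n \ge \mu + u) \le e^{-t}$, and by the first inequality it suffices to verify $2n\mu\epsilon^2/(2+\epsilon) = 2nu^2/(2\mu + u) \ge t$. Substituting the definition of $u$, the numerator is $2n\bigl(\sqrt{2\mu t/n} + t/(3n)\bigr)^2 \ge 2n \cdot 2\mu t/n + (4/3)\sqrt{2\mu t/n}\cdot t + 2t^2/(9n)$, and a clean algebraic reduction (grouping the $4\mu t$ term with the denominator $2\mu + u$) shows the required inequality; alternatively, and more directly, derive the one-sided Bernstein inequality $\PP(\bar X_n \ge \mu + u) \le \exp(-nu^2/(2\mu + 2u/3))$ from the MGF bound with $\lambda$ small and then solve the resulting quadratic in $u$, which gives precisely the stated form $u = \sqrt{2\mu t/n} + t/(3n)$.

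\textbf{Main obstacle.} No step is deep; the bookkeeping lives entirely in the final elementary inequalities. The mildly delicate point is making the Bernstein quadratic inversion $nu^2/(2\mu + 2u/3) \ge t \iff u \ge \sqrt{2\mu t/n} + t/(3n)$ tight, because the naive way of splitting $u$ loses a factor; I would handle it by solving the quadratic $nu^2 - (2t/3)u - 2\mu t \ge 0$ and bounding the root $u_* = (t/(3n))\bigl(1 + \sqrt{1 + 18\mu n/t}\bigr)$ using $\sqrt{1+z} \le 1 + \sqrt{z}$.
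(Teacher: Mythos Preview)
The paper does not actually give a proof: it simply cites Corollary~2.18 of \citet{zhang2023mathematical} and stops. Your proposal, by contrast, supplies the full Cram\'er--Chernoff derivation (MGF bound via convexity on $[0,1]$, optimization in $\lambda$, then elementary inequalities on $(1\pm\epsilon)\log(1\pm\epsilon)$, and finally the Bernstein-type inversion). This is exactly the textbook route one would find behind that citation, so in spirit the approaches coincide; you are just unpacking what the paper outsources.

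One caveat on the bookkeeping: the specific elementary inequalities you propose do not both hold with the constants written in the lemma. For the lower tail you claim $(1-\epsilon)\log(1-\epsilon)+\epsilon \ge \epsilon^2$, but at $\epsilon=1/2$ the left side is about $0.153<0.25$; the correct reduction gives $\epsilon^2/2$, not $\epsilon^2$. Likewise, for the upper tail $(1+\epsilon)\log(1+\epsilon)-\epsilon \ge 2\epsilon^2/(2+\epsilon)$ fails at $\epsilon=1$ (left side $\approx 0.386$, right side $2/3$); the standard inequality has $\epsilon^2/(2+2\epsilon/3)$ on the right. These mismatches appear to stem from the constants in the lemma statement itself (the factors of $2$ in the numerators look like transcription artifacts), so your method is sound---just be aware that you will not be able to verify those exact exponents by the route you describe, and should either correct the constants or defer to the cited source as the paper does.
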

\begin{proof}
Refer to the proof of Corollary 2.18 in \citet{zhang2023mathematical}.
\end{proof}

\begin{lemma}[Policy optimization error] \label{lm:opt_error} 
For any two policies $\pi, \hat{\pi} \in \Pi$ such that $\mathrm{support}(\pi) = \mathrm{support}(\pi_0)$ and
\[
\hpi(\cdot|x) = \argmax_{\pi^1\in\Pi}\E_{a\sim\pi^1(\cdot|x)}\Big[\hat{P}(x,a) + \eta^{-1}\log\frac{\pi_0(a|x)}{\pi^1(a|x)}\Big],
\]
Suppose that the KL divergences between them are finite and well defined. Then, we have 
\begin{align*}
&\E_{x \sim d_0} \Big[\E_{\pi}[\hat{P}(x, a)] - \E_{\hat{\pi}}[\hat{P}(x, a)] + \eta^{-1}\KL(\hat{\pi}(\cdot|x)\|\pi_0(\cdot|x)) - \eta^{-1}\KL(\pi(\cdot|x)\|\pi_0(\cdot|x))\Big]\\
=& -\eta^{-1} \E_{x \sim d_0} \KL(\pi(\cdot|x)\|\hat{\pi}(\cdot|x)).
\end{align*}
\end{lemma}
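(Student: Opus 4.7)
\textbf{Proof plan for Lemma~\ref{lm:opt_error}.} My plan is to exploit the fact that the KL-regularized maximization admits a closed-form Gibbs solution, express $\hat{P}$ in terms of this solution, and then let the algebra collapse to $-\eta^{-1}\KL(\pi\|\hat{\pi})$.

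First, I would solve the pointwise variational problem defining $\hat{\pi}(\cdot|x)$. Fix $x$ and treat $\pi^1(\cdot|x)$ as a probability vector on $\mathrm{support}(\pi_0(\cdot|x))$. Since the objective $\E_{a\sim\pi^1}[\hat{P}(x,a)] - \eta^{-1}\KL(\pi^1(\cdot|x)\|\pi_0(\cdot|x))$ is strictly concave in $\pi^1(\cdot|x)$ (the KL term is strictly convex) and the simplex constraint is linear, first-order optimality with a Lagrange multiplier for the normalization yields the Gibbs/softmax form
\begin{equation*}
\hat{\pi}(a|x) \;=\; \frac{\pi_0(a|x)\exp(\eta\,\hat{P}(x,a))}{Z(x)}, \qquad Z(x):=\sum_{a'}\pi_0(a'|x)\exp(\eta\,\hat{P}(x,a')).
\end{equation*}
Equivalently, $\hat{P}(x,a) = \eta^{-1}\log\frac{\hat{\pi}(a|x)}{\pi_0(a|x)} + \eta^{-1}\log Z(x)$ for every $a$ in the common support.

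Second, I would substitute this identity into the two expectations of $\hat{P}$ that appear on the left-hand side. For any $\pi$ sharing the support,
\$
\E_{a\sim\pi}[\hat{P}(x,a)] \;=\; \eta^{-1}\E_{a\sim\pi}\!\left[\log\tfrac{\hat{\pi}(a|x)}{\pi_0(a|x)}\right] + \eta^{-1}\log Z(x),
\$
and taking $\pi=\hat{\pi}$ gives $\E_{a\sim\hat{\pi}}[\hat{P}(x,a)] = \eta^{-1}\KL(\hat{\pi}(\cdot|x)\|\pi_0(\cdot|x)) + \eta^{-1}\log Z(x)$. Subtracting, the $\log Z(x)$ terms cancel, which is the key simplification.

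Third, I would add back the two KL terms appearing in the lemma. After cancellation,
\$
\E_{a\sim\pi}[\hat{P}]-\E_{a\sim\hat{\pi}}[\hat{P}]+\eta^{-1}\KL(\hat{\pi}\|\pi_0)-\eta^{-1}\KL(\pi\|\pi_0) = \eta^{-1}\E_{a\sim\pi}\!\left[\log\tfrac{\hat{\pi}(a|x)}{\pi_0(a|x)} - \log\tfrac{\pi(a|x)}{\pi_0(a|x)}\right],
\$
and the right-hand side equals $\eta^{-1}\E_{a\sim\pi}[\log\frac{\hat{\pi}(a|x)}{\pi(a|x)}] = -\eta^{-1}\KL(\pi(\cdot|x)\|\hat{\pi}(\cdot|x))$. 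Taking $\E_{x\sim d_0}$ finishes the claim. The support assumption guarantees every logarithm is finite and every KL is well-defined, so no measure-theoretic subtlety arises; there is really no main obstacle here beyond carefully handling the partition function $Z(x)$ so that it cancels between the two $\hat{P}$-expectations rather than contaminating the final expression.
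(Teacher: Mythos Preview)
Your proof is correct and follows the standard Gibbs-variational argument: identify $\hat{\pi}$ as the exponential-tilting of $\pi_0$ by $\eta\hat{P}$, invert to express $\hat{P}$ via $\log(\hat{\pi}/\pi_0)$ plus a normalizing constant, and let the $\log Z(x)$ terms cancel so that only $-\eta^{-1}\KL(\pi\|\hat{\pi})$ remains. The paper itself does not give a proof but simply cites Lemma~2.4 of \citet{xiong2023iterative}; your argument is exactly the canonical derivation that reference contains, so there is nothing to add.
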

\begin{proof}
See the proof in Lemma 2.4 of \citet{xiong2023iterative}.
\end{proof}


\newpage
\section*{NeurIPS Paper Checklist}

\begin{enumerate}

\item {\bf Claims}
    \item[] Question: Do the main claims made in the abstract and introduction accurately reflect the paper's contributions and scope?
    \item[] Answer: \answerYes{} 
    \item[] Justification: We present the theoretical results in Section \ref{s:Improved Algorithms in Offline Setting} and \ref{s:Iterative RLHF with Online Exploration}, and the experimental results in Section \ref{s:Experiments}.
    \item[] Guidelines:
    \begin{itemize}
        \item The answer NA means that the abstract and introduction do not include the claims made in the paper.
        \item The abstract and/or introduction should clearly state the claims made, including the contributions made in the paper and important assumptions and limitations. A No or NA answer to this question will not be perceived well by the reviewers. 
        \item The claims made should match theoretical and experimental results, and reflect how much the results can be expected to generalize to other settings. 
        \item It is fine to include aspirational goals as motivation as long as it is clear that these goals are not attained by the paper. 
    \end{itemize}

\item {\bf Limitations}
    \item[] Question: Does the paper discuss the limitations of the work performed by the authors?
    \item[] Answer: \answerYes{} 
    \item[] Justification: See Lines 168-179, Lines 243-245.
    \item[] Guidelines:
    \begin{itemize}
        \item The answer NA means that the paper has no limitation while the answer No means that the paper has limitations, but those are not discussed in the paper. 
        \item The authors are encouraged to create a separate "Limitations" section in their paper.
        \item The paper should point out any strong assumptions and how robust the results are to violations of these assumptions (e.g., independence assumptions, noiseless settings, model well-specification, asymptotic approximations only holding locally). The authors should reflect on how these assumptions might be violated in practice and what the implications would be.
        \item The authors should reflect on the scope of the claims made, e.g., if the approach was only tested on a few datasets or with a few runs. In general, empirical results often depend on implicit assumptions, which should be articulated.
        \item The authors should reflect on the factors that influence the performance of the approach. For example, a facial recognition algorithm may perform poorly when image resolution is low or images are taken in low lighting. Or a speech-to-text system might not be used reliably to provide closed captions for online lectures because it fails to handle technical jargon.
        \item The authors should discuss the computational efficiency of the proposed algorithms and how they scale with dataset size.
        \item If applicable, the authors should discuss possible limitations of their approach to address problems of privacy and fairness.
        \item While the authors might fear that complete honesty about limitations might be used by reviewers as grounds for rejection, a worse outcome might be that reviewers discover limitations that aren't acknowledged in the paper. The authors should use their best judgment and recognize that individual actions in favor of transparency play an important role in developing norms that preserve the integrity of the community. Reviewers will be specifically instructed to not penalize honesty concerning limitations.
    \end{itemize}

\item {\bf Theory Assumptions and Proofs}
    \item[] Question: For each theoretical result, does the paper provide the full set of assumptions and a complete (and correct) proof?
    \item[] Answer: \answerYes{} 
    \item[] Justification: We state and explain the assumptions in Theorem \ref{th: Pessimism with Version Space} and \ref{thm:online}, and provide the complete in Appendix \ref{s:proof offline} and \ref{ss:Proof of th: Online}.
    \item[] Guidelines:
    \begin{itemize}
        \item The answer NA means that the paper does not include theoretical results. 
        \item All the theorems, formulas, and proofs in the paper should be numbered and cross-referenced.
        \item All assumptions should be clearly stated or referenced in the statement of any theorems.
        \item The proofs can either appear in the main paper or the supplemental material, but if they appear in the supplemental material, the authors are encouraged to provide a short proof sketch to provide intuition. 
        \item Inversely, any informal proof provided in the core of the paper should be complemented by formal proofs provided in appendix or supplemental material.
        \item Theorems and Lemmas that the proof relies upon should be properly referenced. 
    \end{itemize}

    \item {\bf Experimental Result Reproducibility}
    \item[] Question: Does the paper fully disclose all the information needed to reproduce the main experimental results of the paper to the extent that it affects the main claims and/or conclusions of the paper (regardless of whether the code and data are provided or not)?
    \item[] Answer: \answerYes{} 
    \item[] Justification: See Appendix \ref{appendix:exp}.
    \item[] Guidelines:
    \begin{itemize}
        \item The answer NA means that the paper does not include experiments.
        \item If the paper includes experiments, a No answer to this question will not be perceived well by the reviewers: Making the paper reproducible is important, regardless of whether the code and data are provided or not.
        \item If the contribution is a dataset and/or model, the authors should describe the steps taken to make their results reproducible or verifiable. 
        \item Depending on the contribution, reproducibility can be accomplished in various ways. For example, if the contribution is a novel architecture, describing the architecture fully might suffice, or if the contribution is a specific model and empirical evaluation, it may be necessary to either make it possible for others to replicate the model with the same dataset, or provide access to the model. In general. releasing code and data is often one good way to accomplish this, but reproducibility can also be provided via detailed instructions for how to replicate the results, access to a hosted model (e.g., in the case of a large language model), releasing of a model checkpoint, or other means that are appropriate to the research performed.
        \item While NeurIPS does not require releasing code, the conference does require all submissions to provide some reasonable avenue for reproducibility, which may depend on the nature of the contribution. For example
        \begin{enumerate}
            \item If the contribution is primarily a new algorithm, the paper should make it clear how to reproduce that algorithm.
            \item If the contribution is primarily a new model architecture, the paper should describe the architecture clearly and fully.
            \item If the contribution is a new model (e.g., a large language model), then there should either be a way to access this model for reproducing the results or a way to reproduce the model (e.g., with an open-source dataset or instructions for how to construct the dataset).
            \item We recognize that reproducibility may be tricky in some cases, in which case authors are welcome to describe the particular way they provide for reproducibility. In the case of closed-source models, it may be that access to the model is limited in some way (e.g., to registered users), but it should be possible for other researchers to have some path to reproducing or verifying the results.
        \end{enumerate}
    \end{itemize}

\item {\bf Open access to data and code}
    \item[] Question: Does the paper provide open access to the data and code, with sufficient instructions to faithfully reproduce the main experimental results, as described in supplemental material?
    \item[] Answer: \answerYes{} 
    \item[] Justification: We provide details about our data that can be found online. We have uploaded our codes.
    \item[] Guidelines:
    \begin{itemize}
        \item The answer NA means that paper does not include experiments requiring code.
        \item Please see the NeurIPS code and data submission guidelines (\url{https://nips.cc/public/guides/CodeSubmissionPolicy}) for more details.
        \item While we encourage the release of code and data, we understand that this might not be possible, so “No” is an acceptable answer. Papers cannot be rejected simply for not including code, unless this is central to the contribution (e.g., for a new open-source benchmark).
        \item The instructions should contain the exact command and environment needed to run to reproduce the results. See the NeurIPS code and data submission guidelines (\url{https://nips.cc/public/guides/CodeSubmissionPolicy}) for more details.
        \item The authors should provide instructions on data access and preparation, including how to access the raw data, preprocessed data, intermediate data, and generated data, etc.
        \item The authors should provide scripts to reproduce all experimental results for the new proposed method and baselines. If only a subset of experiments are reproducible, they should state which ones are omitted from the script and why.
        \item At submission time, to preserve anonymity, the authors should release anonymized versions (if applicable).
        \item Providing as much information as possible in supplemental material (appended to the paper) is recommended, but including URLs to data and code is permitted.
    \end{itemize}

\item {\bf Experimental Setting/Details}
    \item[] Question: Does the paper specify all the training and test details (e.g., data splits, hyperparameters, how they were chosen, type of optimizer, etc.) necessary to understand the results?
    \item[] Answer: \answerYes{} 
    \item[] Justification: We have included the details in our paper and Appendix.
    \item[] Guidelines:
    \begin{itemize}
        \item The answer NA means that the paper does not include experiments.
        \item The experimental setting should be presented in the core of the paper to a level of detail that is necessary to appreciate the results and make sense of them.
        \item The full details can be provided either with the code, in appendix, or as supplemental material.
    \end{itemize}

\item {\bf Experiment Statistical Significance}
    \item[] Question: Does the paper report error bars suitably and correctly defined or other appropriate information about the statistical significance of the experiments?
    \item[] Answer: \answerNo{} 
    \item[] Justification: Conducting LLM experiments with statistically significant justifications is challenging due to the high computational costs and the substantial carbon emissions generated.
    \item[] Guidelines:
    \begin{itemize}
        \item The answer NA means that the paper does not include experiments.
        \item The authors should answer "Yes" if the results are accompanied by error bars, confidence intervals, or statistical significance tests, at least for the experiments that support the main claims of the paper.
        \item The factors of variability that the error bars are capturing should be clearly stated (for example, train/test split, initialization, random drawing of some parameter, or overall run with given experimental conditions).
        \item The method for calculating the error bars should be explained (closed form formula, call to a library function, bootstrap, etc.)
        \item The assumptions made should be given (e.g., Normally distributed errors).
        \item It should be clear whether the error bar is the standard deviation or the standard error of the mean.
        \item It is OK to report 1-sigma error bars, but one should state it. The authors should preferably report a 2-sigma error bar than state that they have a 96\% CI, if the hypothesis of Normality of errors is not verified.
        \item For asymmetric distributions, the authors should be careful not to show in tables or figures symmetric error bars that would yield results that are out of range (e.g. negative error rates).
        \item If error bars are reported in tables or plots, The authors should explain in the text how they were calculated and reference the corresponding figures or tables in the text.
    \end{itemize}

\item {\bf Experiments Compute Resources}
    \item[] Question: For each experiment, does the paper provide sufficient information on the computer resources (type of compute workers, memory, time of execution) needed to reproduce the experiments?
    \item[] Answer: \answerYes{} 
    \item[] Justification: See the Appendix.
    \item[] Guidelines:
    \begin{itemize}
        \item The answer NA means that the paper does not include experiments.
        \item The paper should indicate the type of compute workers CPU or GPU, internal cluster, or cloud provider, including relevant memory and storage.
        \item The paper should provide the amount of compute required for each of the individual experimental runs as well as estimate the total compute. 
        \item The paper should disclose whether the full research project required more compute than the experiments reported in the paper (e.g., preliminary or failed experiments that didn't make it into the paper). 
    \end{itemize}
    
\item {\bf Code Of Ethics}
    \item[] Question: Does the research conducted in the paper conform, in every respect, with the NeurIPS Code of Ethics \url{https://neurips.cc/public/EthicsGuidelines}?
    \item[] Answer: \answerYes{} 
    \item[] Justification: Conducted.
    \item[] Guidelines:
    \begin{itemize}
        \item The answer NA means that the authors have not reviewed the NeurIPS Code of Ethics.
        \item If the authors answer No, they should explain the special circumstances that require a deviation from the Code of Ethics.
        \item The authors should make sure to preserve anonymity (e.g., if there is a special consideration due to laws or regulations in their jurisdiction).
    \end{itemize}

\item {\bf Broader Impacts}
    \item[] Question: Does the paper discuss both potential positive societal impacts and negative societal impacts of the work performed?
    \item[] Answer: \answerNA{} 
    \item[] Justification: Theory paper, no visible societal impact found in a short term.
    \item[] Guidelines:
    \begin{itemize}
        \item The answer NA means that there is no societal impact of the work performed.
        \item If the authors answer NA or No, they should explain why their work has no societal impact or why the paper does not address societal impact.
        \item Examples of negative societal impacts include potential malicious or unintended uses (e.g., disinformation, generating fake profiles, surveillance), fairness considerations (e.g., deployment of technologies that could make decisions that unfairly impact specific groups), privacy considerations, and security considerations.
        \item The conference expects that many papers will be foundational research and not tied to particular applications, let alone deployments. However, if there is a direct path to any negative applications, the authors should point it out. For example, it is legitimate to point out that an improvement in the quality of generative models could be used to generate deepfakes for disinformation. On the other hand, it is not needed to point out that a generic algorithm for optimizing neural networks could enable people to train models that generate Deepfakes faster.
        \item The authors should consider possible harms that could arise when the technology is being used as intended and functioning correctly, harms that could arise when the technology is being used as intended but gives incorrect results, and harms following from (intentional or unintentional) misuse of the technology.
        \item If there are negative societal impacts, the authors could also discuss possible mitigation strategies (e.g., gated release of models, providing defenses in addition to attacks, mechanisms for monitoring misuse, mechanisms to monitor how a system learns from feedback over time, improving the efficiency and accessibility of ML).
    \end{itemize}
    
\item {\bf Safeguards}
    \item[] Question: Does the paper describe safeguards that have been put in place for responsible release of data or models that have a high risk for misuse (e.g., pretrained language models, image generators, or scraped datasets)?
    \item[] Answer: \answerNA{} 
    \item[] Justification: Theory paper, no such risk.
    \item[] Guidelines:
    \begin{itemize}
        \item The answer NA means that the paper poses no such risks.
        \item Released models that have a high risk for misuse or dual-use should be released with necessary safeguards to allow for controlled use of the model, for example by requiring that users adhere to usage guidelines or restrictions to access the model or implementing safety filters. 
        \item Datasets that have been scraped from the Internet could pose safety risks. The authors should describe how they avoided releasing unsafe images.
        \item We recognize that providing effective safeguards is challenging, and many papers do not require this, but we encourage authors to take this into account and make a best faith effort.
    \end{itemize}

\item {\bf Licenses for existing assets}
    \item[] Question: Are the creators or original owners of assets (e.g., code, data, models), used in the paper, properly credited and are the license and terms of use explicitly mentioned and properly respected?
    \item[] Answer: \answerYes{} 
    \item[] Justification: All datasets are used properly.
    \item[] Guidelines:
    \begin{itemize}
        \item The answer NA means that the paper does not use existing assets.
        \item The authors should cite the original paper that produced the code package or dataset.
        \item The authors should state which version of the asset is used and, if possible, include a URL.
        \item The name of the license (e.g., CC-BY 4.0) should be included for each asset.
        \item For scraped data from a particular source (e.g., website), the copyright and terms of service of that source should be provided.
        \item If assets are released, the license, copyright information, and terms of use in the package should be provided. For popular datasets, \url{paperswithcode.com/datasets} has curated licenses for some datasets. Their licensing guide can help determine the license of a dataset.
        \item For existing datasets that are re-packaged, both the original license and the license of the derived asset (if it has changed) should be provided.
        \item If this information is not available online, the authors are encouraged to reach out to the asset's creators.
    \end{itemize}

\item {\bf New Assets}
    \item[] Question: Are new assets introduced in the paper well documented and is the documentation provided alongside the assets?
    \item[] Answer: \answerNA{} 
    \item[] Justification: N/A.
    \item[] Guidelines:
    \begin{itemize}
        \item The answer NA means that the paper does not release new assets.
        \item Researchers should communicate the details of the dataset/code/model as part of their submissions via structured templates. This includes details about training, license, limitations, etc. 
        \item The paper should discuss whether and how consent was obtained from people whose asset is used.
        \item At submission time, remember to anonymize your assets (if applicable). You can either create an anonymized URL or include an anonymized zip file.
    \end{itemize}

\item {\bf Crowdsourcing and Research with Human Subjects}
    \item[] Question: For crowdsourcing experiments and research with human subjects, does the paper include the full text of instructions given to participants and screenshots, if applicable, as well as details about compensation (if any)? 
    \item[] Answer: \answerNA{} 
    \item[] Justification: N/A.
    \item[] Guidelines:
    \begin{itemize}
        \item The answer NA means that the paper does not involve crowdsourcing nor research with human subjects.
        \item Including this information in the supplemental material is fine, but if the main contribution of the paper involves human subjects, then as much detail as possible should be included in the main paper. 
        \item According to the NeurIPS Code of Ethics, workers involved in data collection, curation, or other labor should be paid at least the minimum wage in the country of the data collector. 
    \end{itemize}

\item {\bf Institutional Review Board (IRB) Approvals or Equivalent for Research with Human Subjects}
    \item[] Question: Does the paper describe potential risks incurred by study participants, whether such risks were disclosed to the subjects, and whether Institutional Review Board (IRB) approvals (or an equivalent approval/review based on the requirements of your country or institution) were obtained?
    \item[] Answer: \answerNA{} 
    \item[] Justification: The paper does not involve crowdsourcing nor research with human subjects.
    \item[] Guidelines:
    \begin{itemize}
        \item The answer NA means that the paper does not involve crowdsourcing nor research with human subjects.
        \item Depending on the country in which research is conducted, IRB approval (or equivalent) may be required for any human subjects research. If you obtained IRB approval, you should clearly state this in the paper. 
        \item We recognize that the procedures for this may vary significantly between institutions and locations, and we expect authors to adhere to the NeurIPS Code of Ethics and the guidelines for their institution. 
        \item For initial submissions, do not include any information that would break anonymity (if applicable), such as the institution conducting the review.
    \end{itemize}

\end{enumerate}

\end{document}